\newtheorem{theorem}{Theorem}
\newtheorem{corollary}[theorem]{Corollary}
\newtheorem{definition}[theorem]{Definition}
\newtheorem{lemma}[theorem]{Lemma}
\newtheorem{proposition}[theorem]{Proposition}
\newtheorem{remark}[theorem]{Remark}
\newenvironment{proof}[1][Proof]{\noindent\textbf{#1.} }{\ \rule{0.5em}{0.5em}}
\begin{document}

\title{Hierarchical Neural Networks, $p$-Adic PDEs, and Applications to Image Processing}
\author{W. A. Z\'{u}\~{n}iga-Galindo\thanks{The first and third authors were partially
supported by the Lokenath Debnath Endowed Professorship.}\\University of Texas Rio Grande Valley\\School of Mathematical and Statistical Sciences\\One West University Blvd.,\\Brownsville, TX 78520, United States.\\wilson.zunigagalindo@utrgv.edu
\and B. A. Zambrano-Luna\\University of Alberta\\Department of Mathematical and Statistical Sciences\\CAB 632\\Edmonton, Alberta, Canada T6G 2G1\\bzambran@ualberta.ca
\and Baboucarr Dibba\\University of Texas Rio Grande Valley\\School of Mathematical and Statistical Sciences\\One West University Blvd.,\\Brownsville, TX 78520, United States.\\baboucarr.dibba01@utrgv.edu }
\maketitle

\begin{abstract}
The first goal of this article is to introduce a new type of p-adic
reaction-diffusion cellular neural network with delay. We study the stability
of these networks and provide numerical simulations of their responses. The
second goal is to provide a quick review of the state of the art of $p$-adic
cellular neural networks and their applications to image processing.

\end{abstract}

\section{Introduction}

This article has twofold purposes. First, it continues the investigation of
the first two authors on $p$-adic cellular neural networks (CNNs) and their
application to image processing; see \cite{Zambrano-Zuniga-1}%
-\cite{Zambrano-Zuniga-2}. The $p$-adic CNNs are hierarchical generalizations
of the classical CNNs introduced by Chuan and Yang in the 1980s; see
\cite{Chua-Yang}-\cite{Slavova}, and the references therein. The classical
CNNs are given by a set of integrodifferential equations controlling the
states of the neurons forming the network. Typically, the neurons are
organized in a lattice contained in $\mathbb{R}^{2}$. The $p$-adic integers,
$\mathbb{Z}_{p}$, forms an infinite rooted tree. The neurons are naturally
organized in layers (levels), and by cutting the tree $\mathbb{Z}_{p}$ at
level $l\geq1$, one obtains a finite rooted tree $G_{l}$ with $l$ levels and
$p^{l}$ vertices at the top level. In the $p$-adic CNNs the neurons are
organized in a finite rooted tree, $G_{l}$. In the limit when the number of
neurons tends to infinity, the $p$-adic CNNs admit limits that are abstract
evolution integrodifferential equations on $\mathbb{Z}_{p}$. This type of
equation can be studied using standard techniques for PDEs. In this article,
we studied $p$-adic reaction-diffusion CNNs with delay, see Section
\ref{Section_CNNS with delay}. We study the Cauchy problem and the stability
of the new networks. We also provide numerical simulations for the solutions
of these networks. Our numerical results show a chaotic behavior in the
responses of the $p$-adic CNNs with delay.

The second goal is to quickly review the state of the art in $p$-adic CNNs and
their applications to image processing. The $p$-adic CNNs are bioinspired by
the Wilson-Cowan models for the macroscopic activity patterns in the cortex of
mammals and invertebrates; see, e.g., \cite{Neural-Fields}, and the references
therein. The formulation of the discrete Wilson-Cowan models does not require
a particular topology for the neurons. For instance, these models are valid
for neural networks (NNs) with hierarchical topologies. However, when taking
the limit when the numbers of neurons tend to infinity, in most of the
available literature, it is assumed that neurons are organized in lattice
contained in $\mathbb{R}^{n}$; this is done to approximate certain sums by
Riemann--Stieltjes integrals.

Based on experimental data, it has been hypothesized that cortical neural
networks are arranged in fractal or self-similar patterns and have the
small-world property, see, e.g., \cite{Sporns et al 1}-\cite{Skoch et al}, and
the references therein. These properties are incompatible with the standard
Wilson-Cowan models formulated on $(\mathbb{R}^{n},+)$. To overcome this
problem, it is necessary to assume that the neurons are organized
hierarchically, which requires to change the additive group $(\mathbb{R}%
^{n},+)$ for an hierarchical additive group. In \cite{Zuniga-Entropy}, the
first two authors formulated a $p$-adic version of the Wilson-Cowan models on
the group $\left(  \mathbb{Z}_{p},+\right)  $, and showed that this model is a
good substitute of the standard one, where the cortical neural networks are
arranged in self-similar patterns and have the small-world property. The
$p$-adic CNNs are simplified versions of the $p$-adic Wilson-Cowan models that
have showed a great potential to perform several tasks in image processing.
Finally, we present \ some relevant research problems connecting $p$-adic NNs
with image processing and neuronal dynamics.

\section{Motivations and Preliminaries}

In this section, we discuss the motivations behind the study of hierarchical
biological and artificial NNs using methods of non-Archimedean analysis.

\subsection{PDE based models for neural fields}

The spatiotemporal continuum models for the dynamics of macroscopic activity
patterns in the cortex were introduced in the 1970s following the seminal work
by Wilson and Cowan, Amari, among others, see, e.g., \cite{Wilson-Cowan-1}%
-\cite{Amari}, see also \cite{Neural-Fields} and the references therein. Such
models take the form of integrodifferential evolution equations in which the
integration kernels represent the strength of the connection between neurons,
or more generally, the spatial distribution of synaptic connections between
different neural populations, and the state macroscopic variables represent
some average of neural activity. These PDEs based models have been used to
model phenomena such as short-term memory \cite{Laing}, the head direction
system \cite{Zhang}, visual hallucinations \cite{Ermentrout}-\cite{Ermentrout
et al}, and EEG rhythms \cite{Bojak}.

The simplest continuous model in one spatial dimension is%
\begin{equation}
\frac{\partial u\left(  x,t\right)  }{\partial t}=-u\left(  x,t\right)  +%
{\displaystyle\int\nolimits_{-\infty}^{\infty}}
w\left(  x-y\right)  f\left(  u\left(  y,t\right)  \right)  dy,
\label{Model_0}%
\end{equation}
where at the position $x\in\mathbb{R}$ there is a group of neurons, and
$u\left(  x,t\right)  $ is some average network activity at the time
$t\in\mathbb{R}_{+}:=\left\{  s\in\mathbb{R};s\geq0\right\}  $. The kernel $w$
gives the strength of connection between the neurons located at the position
$x$ and $y$. In the simplest model\ $w$ is symmetric, i.e. $w(x)=w(-x)$,
$\lim_{x\rightarrow\infty}w(x)=0$, $\int_{-\infty}^{\infty}w\left(  x\right)
dx<\infty$, and continuous. The non-linear function $f$ is the activation (or
firing) function. For instance, a non-decreasing function satisfying
$\lim_{s\rightarrow-\infty}f(s)=0$, $\lim_{s\rightarrow\infty}f(s)=1$. The
amount $f\left(  u\left(  x,t\right)  \right)  $ is the firing rate normalized
to one. The integral of $w\left(  x-y\right)  f\left(  u\left(  y,t\right)
\right)  $ over $y$ represents the influence of all neurons at position $y$ on
the neurons at position $x$.

There are two basic discrete models. The first one is the voltage-based rate
model:%
\begin{equation}
\tau_{m}\frac{\partial a_{i}\left(  t\right)  }{\partial t}=-a_{i}\left(
t\right)  +%
{\displaystyle\sum\limits_{j}}
w_{ij}f\left(  a_{j}\left(  t\right)  \right)  , \label{Model_1}%
\end{equation}
where $\tau_{m}$ is a time constant and $w_{ij}$ is the strenght between
neurons $i$ and $j$. The second one is called the activity-based model, it has
the form%
\begin{equation}
\tau_{s}\frac{\partial v_{i}\left(  t\right)  }{\partial t}=-v_{i}\left(
t\right)  +f\left(
{\displaystyle\sum\limits_{j}}
w_{ij}v_{j}\left(  t\right)  \right)  , \label{Model_2}%
\end{equation}
where $\tau_{s}$ is a time constant. The reader may consult \cite{Bressloff}
for an in-depth discussion about these two models. \textit{A fundamental
observation is that in these models, the topology of the neurons (the
architecture of the network) is arbitrary}. For instance, the neurons may be
organized in a hierarchical tree-like structure. The continuous versions of
the models (\ref{Model_1})-(\ref{Model_2}) are obtained by taking the limit
when the number of neurons tends to infinity. In the limit, the discrete
variable $i$ becomes a continuous variable $x$. Almost all the published
literature assumes that the neurons are organized in a lattice contained in
$\mathbb{R}^{n}$. Using this hypothesis, the sums in (\ref{Model_1}%
)-(\ref{Model_2}) become Riemann--Stieltjes integrals\ in the limit when the
number of neurons tends to infinity:%
\[
\tau_{m}\frac{\partial a\left(  x,t\right)  }{\partial t}=-a\left(
x,t\right)  +%
{\displaystyle\int\nolimits_{-\infty}^{\infty}}
w(x,y)f\left(  a\left(  y,t\right)  \right)  dy,
\]%
\[
\tau_{s}\frac{\partial v\left(  x,t\right)  }{\partial t}=-v\left(
x,t\right)  +f\left(
{\displaystyle\int\nolimits_{-\infty}^{\infty}}
w(x,y)v\left(  y,t\right)  dy\right)  ,
\]
where $x\in\mathbb{R}$, $t\in\mathbb{R}_{+}$. In paricular, this approach
discard any hierarchical organization of the neurons.

The Wilson-Cowan model describes the evolution of excitatory and inhibitory
activity in a synaptically coupled neuronal network. The model is given by the
following system of non-linear integro-differential evolution equations:%

\begin{equation}
\left\{
\begin{array}
[c]{l}%
\tau\frac{\partial E(x,t)}{\partial t}=-E(x,t)+\\
\\
\left(  1-r_{E}E(x,t)\right)  f_{E}\left(  w_{EE}\left(  x\right)  \ast
E(x,t)-w_{EI}\left(  x\right)  \ast I(x,t)+h_{E}\left(  x,t\right)  \right) \\
\\
\tau\frac{\partial I(x,t)}{\partial t}=-I(x,t)+\\
\\
\left(  1-r_{I}I(x,t)\right)  f_{I}\left(  w_{IE}\left(  x\right)  \ast
E(x,t)-w_{II}(x)\ast I(x,t)+h_{I}\left(  x,t\right)  \right)  ,
\end{array}
\right.  \label{Model_3}%
\end{equation}
where $E(x,t)$ is a temporal coarse-grained variable describing the proportion
of excitatory neuron firing per unit of time at position $x\in\mathbb{R}$ at
the \ instant $t\in\mathbb{R}_{+}$. Similarly the variable $I(x,t)$ represents
the activity of the inhibitory population of neurons. The `$\ast$' denotes the
spatial convolution. The main parameters of the model are the strength of the
connections between each subtype of population ($w_{EE}$, $w_{IE}$, $w_{EI}$,
$w_{II}$) and the strength of input to each subpopulation ($h_{E}\left(
x,t\right)  $, $h_{I}\left(  x,t\right)  $). This model generates a diversity
of dynamical behaviors that are representative of observed activity in the
brain, like multistability, oscillations, traveling waves, and spatial
patterns, see, e.g., \cite{Neural-Fields} and the references therein.

\subsection{\label{Section_Neuwwons_geometry}Neurons geometry and small-world
networks}

Nowadays, there are extensive databases of neuronal wiring diagrams
(connection matrices) of the invertebrates' and mammals' cerebral cortexes.
The connection matrices are adjacency matrices of weighted directed graphs,
where the vertices represent neuron populations or regions in a cortex. These
matrices correspond to the kernels $w_{AB}$, $A$, $B\in\left\{  E,I\right\}
$, in model (\ref{Model_3}), or to the kernels $w_{ij}$ in models
(\ref{Model_1})-(\ref{Model_2}). Based on these data, several researchers
hypothesized that cortical neural networks are arranged in fractal or
self-similar patterns and have the small-world property, see, e.g.,
\cite{Sporns et al 1}-\cite{Skoch et al}, and the references therein. It is
widely accepted that the brain is a small-world network, see, e.g.,
\cite{Sporns}-\cite{Muldoon et al}, and the references therein. The
small-worldness is believed to be a crucial aspect of efficient brain
organization that confers significant advantages in signal processing,
furthermore, the small-world organization is deemed essential for healthy
brain function, see, e.g., \cite{Hilgetag et al},\ and the references
therein.\ A small-world network has a topology that produces short paths
across the whole network, i.e., given two nodes, there is a short path between
them (the six degrees of separation phenomenon). In turn, this implies the
existence of long-range interactions in the network.

The compatibility of the Wilson-Cowan\ type models with the small-world
network property requires a non-negligible interaction between any two groups
of neurons, i.e., $w_{AB}\left(  x\right)  >\varepsilon>0$, for any $x$, and
for$\ A$, $B\in\left\{  E,I\right\}  $, where the constant $\varepsilon>0$ is
independent of $x$. Since the kernels $w_{AB}$ should be integrable, such as
we pointed out in the simplest model (\ref{Model_2}), then, it is necessary to
replace $\left(  \mathbb{R},+\right)  $ by a compact subgroup, however,
$\left(  \mathbb{R},+\right)  $ does not have non-trivial subgroups, and
consequently, the Wilson-Cowan models on $\left(  \mathbb{R},+\right)  $ are
not compatible with small-world network property.

\subsection{\label{Section_CNN}Cellular neural Networks and complex
multi-level systems}

In the late 80s Chua and Yang introduced a new natural computing paradigm
called the cellular neural networks CNN which includes the cellular automata
as a particular case. This paradigm has been extremely successful in various
applications in vision, robotics and remote sensing, etc. See \cite{Chua-Yang}%
-\cite{Slavova} and the references therein.

Let $(\mathcal{M},\left\vert \cdot\right\vert )$ be a normed $\mathbb{R}%
$-vector space, where $\mathcal{M}$ is a finite set. An element $i$ of
$\mathcal{M}$ is called a cell. A discrete CNN is a dynamical system
CNN$(\mathbb{A},\mathbb{B},U,Z)$ on $\mathcal{M}$. The state $X(i,t)\in
\mathbb{R}$ of cell $i$ is described by the following system of differential equations:

(i) state equation:
\[
\frac{dX(i,t)}{dt}=-X(i,t)+\sum_{j\in\mathcal{M}}\mathbb{A}(i,j)Y(j,t)+\sum
_{j\in\mathcal{M}}\mathbb{B}(i,j)U(j)+Z(i)\text{, }i\in\mathcal{M}\text{,\ }%
\]

(ii) output equation:
\[
Y(i,t)=f(X(i,t))\text{, }i\in\mathcal{M}\text{,}%
\]
where $Y(i,t)\in\mathbb{R}$ is the output of cell $i$ at the time $t$,
$f:\mathbb{R}\rightarrow\mathbb{R}$ is a bounded Lipschitz function satisfying
$f(0)=0$. The function $U(i)\in\mathbb{R}$ is the input of the cell $i$,
$Z(i)\in\mathbb{R}$ is the threshold of cell $i$, and $\mathbb{A}%
,\mathbb{B}:\mathcal{M}\times\mathcal{M}\rightarrow\mathbb{R}$ are the
feedback operator and feedforward operator, respectively.

Not all the cells of $\mathcal{M}$ are active. A cell $i$ is connected with
cell $j$ if $\mathbb{A}(i,j)\neq0$\ or $\mathbb{B}(i,j)\neq0$ for some
$j\in\mathcal{M}$. Then, a $p$-adic discrete CNN is a dynamical system on
\[
C_{\mathcal{M}}:=\left\{  i\in\mathcal{M};\mathbb{A}(i,j)\neq0\ \text{or
}\mathbb{B}(i,j)\neq0\text{ for some }j\in\mathcal{M}\right\}  .
\]
The topology of a\ $p$-adic discrete $\text{CNN depends on the func\-tions
}\mathbb{A}$, $\mathbb{B}:$ $\mathcal{M}\times\mathcal{M}\rightarrow
\mathbb{R}$. The discrete CNNs\ satisfying%
\begin{equation}
\mathbb{A}(i,j)=\mathbb{A}(\left\vert i-j\right\vert )\text{, }\mathbb{B}%
(i,j)=\mathbb{B}(\left\vert i-j\right\vert ), \label{Hyp_1}%
\end{equation}
which are discrete CNNs having the space-invariant property.

The CNNs are bioinspired on the Wilson-Cowan models; see (\ref{Model_1}). Most
of the literature about CNNs is dedicated to the study and applications of NNs
whose topology comes from a finite lattice contained in some $\mathbb{R}^{n}$.
We already pointed out that it is widely accepted that hierarchical
organization plays a central role in biological NNs. Since most artificial NNs
are bioinspired, it is natural to conclude that hierarchical artificial NNs
are relevant computational paradigms. We propose to study hierarchical NNs as
complex multi-level systems. These systems are made up of several subsystems
and are characterized by emergent behavior resulting from nonlinear
interactions between subsystems for multiple levels of organization; see,
e.g., \cite{Iordache}-\cite{KKZuniga}, and the references therein.

An\ non-Archimedean vector space $\left(  M,\left\Vert \cdot\right\Vert
\right)  $ is a normed vector space whose norm satisfies
\[
\left\Vert x+y\right\Vert \leq\max\left\{  \left\Vert x\right\Vert ,\left\Vert
y\right\Vert \right\}  ,
\]
for any two vectors $x$, $y$ in $M$.\ In such a space, the balls are organized
in a hierarchical form. This type of space plays a central role in formulating
models of complex multi-level systems. The field of $p$-adic numbers
$\mathbb{Q}_{p}$ and the field of formal Laurent series $\mathbb{F}_{p}\left(
\left(  T\right)  \right)  $ are paramount examples of non-Archimedean vector
spaces. Methods of non-Archimedean analysis have been successfully used to
construct models for hierarchical NNs; see, e.g., \cite{Khrenikov2}%
-\cite{Zuniga-Nonlinearity}, and the references therein.

\subsection{$p$-Adic numbers and tree-like structures}

This section reviews some basic results on $p$-adic analysis required in this
article. For a detailed exposition on $p$-adic analysis, the reader may
consult \cite{Alberio et al}-\cite{Taibleson}.

\subsubsection{$p$-Adic integers}

From now on, $p$ denotes a fixed prime number. The ring of $p-$adic integers
$\mathbb{Z}_{p}$ is defined as the completion of the ring of integers
$\mathbb{Z}$ with respect to the $p-$adic norm $|\cdot|_{p}$, which is defined
as
\begin{equation}
|x|_{p}=%
\begin{cases}
0 & \text{if }x=0\\
p^{-\gamma} & \text{if }x=p^{\gamma}a\in\mathbb{Z},
\end{cases}
\label{p-norm}%
\end{equation}
where $a$ is an integers coprime with $p$. The integer $\gamma=ord_{p}%
(x):=ord(x)$, with $ord(0):=+\infty$, is called the\textit{\ }$p-$\textit{adic
order of} $x$.

Any non-zero $p-$adic integer $x$ has a unique expansion of the form%
\[
x=x_{k}p^{k}+x_{k+1}p^{k+1}+\ldots,\text{ }%
\]
with $x_{k}\neq0$, where $k$ is a non-negative integer, and the $x_{j}$s \ are
numbers from the set $\left\{  0,1,\ldots,p-1\right\}  $. There are natural
field operations, sum, and multiplication, on $p$-adic integers. The norm
$|\cdot|_{p}$, see (\ref{p-norm}), extends to $\mathbb{Z}_{p}$ as $\left\vert
x\right\vert _{p}=p^{-k}$ for a nonzero $p$-adic integer $x$.

The metric space $\left(  \mathbb{Z}_{p},\left\vert \cdot\right\vert
_{p}\right)  $ is a complete ultrametric space. Ultrametric means that
$\left\vert x+y\right\vert _{p}\leq\max\left\{  \left\vert x\right\vert
_{p},\left\vert y\right\vert _{p}\right\}  $. As a topological space
$\mathbb{Z}_{p}$\ is homeomorphic to a Cantor-like subset of the real line,
see Figure \ref{Figure 1}, and the references \cite{Alberio et al}%
-\cite{V-V-Z}, \cite{Chistyakov}.

For $r\in\mathbb{N}$, denote by $B_{-r}(a)=\{x\in\mathbb{Z}_{p};\left\vert
x-a\right\vert _{p}\leq p^{-r}\}$ \textit{the ball of radius }$p^{-r}$
\textit{with center at} $a\in\mathbb{Z}_{p}$, and take $B_{-r}(0):=B_{-r}$.
{The ball $B_{0}$ equals \textit{the ring of }$p-$\textit{adic integers
}$\mathbb{Z}_{p}$ (the unit ball).} We use $\Omega\left(  p^{r}\left\vert
x-a\right\vert _{p}\right)  $ to denote the characteristic function of the
ball $B_{-r}(a)$. Two balls in $\mathbb{Z}_{p}$ are either disjoint or one is
contained in the other. The balls are compact subsets, thus $\left(
\mathbb{Z}_{p},\left\vert \cdot\right\vert _{p}\right)  $ is a compact
topological space.%

\begin{figure}
[h]
\begin{center}
\includegraphics[width=0.75\textwidth]
{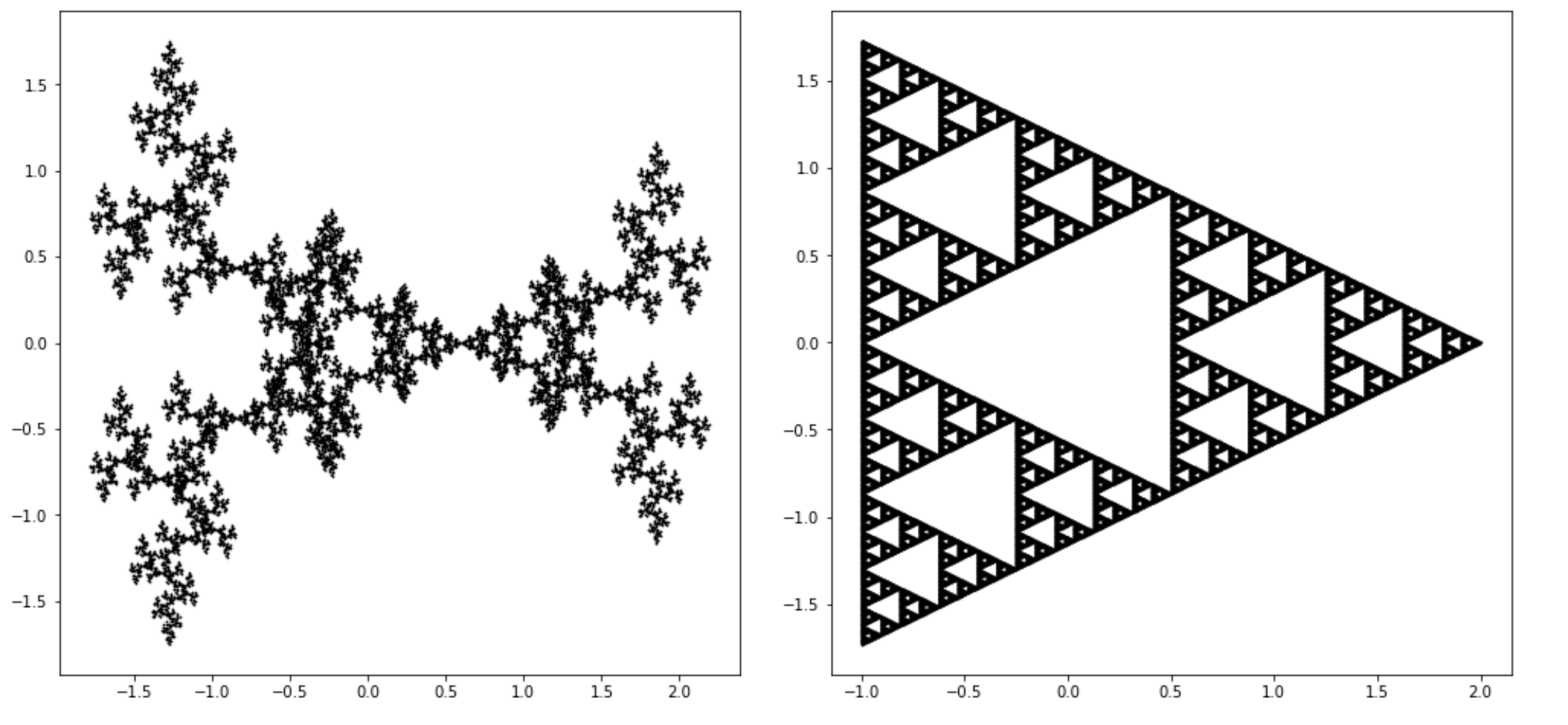}%
\caption{Based upon \cite{Chistyakov}, we construct an embedding
$\mathfrak{f}:\mathbb{Z}_{p}\rightarrow\mathbb{R}^{2}$. The figure shows the
images of $\mathfrak{f}(\mathbb{Z}_{2})$ and $\mathfrak{f}(\mathbb{Z}_{3})$.
This computation requires a truncation of the $p$-adic integers. We use
$\mathbb{Z}_{2}/2^{14}\mathbb{Z}_{2}$ and $\mathbb{Z}_{3}/3^{10}\mathbb{Z}%
_{3}$, respectively.}%
\label{Figure 1}%
\end{center}
\end{figure}

\subsubsection{Tree-like structures}

The set of $p$-adic integers modulo $p^{l}$, $l\geq1$, consists of all the
integers of the form $i=i_{0}+i_{1}p+\ldots+i_{l-1}p^{l-1}$. These numbers
form a complete set of representatives for the elements of the additive group
$G_{l}=\mathbb{Z}_{p}/p^{l}\mathbb{Z}_{p}$, which is isomorphic to the set of
integers $\mathbb{Z}/p^{l}\mathbb{Z}$ (written in base $p$) modulo $p^{l}$. By
restricting $\left\vert \cdot\right\vert _{p}$ to $G_{l}$, it becomes a normed
space, and $\left\vert G_{l}\right\vert _{p}=\left\{  0,p^{-\left(
l-1\right)  },\cdots,p^{-1},1\right\}  $. With the metric induced by
$\left\vert \cdot\right\vert _{p}$, $G_{l}$ becomes a finite ultrametric
space. In addition, $G_{l}$ can be identified with the set of branches
(vertices at the top level) of a rooted tree with $l$ levels and $p^{l}$
branches. By definition, the tree's root is the only vertex at level $0$.
There are exactly $p$ vertices at level $1$, which correspond with the
possible values of the digit $i_{0}$ in the $p$-adic expansion of $i$. Each of
these vertices is connected to the root by a non-directed edge. At level $k$,
with $2\leq k\leq l-1$, there are exactly $p^{k}$ vertices, \ each vertex
corresponds to a truncated expansion of $i$ of the form $i_{0}+\cdots
+i_{k-1}p^{k-1}$. The vertex corresponding to $i_{0}+\cdots+i_{k-1}p^{k-1}$ is
connected to a vertex $i_{0}^{\prime}+\cdots+i_{k-2}^{\prime}p^{k-2}$ at the
level $k-1$ if and only if $\left(  i_{0}+\cdots+i_{k-1}p^{k-1}\right)
-\left(  i_{0}^{\prime}+\cdots+i_{k-2}^{\prime}p^{k-2}\right)  $ is divisible
by $p^{k-1}$. See Figure \ref{Figure 1}. The balls $B_{-r}(a)=a+p^{r}%
\mathbb{Z}_{p}$ are infinite rooted trees.%

\begin{figure}
[h]
\begin{center}
\includegraphics[width=0.75\textwidth]%
{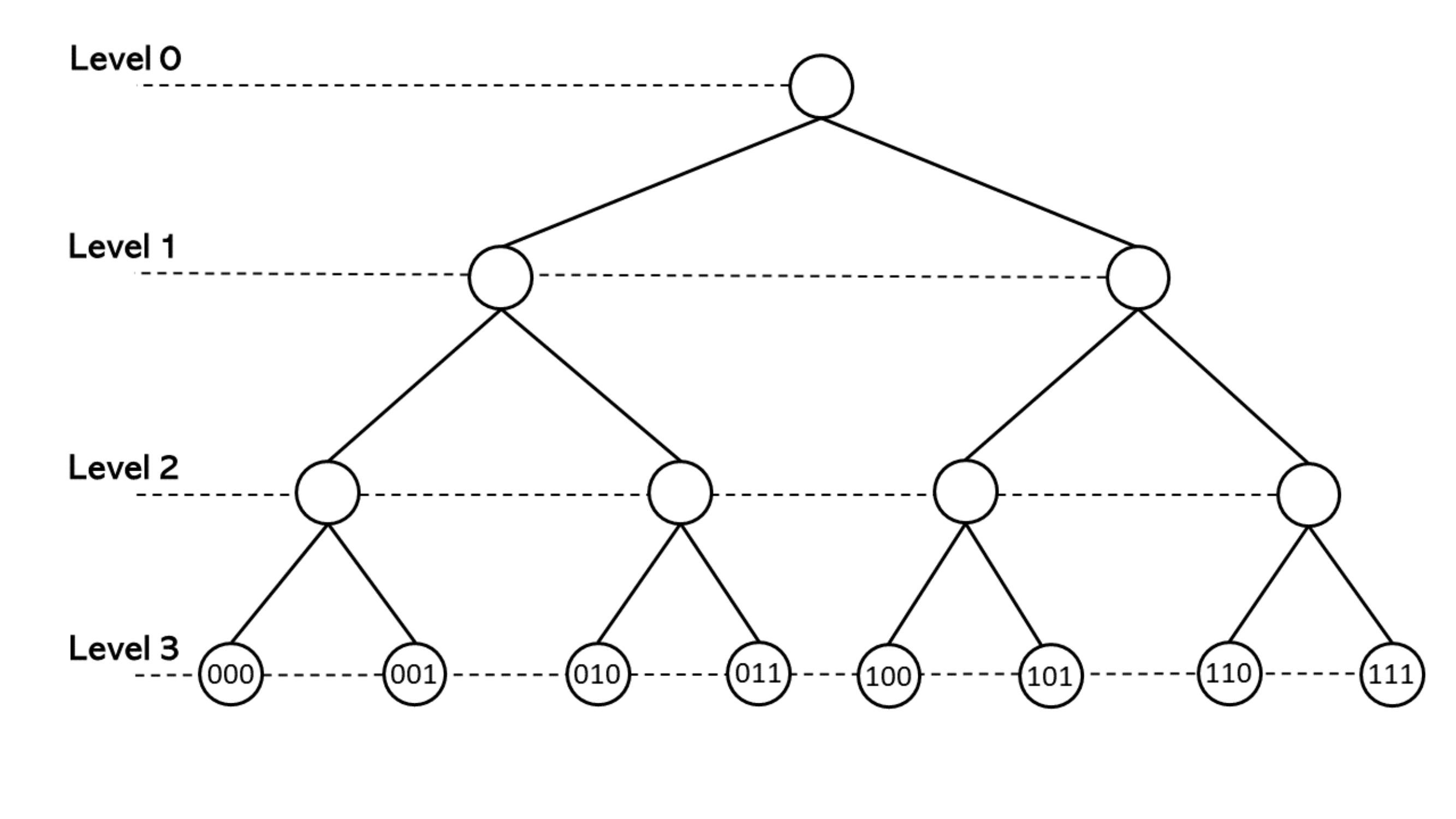}%
\caption{The rooted tree associated with the group $\mathbb{Z}_{2}%
/2^{3}\mathbb{Z}_{2}$. The elements of $\mathbb{Z}_{2}/2^{3}\mathbb{Z}_{2}$
have the form $i=i_{0}+i_{1}2+i_{2}2^{2}$,$\;i_{0}$, $i_{1}$, $i_{2}%
\in\{0,1\}$. The distance satisfies $-\log_{2}\left\vert i-j\right\vert _{2}%
=$level of the first common ancestor of $i$, $j$. Figure taken from
\cite{Zuniga et al}}%
\label{Figure 2}%
\end{center}
\end{figure}

\subsubsection{The Bruhat-Schwartz space in the unit ball}

A real-valued function $\varphi$ defined on $\mathbb{Z}_{p}$ is \textit{called
Bruhat-Schwartz function (or a test function)} if for any $x\in\mathbb{Z}_{p}$
there exist an integer $l\in\mathbb{N=}\left\{  0,1,\ldots\right\}  $ such
that%
\begin{equation}
\varphi(x+x^{\prime})=\varphi(x)\text{ for any }x^{\prime}\in B_{l}.
\label{local_constancy}%
\end{equation}
The $\mathbb{R}$-vector space of Bruhat-Schwartz functions supported in the
unit ball is denoted by $\mathcal{D}(\mathbb{Z}_{p})$. For $\varphi
\in\mathcal{D}(\mathbb{Z}_{p})$, the largest number $l=l(\varphi)$ satisfying
(\ref{local_constancy}) is called \textit{the exponent of local constancy (or
the parameter of constancy) of} $\varphi$. A test function with exponent of
local constancy $l$ has the form%
\[
\varphi\left(  x\right)  =%
{\textstyle\sum\limits_{i\in G_{l}}}
\varphi\left(  i\right)  \Omega\left(  p^{l}\left\vert x-i\right\vert
_{p}\right)  \text{, \ }\varphi\left(  i\right)  \in\mathbb{R}\text{,}%
\]
where $i=i_{0}+i_{1}p+\ldots+i_{l-1}p^{l-1}\in G_{l}=\mathbb{Z}_{p}%
/p^{l}\mathbb{Z}_{p}$. Notice that if $l=0$, $G_{0}=\left\{  0\right\}  $.
These functions form a finite dimensional vector space $\mathcal{D}%
^{l}(\mathbb{Z}_{p})$ spanned by the basis $\left\{  \Omega\left(
p^{l}\left\vert x-i\right\vert _{p}\right)  \right\}  _{i\in G_{l}}$. By
identifying $\varphi\in\mathcal{D}^{l}(\mathbb{Z}_{p})$ with the column vector
$\left[  \varphi\left(  i\right)  \right]  _{i\in G_{l}}\in\mathbb{R}%
^{\#G_{l}}$, we get that $\mathcal{D}^{l}(\mathbb{Z}_{p})$ is isomorphic (as
Banach space) to $\mathbb{R}^{\#G_{l}}$ endowed with the norm%
\[
\left\Vert \left[  \varphi\left(  i\right)  \right]  _{i\in G_{l}^{N}%
}\right\Vert =\max_{i\in G_{l}}\left\vert \varphi\left(  i\right)  \right\vert
.
\]
Furthermore,
\[
\mathcal{D}^{l}\hookrightarrow\mathcal{D}^{l+1}\hookrightarrow\mathcal{D}%
(\mathbb{Z}_{p}),
\]
where $\hookrightarrow$ denotes a continuous embedding, and $\mathcal{D}%
(\mathbb{Z}_{p})=\cup_{l\in\mathbb{N}}\mathcal{D}^{l}(\mathbb{Z}_{p})$.

\subsubsection{The Haar measure}

Since $(\mathbb{Z}_{p},+)$ is a compact topological group, there exists a Haar
measure $dx$, which is invariant under translations, i.e., $d(x+a)=dx$,
\cite{Halmos}. If we normalize this measure by the condition $\int
_{\mathbb{Z}_{p}}dx=1$, then $dx$ is unique. It follows immediately that
\[%
{\textstyle\int\limits_{B_{-r}(a)}}
dx=%
{\textstyle\int\limits_{a+p^{r}\mathbb{Z}_{p}}}
dx=p^{-r}%
{\textstyle\int\limits_{\mathbb{Z}_{p}}}
dy=p^{-r}\text{, }r\in\mathbb{N}\text{.}%
\]
In a few occasions, we use the two-dimensional Haar measure $dxdy$ of the
additive group $(\mathbb{Z}_{p}\times\mathbb{Z}_{p},+)$ normalize this measure
by the condition $\int_{\mathbb{Z}_{p}}\int_{\mathbb{Z}_{p}}dxdy=1$.

\subsubsection{Other function spaces}

The space $\mathcal{D}(\mathbb{Z}_{p})$ is dense in
\[
L^{\rho}\left(  \mathbb{Z}_{p}\right)  =L^{\rho}=\left\{  \varphi
:\mathbb{Z}_{p}\rightarrow\mathbb{R};\left(
{\displaystyle\int\limits_{\mathbb{Z}_{p}}}
\left\vert \varphi\left(  x\right)  \right\vert ^{\rho}dx\right)  ^{\frac
{1}{\rho}}<\infty\right\}  ,
\]
for $1\leq\rho<\infty$, see e.g. \cite[Section 4.3]{Alberio et al}.

We denote by $C(\mathbb{Z}_{p})$ the $\mathbb{R}$-vector space of continuous
functions defined on $\mathbb{Z}_{p}$.$\ $The space of test functions
$\mathcal{D}(\mathbb{Z}_{p})$ is dense in $C(\mathbb{Z}_{p})$ with respect to
the norm $\left\Vert \phi\right\Vert _{\infty}=\sup_{x\in\mathbb{Z}_{p}}%
|\phi(x)|$. When formulating solutions of initial value problems, the notation
$C(\mathbb{Z}_{p},\mathbb{R})$ will be also used.

\section{$p$-Adic continuous CNNs}

In this section, we quickly review some of the results of
\cite{Zambrano-Zuniga-1}. Initially, these results were formulated in
$\mathbb{Q}_{p}^{N}$, but here we reformulate them in $\mathbb{Z}_{p}$. The
adaption is straightforward.

We say that a \ function $f:\mathbb{R}\rightarrow\mathbb{R}$ is called a
Lipschitz function if there exists a real constant $L(f)>0$ such that, for all
$x,y\in\mathbb{R}$, $|f(x)-f(y)|\leq L(f)|x-y|$. We assume $f(0)=0$. A
relevant example is
\[
f(x)=\frac{1}{2}\left(  |x+1|-|x-1|\right)  .
\]

We define $\mathcal{X}_{\infty}(\mathbb{Z}_{p}):=\mathcal{X}_{\infty}=\left(
C(\mathbb{Z}_{p}),\left\Vert \cdot\right\Vert _{\infty}\right)  $, where
$\left\Vert \phi\right\Vert _{\infty}=\sup_{x\in\mathbb{Z}_{p}}|\phi(x)|$, and
$\mathcal{X}_{M}:=\left(  \mathcal{D}^{M}(\mathbb{Z}_{p}),\left\Vert
\cdot\right\Vert _{\infty}\right)  $ for $M\geq1$. The spaces $\mathcal{X}%
_{\infty}$ and $\mathcal{X}_{M}$ are Banach spaces.

Given $A(x,y)$, $B(x,y)\in L^{1}(\mathbb{Z}_{p}\times\mathbb{Z}_{p})$, and
$U$, $Z\in\mathcal{X}_{\infty}$, a $p$-adic continuous CNN, denoted as
CNN$(A,B,U,Z)$, is the dynamical system given by the following
integrodifferential equations: (i) state equation:%
\begin{equation}
\frac{\partial X(x,t)}{\partial t}=-X(x,t)+%
{\displaystyle\int\limits_{\mathbb{Z}_{p}}}
A(x,y)Y(y,t)dy+%
{\displaystyle\int\limits_{\mathbb{Z}_{p}}}
B(x,y)U(y)dy+Z(x), \label{Continuous_CNN}%
\end{equation}
where $x\in\mathbb{Z}_{p}$, $t\geq0$, and (ii) output equation:
$Y(x,t)=f(X(x,t))$. We say that $X(x,t)\in\mathbb{R}$ is the \textit{state of
cell} $x$ at the time $t$, $Y(x,t)\in\mathbb{R}$ is \textit{the output of
cell} $x$ at the time $t$. Function $A(x,y)$ is \textit{the kernel of the
feedback operator, while \ function } $B(x,y)$ is t\textit{he kernel of the
feedforward operator}. Function $U$ is \textit{the input of the $\text{CNN}$},
while function $Z$ is \textit{the threshold of the $\text{CNN.}$}

We focus mainly in continuous CNNs having the space invariant property, i.e.
$A(x,y)=A(\left\vert x-y\right\vert _{p})$ and $B(x,y)=B(\left\vert
x-y\right\vert _{p})$ for some $A,B\in L^{1}$, however our results are valid
for general $p$-adic continuous $\text{CNN}$s.

\subsection{Discretization of $p$-adic continuous CNNs}

The $p$-adic continuous CNNs are mathematical models of hierarchical CNNs. For
practical pourposes, discrete versions of them are required. By the H\"{o}lder
inequality,%
\[
\mathcal{D}(\mathbb{Z}_{p})\subset C(\mathbb{Z}_{p})\subset L^{\rho}\left(
\mathbb{Z}_{p}\right)  \subset L^{1}\left(  \mathbb{Z}_{p}\right)  \text{, for
}\rho\in\left(  1,\infty\right]  ,
\]
with $\mathcal{D}(\mathbb{Z}_{p})$ dense in $L^{1}\left(  \mathbb{Z}%
_{p}\right)  $. Then, any function $f\in L^{\rho}\left(  \mathbb{Z}%
_{p}\right)  $, $\rho\in\left[  1,\infty\right]  ,$ can be approximated in the
norm $\left\Vert \cdot\right\Vert _{\rho}$ by a function from $\mathcal{D}%
^{l}(\mathbb{Z}_{p})\subset\mathcal{D}(\mathbb{Z}_{p})$, for some $l\geq1$.
For this reason, a discretization of a $p$-adic continuous $\text{CNN}%
(A,B,U,Z)$ is obtained assuming that $X(\cdot,t)$, $A$, $Y(\cdot,t)$, $B$, $U$
and $Z$ belong to $\mathcal{D}^{l}(\mathbb{Z}_{p})$, i.e.%
\begin{gather*}
X(x,t)=%
{\displaystyle\sum\limits_{i\in G_{l}}}
X(i,t)\Omega\left(  p^{l}\left\vert x-i\right\vert _{p}\right)  \text{,
\ }Y(x,t)=%
{\displaystyle\sum\limits_{i\in G_{l}}}
Y(i,t)\Omega\left(  p^{l}\left\vert x-i\right\vert _{p}\right)  ,\\
U(x)=%
{\displaystyle\sum\limits_{i\in G_{l}}}
U(i)\Omega\left(  p^{l}\left\vert x-i\right\vert _{p}\right)  \text{, \ }Z(x)=%
{\displaystyle\sum\limits_{i\in G_{l}}}
Z(i)\Omega\left(  p^{l}\left\vert x-i\right\vert _{p}\right)  ,\\
A(x,y)=%
{\displaystyle\sum\limits_{i\in G_{l}}}
\text{ }%
{\displaystyle\sum\limits_{j\in G_{l}}}
A(i,j)\Omega\left(  p^{l}\left\vert x-i\right\vert _{p}\right)  \Omega\left(
p^{l}\left\vert y-j\right\vert _{p}\right)  ,\\
B(x,y)=%
{\displaystyle\sum\limits_{i\in G_{l}}}
\text{ }%
{\displaystyle\sum\limits_{j\in G_{l}}}
B(i,j)\Omega\left(  p^{l}\left\vert x-i\right\vert _{p}\right)  \Omega\left(
p^{l}\left\vert y-j\right\vert _{p}\right)  .
\end{gather*}
On the other hand, if $f:\mathbb{R}\rightarrow\mathbb{R}$, then
\[
f\left(  X(x,t)\right)  =%
{\displaystyle\sum\limits_{i\in G_{l}}}
f(X(i,t))\Omega\left(  p^{l}\left\vert x-i\right\vert _{p}\right)
=Y(x,t)\text{.}%
\]
Now,%
\[
\frac{\partial}{\partial t}X(x,t)=%
{\displaystyle\sum\limits_{i\in G_{l}}}
\frac{\partial}{\partial t}X(i,t)\Omega\left(  p^{l}\left\vert x-i\right\vert
_{p}\right)  ,
\]
and%
\begin{align*}
&
{\displaystyle\int\limits_{\mathbb{Z}_{p}}}
A(x,y)f\left(  X(y,t)\right)  dy\\
&  =%
{\displaystyle\sum\limits_{i\in G_{l}}}
\left\{
{\displaystyle\sum\limits_{j\in G_{l}}}
A(i,j)f(X(j,t))%
{\displaystyle\int\limits_{\mathbb{Z}_{p}}}
\Omega\left(  p^{l}\left\vert y-j\right\vert _{p}\right)  dy\right\}
\Omega\left(  p^{l}\left\vert x-i\right\vert _{p}\right) \\
&  =p^{-l}%
{\displaystyle\sum\limits_{i\in G_{l}}}
\left\{
{\displaystyle\sum\limits_{j\in G_{l}}}
A(i,j)Y(j,t))\right\}  \Omega\left(  p^{l}\left\vert x-i\right\vert
_{p}\right)  .
\end{align*}
Similarly,%
\[%
{\displaystyle\int\limits_{\mathbb{Z}_{p}}}
B(x,y)U(y)dy=p^{-l}%
{\displaystyle\sum\limits_{i\in G_{l}}}
\left\{
{\displaystyle\sum\limits_{j\in G_{l}}}
B(i,j)U(j))\right\}  \Omega\left(  p^{l}\left\vert x-i\right\vert _{p}\right)
.
\]
Therefore,%
\[
\frac{\partial}{\partial t}X(i,t)=-X(i,t)+%
{\displaystyle\sum\limits_{j\in G_{l}}}
p^{-l}A(i,j)Y(j,t)+%
{\displaystyle\sum\limits_{j\in G_{l}}}
p^{-l}B(i,j)U(j)+Z(i)\text{, }%
\]
for $i\in G_{l}$, and $Y(i,t)=f(X(i,t))$, for $i\in G_{l}$. This is exactly a
$p$-adic discrete CNN with $\mathcal{M}=G_{l}$, $\left\vert x-y\right\vert
=\left\vert x-y\right\vert _{p}$, $\mathbb{A}(i,j)=p^{-l}A(i,j)$,
$\mathbb{B}(i,j)=p^{-l}B(i,j)$. Notice that a $p$-adic discrete CNN is a
dynamical system on
\[
C_{l}:=\left\{  i\in G_{l};\mathbb{A}(i,j)\neq0\ \text{or }\mathbb{B}%
(i,j)\neq0\text{ for some }j\in G_{l}\right\}  ;
\]
see Figure \ref{Figure 3}. Intuitively a $p$-adic continuous CNN has
infinitely many layers, each layer corresponds to some $l$, which are
organized in a hierarchical structure. For practical purposes, a $p$-adic
continuous CNN is realized as a $p$-adic discrete CNN for $l$ sufficiently large.%

\begin{figure}
[h]
\begin{center}
\includegraphics[width=0.85\textwidth]%
{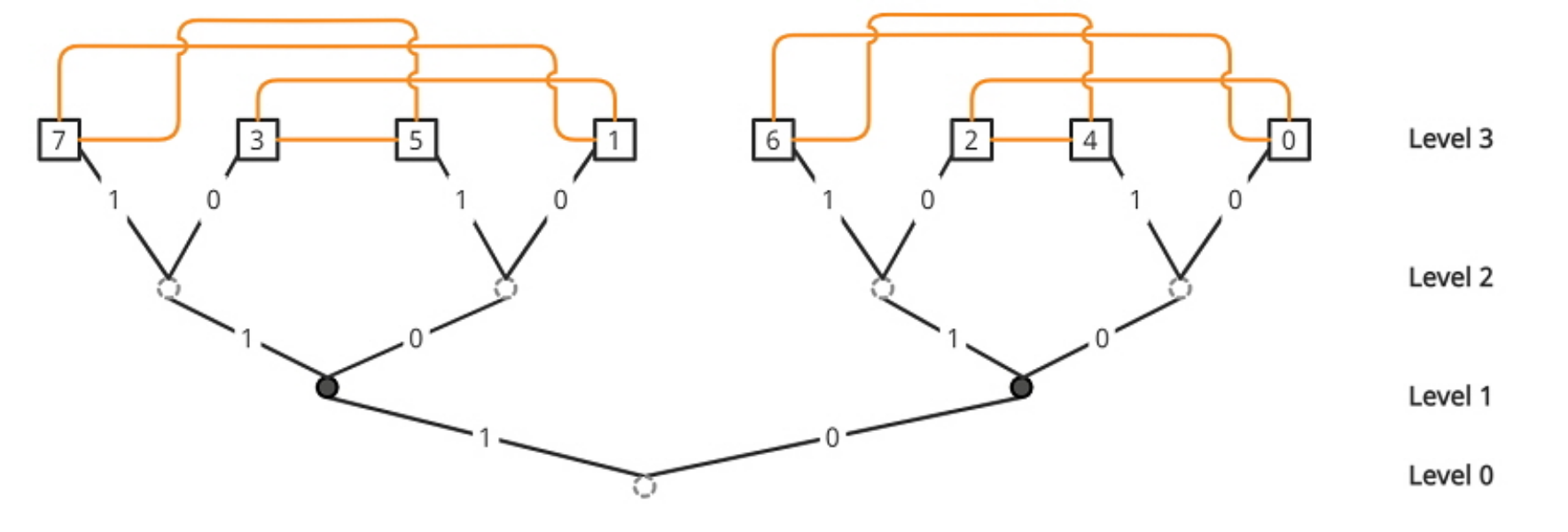}%
\caption{A discrete $2$-adic CNN with $8$ cells: $C_{3}%
=\{0,1,2,3,4,5,7\}\subset\mathbb{Z}_{2}/2^{3}\mathbb{Z}_{2}\subset
2^{-3}\mathbb{Z}_{2}/2^{3}\mathbb{Z}_{2}$. We set $\mathbb{B}=0$ and
$\mathbb{A}(i,j)=\left[  a_{i,j}\right]  $, with $a_{i,j}\neq0$ if
$|i-j|_{2}=1/2$ and $i$, $j\in C_{3}$; $a_{i,j}=0$ otherwise. Taken from
\cite{Zambrano-Zuniga-1}}%
\label{Figure 3}%
\end{center}
\end{figure}

\subsection{Stability of $p$-adic continuous CNN}

We assume that $A\left(  \left\vert x\right\vert _{p}\right)  ,B\left(
\left\vert x\right\vert _{p}\right)  \in L^{1}(\mathbb{Z}_{p})$ are radial
functions and that $U$, $Z\in\mathcal{X}_{\infty}$.

For $g\in\mathcal{X}_{\infty}$, set
\[
\boldsymbol{H}(g):=%
{\displaystyle\int\limits_{\mathbb{Z}_{p}}}
A(\left\vert x-y\right\vert _{p})f\left(  g(y)\right)  dy+%
{\displaystyle\int\limits_{\mathbb{Z}_{p}}}
B(\left\vert x-y\right\vert _{p})U(y)dy+Z(x).
\]
Then $\boldsymbol{H}:\mathcal{X}_{\infty}\rightarrow\mathcal{X}_{\infty}$ is a
well-defined \ operator satisfying%
\[
\Vert\boldsymbol{H}(g)-\boldsymbol{H}(g^{\prime})\Vert_{\infty}\leq L(f)\Vert
A\Vert_{1}\Vert g-g^{\prime}\Vert_{\infty}\text{, for }g\text{, }g^{\prime}%
\in\mathcal{X}_{\infty}\text{,}%
\]
where $L(f)$ is the Lipschitz constant of $f$, cf. \cite[Lemma 2]%
{Zambrano-Zuniga-1}.

The following result establishes the existence and uniqueness of a solution
for the Cauchy problem associated with equation (\ref{Continuous_CNN}).

\begin{proposition}
\cite[Proposition 1]{Zambrano-Zuniga-1} Let $\tau$ be a fixed positive real
number. Then, for each $X_{0}\in\mathcal{X}_{\infty}$ there exists a unique
$X\in C([0,\tau],\mathcal{X}_{\infty})$ which satisfies
\begin{equation}
X(x,t)=e^{-t}X_{0}\left(  x\right)  +\int_{0}^{t}e^{-(t-s)}\boldsymbol{H}%
(X(x,s))ds \label{eq:solution}%
\end{equation}
where
\begin{equation}
\boldsymbol{H}X(x,t)=%
{\displaystyle\int\limits_{\mathbb{Z}_{p}}}
A(\left\vert x-y\right\vert _{p})f(X(y,t))dy+%
{\displaystyle\int\limits_{\mathbb{Z}_{p}}}
B(\left\vert x-y\right\vert _{p})U(y)dy+Z(x). \label{eq:Op_H}%
\end{equation}
The function $X(x,t)$ is differentiable in $t$ for all $x$, and it is a
solution of equation (\ref{Continuous_CNN}) with initial datum $X_{0}$.
\end{proposition}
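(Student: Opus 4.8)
The plan is to recast the integral equation (\ref{eq:solution}) as a fixed-point problem and apply the Banach fixed-point theorem on the Banach space $C([0,\tau],\mathcal{X}_{\infty})$. Concretely, I would introduce the operator $\mathcal{T}$ defined by
\[
(\mathcal{T}X)(x,t)=e^{-t}X_{0}(x)+\int_{0}^{t}e^{-(t-s)}\boldsymbol{H}(X(x,s))\,ds,
\]
and look for a fixed point $\mathcal{T}X=X$. The first thing to verify is that $\mathcal{T}$ maps $C([0,\tau],\mathcal{X}_{\infty})$ into itself. Since $\boldsymbol{H}:\mathcal{X}_{\infty}\rightarrow\mathcal{X}_{\infty}$ is well-defined and Lipschitz (as recalled just before the statement), for any $X\in C([0,\tau],\mathcal{X}_{\infty})$ the map $s\mapsto\boldsymbol{H}(X(\cdot,s))$ is continuous with values in $\mathcal{X}_{\infty}$; the term $e^{-t}X_{0}$ is continuous, and the integral of a continuous $\mathcal{X}_{\infty}$-valued function depends continuously on $t$, so $\mathcal{T}X\in C([0,\tau],\mathcal{X}_{\infty})$.

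The heart of the argument is the contraction estimate. Using the bound $\Vert\boldsymbol{H}(g)-\boldsymbol{H}(g')\Vert_{\infty}\leq L(f)\Vert A\Vert_{1}\Vert g-g'\Vert_{\infty}$, for two elements $X,X'$ I obtain
\[
\Vert(\mathcal{T}X)(\cdot,t)-(\mathcal{T}X')(\cdot,t)\Vert_{\infty}\leq L(f)\Vert A\Vert_{1}\int_{0}^{t}e^{-(t-s)}\Vert X(\cdot,s)-X'(\cdot,s)\Vert_{\infty}\,ds.
\]
The main obstacle is that the naive sup-norm version of this estimate only produces a contraction for $\tau$ small, while the statement demands an arbitrary fixed $\tau>0$. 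I would circumvent this by equipping $C([0,\tau],\mathcal{X}_{\infty})$ with the equivalent weighted (Bielecki) norm $\Vert X\Vert_{\lambda}=\sup_{0\leq t\leq\tau}e^{-\lambda t}\Vert X(\cdot,t)\Vert_{\infty}$. Inserting $\Vert X(\cdot,s)-X'(\cdot,s)\Vert_{\infty}\leq e^{\lambda s}\Vert X-X'\Vert_{\lambda}$ into the estimate and evaluating the elementary integral yields a contraction constant of $\tfrac{L(f)\Vert A\Vert_{1}}{1+\lambda}$, which is strictly less than $1$ once $\lambda$ is taken large enough. Banach's theorem then furnishes a unique fixed point. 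As an equivalent alternative, one may instead iterate: induction gives $\Vert(\mathcal{T}^{n}X)(\cdot,t)-(\mathcal{T}^{n}X')(\cdot,t)\Vert_{\infty}\leq\tfrac{(L(f)\Vert A\Vert_{1}t)^{n}}{n!}\Vert X-X'\Vert_{\infty}$, so some power $\mathcal{T}^{n}$ is a contraction and the standard corollary of the Banach theorem again gives a unique fixed point of $\mathcal{T}$.

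Finally, with the unique fixed point $X\in C([0,\tau],\mathcal{X}_{\infty})$ in hand, I would recover the differential form (\ref{Continuous_CNN}). Writing $X(x,t)=e^{-t}X_{0}(x)+e^{-t}\int_{0}^{t}e^{s}\boldsymbol{H}(X(x,s))\,ds$ and using that $s\mapsto\boldsymbol{H}(X(\cdot,s))$ is continuous, the fundamental theorem of calculus gives
\[
\frac{\partial X(x,t)}{\partial t}=-X(x,t)+\boldsymbol{H}(X(x,t)),
\]
which is exactly the state equation, with initial condition $X(\cdot,0)=X_{0}$ read off directly from (\ref{eq:solution}). I expect this last step to be routine once the continuity of the integrand is noted; the only genuinely delicate point is securing the contraction for arbitrary $\tau$, which the weighted-norm (or iterate) device above resolves.
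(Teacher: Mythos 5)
Your proposal is correct, and it is essentially the argument behind the result: the paper itself defers the proof to \cite[Proposition 1]{Zambrano-Zuniga-1}, where the same Picard--Banach fixed-point scheme on $C([0,\tau],\mathcal{X}_{\infty})$ is used, resting on exactly the Lipschitz bound $\Vert\boldsymbol{H}(g)-\boldsymbol{H}(g')\Vert_{\infty}\leq L(f)\Vert A\Vert_{1}\Vert g-g'\Vert_{\infty}$ recalled before the statement. Your Bielecki-norm (equivalently, iterated-map) device for obtaining the contraction on the whole interval $[0,\tau]$ at once, and the concluding differentiation via the fundamental theorem of calculus, match the standard treatment there, so there is nothing to correct.
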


The following result shows the stability of the $p$-adic continuous CNNs.

\begin{theorem}
\cite[Theorem 2]{Zambrano-Zuniga-1} All the states $X(x,t)$ of a $p$-adic
continuous CNN are bounded for all time $t\geq0$. More precisely, if
\[
X_{\max}:=\Vert X_{0}\Vert_{\infty}+\Vert f\Vert_{\infty}\Vert A\Vert
_{1}+\Vert U\Vert_{\infty}\Vert B\Vert_{1}+\Vert Z\Vert_{\infty},
\]
then
\begin{equation}
|X(x,t)|\leq X_{\max}\text{ for all }t\geq0\text{ and for all }x\in
\mathbb{Z}_{p}. \label{No_Blow_up1}%
\end{equation}

In addition%
\[
X_{-}\left(  x\right)  :=\lim\inf_{t\rightarrow\infty}X(x,t)\leq
X(x,t)\leq\lim\sup_{t\rightarrow\infty}X(x,t)=:X_{+}\left(  x\right)  ,
\]
for $x\in\mathbb{Z}_{p}$. If $X_{-}\left(  x\right)  =X_{+}\left(  x\right)
:=X^{\ast}(x)$,then $X^{\ast}(x)$ is a stationary solution of the
CNN$(A,B,U,Z)$ and
\begin{equation}
X^{\ast}(x)\geq-\left\Vert f\right\Vert _{\infty}\Vert A\Vert_{1}-\Vert
U\Vert_{\infty}\Vert B\Vert_{1}+Z(x). \label{Stat_Solution_2}%
\end{equation}

\end{theorem}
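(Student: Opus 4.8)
The plan is to derive all three assertions directly from the mild-solution representation (\ref{eq:solution}), the only genuine input being an $L^{1}$--$L^{\infty}$ bound on the operator $\boldsymbol{H}$. First I would bound $\boldsymbol{H}$ uniformly. Since $f$ is bounded, $U\in\mathcal{X}_{\infty}$, and the Haar measure is translation invariant (so that $\int_{\mathbb{Z}_{p}}|A(|x-y|_{p})|\,dy=\Vert A\Vert_{1}$, and likewise for $B$), one gets for every $x$ and $t$
\[
\left| \boldsymbol{H}X(x,t)\right| \le \Vert f\Vert_{\infty}\Vert A\Vert_{1}+\Vert U\Vert_{\infty}\Vert B\Vert_{1}+\Vert Z\Vert_{\infty}=:C .
\]
Inserting this into (\ref{eq:solution}) and using $\int_{0}^{t}e^{-(t-s)}\,ds=1-e^{-t}$ yields
\[
\left| X(x,t)\right| \le e^{-t}\Vert X_{0}\Vert_{\infty}+C\left(1-e^{-t}\right)\le \Vert X_{0}\Vert_{\infty}+C=X_{\max},
\]
uniformly in $x\in\mathbb{Z}_{p}$ and $t\ge 0$, which is exactly (\ref{No_Blow_up1}).

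The uniform bound (\ref{No_Blow_up1}) guarantees that, for each $x$, the quantities $X_{-}(x)=\liminf_{t\to\infty}X(x,t)$ and $X_{+}(x)=\limsup_{t\to\infty}X(x,t)$ are finite and satisfy $X_{-}(x)\le X_{+}(x)$; they frame the asymptotic behaviour of the trajectory and reduce the remaining stability question to the coincidence case $X_{-}(x)=X_{+}(x)$. In that case, writing $X^{\ast}(x)$ for the common value, we have $\lim_{t\to\infty}X(x,t)=X^{\ast}(x)$ pointwise, and I would pass to the limit $t\to\infty$ in (\ref{eq:solution}). The term $e^{-t}X_{0}(x)$ vanishes. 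Since $f$ is continuous with $|f(X(y,s))|\le\Vert f\Vert_{\infty}$, dominated convergence (dominating function $\Vert f\Vert_{\infty}|A(|x-y|_{p})|\in L^{1}$) gives $\boldsymbol{H}X(x,s)\to\boldsymbol{H}X^{\ast}(x)$ as $s\to\infty$ for each fixed $x$. Combining this with the approximate-identity property of the kernel $e^{-(t-s)}$, one obtains $\int_{0}^{t}e^{-(t-s)}\boldsymbol{H}X(x,s)\,ds\to\boldsymbol{H}X^{\ast}(x)$, hence
\[
X^{\ast}(x)=\boldsymbol{H}X^{\ast}(x)=\int_{\mathbb{Z}_{p}}A(|x-y|_{p})f(X^{\ast}(y))\,dy+\int_{\mathbb{Z}_{p}}B(|x-y|_{p})U(y)\,dy+Z(x).
\]
This is precisely the stationary equation $-X^{\ast}+\boldsymbol{H}X^{\ast}=0$, and because $A,B\in L^{1}$ and $U,Z$ are continuous, the right-hand side is continuous in $x$ (continuity of translation in $L^{1}$ against the bounded factor $f(X^{\ast}(\cdot))$), so $X^{\ast}\in C(\mathbb{Z}_{p})$ is a genuine stationary solution. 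Bounding the two integrals from below by $-\Vert f\Vert_{\infty}\Vert A\Vert_{1}$ and $-\Vert U\Vert_{\infty}\Vert B\Vert_{1}$ then gives (\ref{Stat_Solution_2}).

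The $L^{1}$--$L^{\infty}$ estimates and the boundedness step are routine. The delicate part, which I would write out in full, is the double passage to the limit in the integral term: one must first upgrade the pointwise convergence $X(y,s)\to X^{\ast}(y)$ to convergence of the nonlocal quantity $\boldsymbol{H}X(x,s)$ via dominated convergence, and only then apply the approximate-identity estimate for $e^{-(t-s)}$ (splitting the integral at a large time $T$ beyond which the integrand is within $\varepsilon$ of its limit, and controlling the initial segment by $e^{-t}$). A secondary point worth recording is the verification that the resulting fixed point $X^{\ast}$ is continuous, so that it indeed lives in the solution space $\mathcal{X}_{\infty}$ rather than being merely a bounded pointwise limit.
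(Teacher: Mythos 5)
Your proposal is correct and follows essentially the same route as the paper's source proof (and as the paper's own argument for the delayed analogue, Theorem \ref{Theorem_3}): a uniform bound $\left\vert \boldsymbol{H}X(x,t)\right\vert \leq \Vert f\Vert_{\infty}\Vert A\Vert_{1}+\Vert U\Vert_{\infty}\Vert B\Vert_{1}+\Vert Z\Vert_{\infty}$ inserted into the mild-solution formula (\ref{eq:solution}), followed by a dominated-convergence plus split-at-$T$ approximate-identity passage to the limit to obtain the fixed-point equation $X^{\ast}=\boldsymbol{H}X^{\ast}$ and hence (\ref{Stat_Solution_2}). You were also right to treat the displayed inequality $X_{-}(x)\leq X(x,t)\leq X_{+}(x)$ as merely framing the asymptotics rather than something to prove, since taken literally for all $t\geq0$ it can fail, and its substantive content is only the finiteness of $X_{\pm}(x)$, which your bound (\ref{No_Blow_up1}) already delivers.
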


\begin{remark}
Condition (\ref{No_Blow_up1}) implies that $X(x,t)$ does not blow-up at finite
time. The existence of a stationary state $X^{\ast}(x)$ means that the state
of each cell of a $p$-adic continuous CNN most settle at stable equilibrium
point after the transient has decayed to zero.
\end{remark}

\subsection{A Numerical simulation}

For an in-depth discussion of the numerical methods use in the numerical
simulations given in this article, the reader may consult
\cite{Zambrano-Zuniga-1}-\cite{Zambrano-Zuniga-2}.

The discretization of the kernels $A$, and $B$ are functions on $G_{l}\times
G_{l}$, while the input $U$ and initial condition $X_{0}$ are functions on
$G_{l}$. We use heat maps to visualize these functions. Since $G_{l}$ is
isomorphic to $\mathbb{Z}/p^{l}\mathbb{Z}$, we identify the leaves of the tree
with the set $\{0,1,\ldots,p^{l}-1\}$. We also include a tree plot that
represents $G_{l}$.

In this example, we take $l=5$, $p=3$, which means we use a tree with
$3^{5}=243$ leaves and $5$ levels. A basic application of the classical CNNs
is image processing, \cite{Chua-Tamas}. In this example we present a
one-dimensional edge detector, which is a $p$-adic, one-dimensional analog of
the examples 3.1 and 3.2 in \cite{Chua-Tamas}. The input (the image) is%
\[
U(x)=\cos(6\pi M(x)),
\]
where $M(x):\mathbb{Z}_{p}\longrightarrow\lbrack0,1]$ is the Monna map. As in
\cite{Chua-Tamas}, we take $X_{0}(x)=0$. To construct templates $A$ and $B$,
we identify a matrix with a test function. We use%
\[
A(x)=2\Omega(3^{5}|x|_{p}),\text{ \ }B(x)=3^{5}\left(  3^{3}\Omega
(3^{5}|x|_{p})-\Omega(3^{3}|x|_{p})\right)  .
\]

Finally, we take $Z(x)=-\Omega(3^{5}|x|_{p})$, $f(x)=0.5(|x+1|-|x-1|)$. The
output $Y(x,t)$ consists of the edges on the input $U$; see Figures
\ref{Figure 4}-\ref{Figure 6}.%

\begin{figure}
[h]
\begin{center}
\includegraphics[width=0.75\textwidth]%
{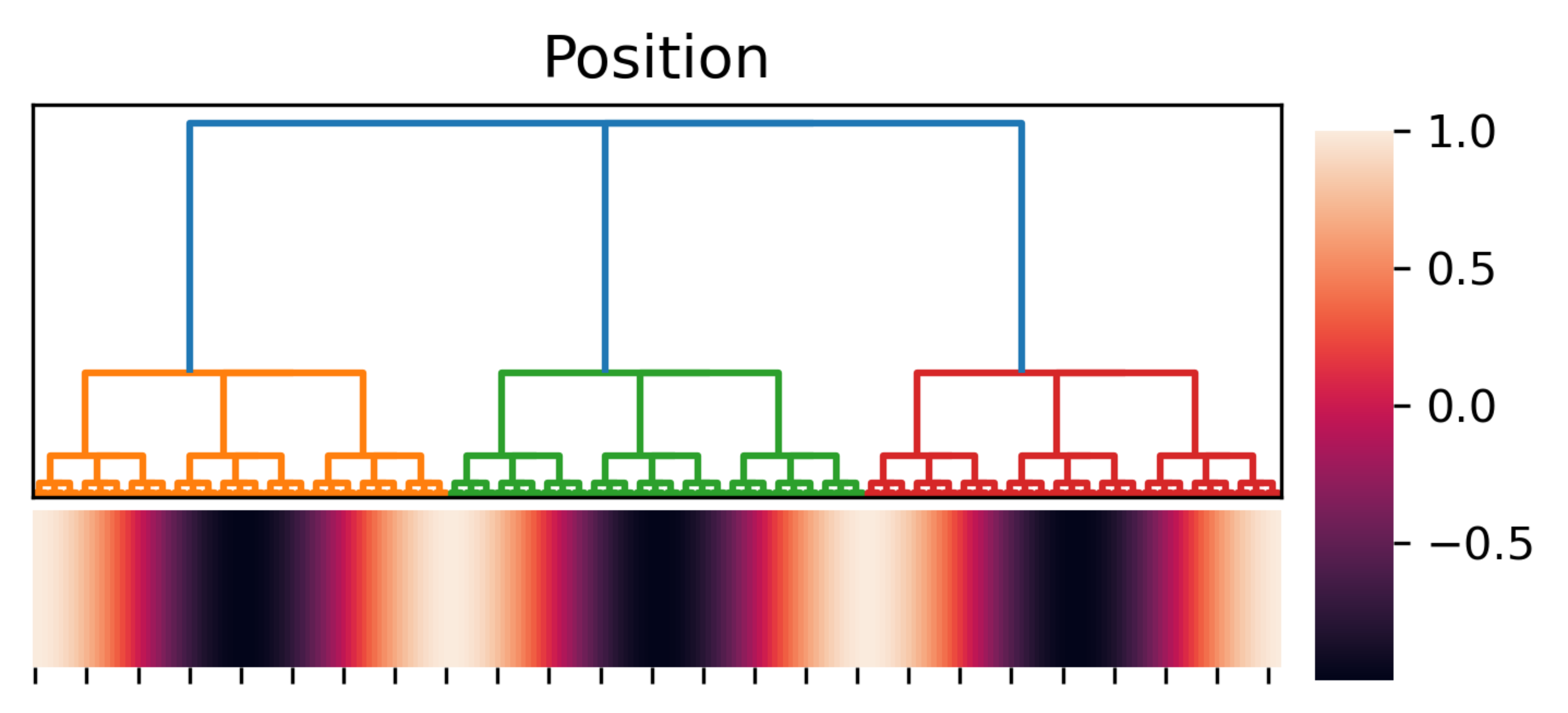}%
\caption{Heat map of $U(x)$. The position of each neuron corresponds with a
leave of $G_{5}$. Time $25$ and step $\delta_{t}$ $=0.05$.}%
\label{Figure 4}%
\end{center}
\end{figure}
\begin{figure}
[h]
\begin{center}
\includegraphics[width=0.75\textwidth]%
{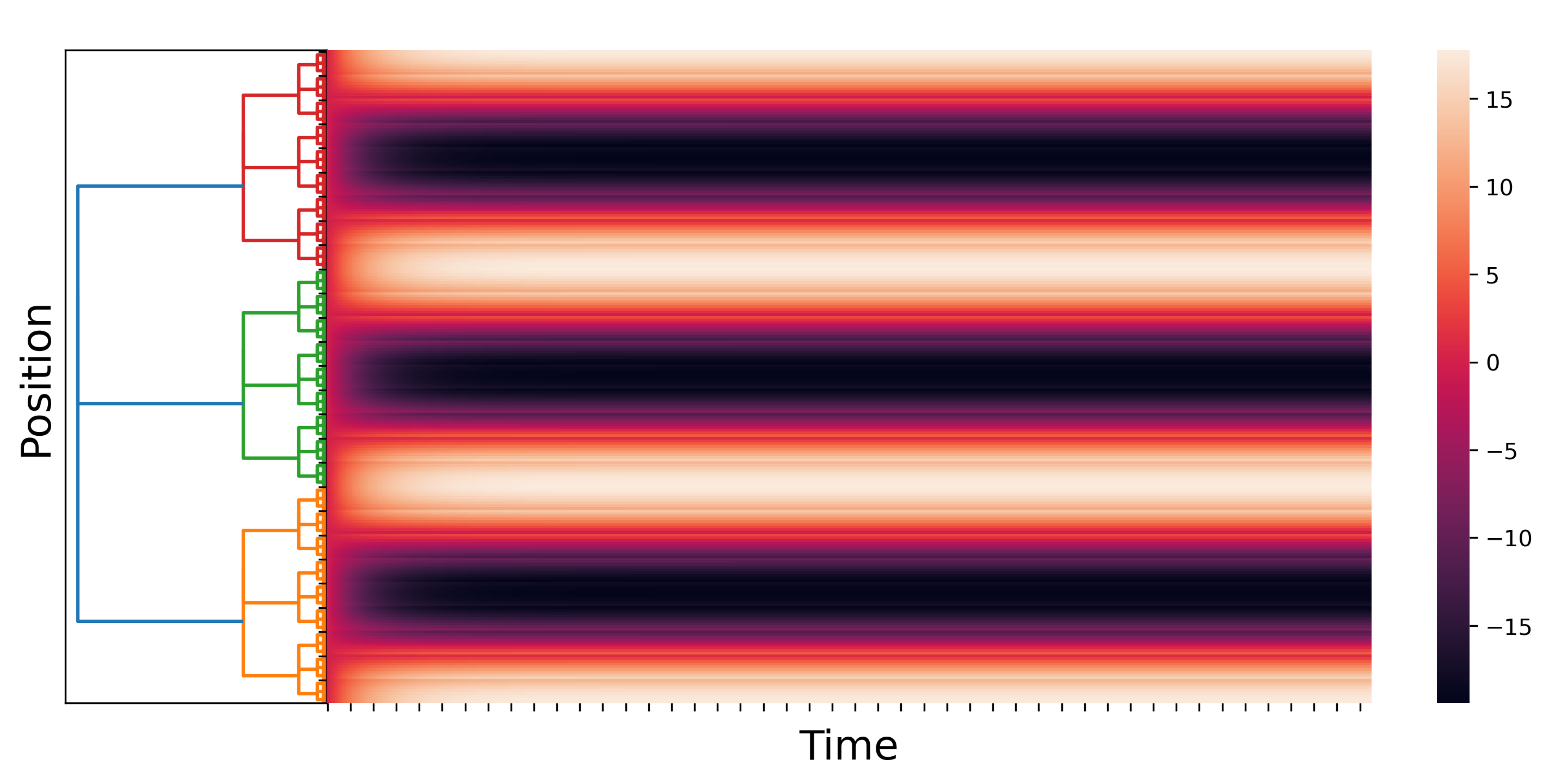}%
\caption{Heat map of $X(x,t)$. Time $25$ and step $\delta_{t}$ $=0.05$.}%
\label{Figurw 5}%
\end{center}
\end{figure}
%

\begin{figure}
[h]
\begin{center}
\includegraphics[width=0.75\textwidth]%
{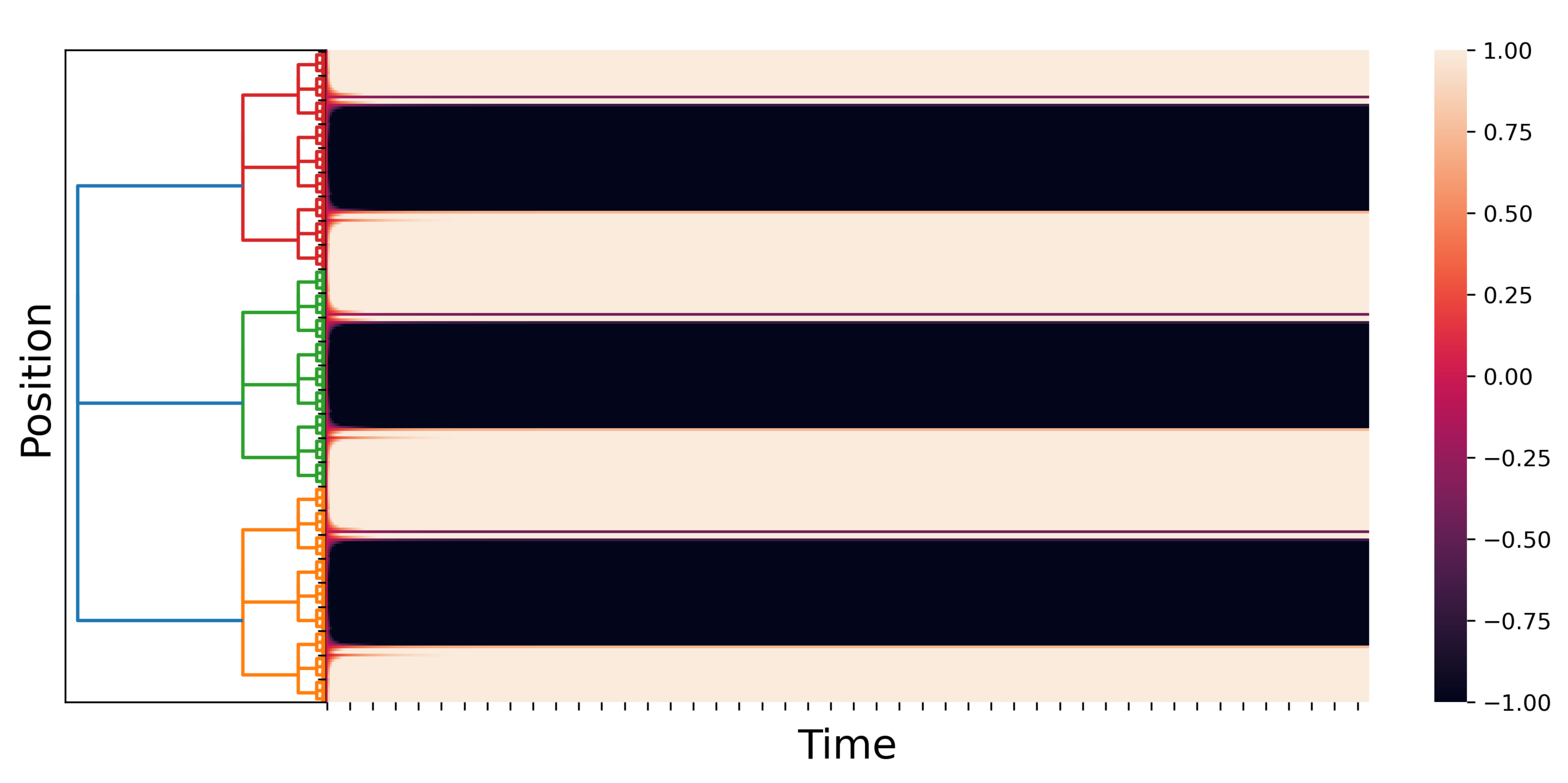}%
\caption{Heat map of $Y(x,t)$. Time $25$ and step $\delta_{t}$ $=0.05$.}%
\label{Figure 6}%
\end{center}
\end{figure}

\section{\label{Section_CNNS with delay}Reaction-diffusion CNNs with delay}

In this section, we present an extension of the $p$-adic contimuous CNNs. The
new networks include a $p$-adic diffusion term and delay.

\section{Heat-type equations on the unit ball}

We fix a function $J:\mathbb{Z}_{p}\rightarrow\left[  0,\infty\right)  $
satisfying $\int_{\mathbb{Z}_{p}}J(x)dx=1$. We attach to this function the
operator:%
\[
\boldsymbol{J}f(x)=\int\limits_{\mathbb{Z}_{p}}J(x-y)\left\{
f(y)-f(x)\right\}  dy\text{.}%
\]
We now consider the equation%
\begin{equation}
\frac{\partial u\left(  x,t\right)  }{\partial t}=\boldsymbol{J}u\left(
x,t\right)  . \label{Heat_Equation}%
\end{equation}
This equation describes an ultradiffusion (or $p$-adic diffusion) process in
$\mathbb{Z}_{p}$. Considering $u\left(  x,t\right)  $ as a density of
individuals or particles at the point $x$ and interpreting $J(x-y)$ as the
probability distribution of jumping from location $y$ to location $x$, the
amount $\int_{\mathbb{Z}_{p}}J(x-y)u(y,t)dy$ is the rate at which particles
are arriving at position $x$ from all other places, and $-u(x,t)=-\int
_{\mathbb{Z}_{p}}J(x-y)u(x,t)dy$ is the rate at which they are leaving the
location $x$ to travel to other places. Assuming that there are no creation or
inhalation of particles, i.e.,\ in the absence of external or internal
sources, the density $u(x,t)$ must satisfy the differential equation
(\ref{Heat_Equation}). Equations of type appeared in several $p$-adic models
of complex systems, see, e.g., \cite{KKZuniga}, \cite{Zuniga-PHYA}%
-\cite{Zuniga-Nonlinearity}, \cite{Torresblnaca-Zuniga}-\cite{Bendikov}.
Nowadays, there is a very general theory for equations of type
(\ref{Heat_Equation}). Here, we need a particular case of Theorem 3.1 from
\cite{Zuniga-PHYA}:

\begin{theorem}
Consider the Cauchy problem:%
\begin{equation}
\left\{
\begin{array}
[c]{l}%
u\left(  x,t\right)  \in C^{1}\left(  \left[  0,T\right]  ,C\left(
\mathbb{Z}_{p},\mathbb{R}\right)  \right) \\
\\
\frac{du\left(  x,t\right)  }{dt}=\boldsymbol{J}u(x,t),t\in\left[  0,T\right]
,x\in\mathbb{Z}_{p}\\
\\
u\left(  x,0\right)  =u_{0}\left(  x\right)  \in C\left(  \mathbb{Z}%
_{p},\mathbb{R}_{+}\right)  ,
\end{array}
\right.  \label{Eq_Cauchy_problem}%
\end{equation}
where $T\in\left[  0,\infty\right]  $.There exists a probability measure
$p_{t}\left(  x,\cdot\right)  $, $t\in\left[  0,T\right]  $, with $T=T(u_{0}%
)$, $x\in\mathbb{Z}_{p}$, on the Borel $f$-algebra of $\mathbb{Z}_{p}$, such
that the Cauchy problem (\ref{Eq_Cauchy_problem}) has a unique solution of the
form%
\[
h(x,t)=\int\limits_{\mathbb{Z}_{p}}u_{0}(y)p_{t}\left(  x,dy\right)  .
\]
In addition, $p_{t}\left(  x,\cdot\right)  $ is the transition function of a
Markov process $\mathfrak{X}$ whose paths are right continuous and have no
discontinuities other than jumps.
\end{theorem}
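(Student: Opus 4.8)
The plan is to solve the Cauchy problem \eqref{Eq_Cauchy_problem} by exploiting the structure of the operator $\boldsymbol{J}$, which I would first rewrite as $\boldsymbol{J}f = J\ast f - f$, where $J\ast f(x) = \int_{\mathbb{Z}_p} J(x-y)f(y)\,dy$ is convolution on the compact group $(\mathbb{Z}_p,+)$ and I have used $\int_{\mathbb{Z}_p} J(x-y)\,dy = 1$ to identify the ``leaving'' term with $-f(x)$. The key observation is that convolution against the fixed $L^1$-function $J$ is a bounded linear operator on the Banach space $C(\mathbb{Z}_p,\mathbb{R})$: indeed $\|J\ast f\|_\infty \le \|J\|_1\|f\|_\infty = \|f\|_\infty$, so $\boldsymbol{J} = (J\ast\,\cdot\,) - \mathrm{Id}$ is a bounded linear operator with $\|\boldsymbol{J}\| \le 2$. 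This immediately reduces the PDE to a linear ODE $\tfrac{du}{dt} = \boldsymbol{J}u$ in a Banach space, whose unique solution with datum $u_0$ is $u(\cdot,t) = e^{t\boldsymbol{J}}u_0$, the exponential being defined by the norm-convergent power series since $\boldsymbol{J}$ is bounded.

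The next step is to extract the probabilistic content, which is the substantive part of the statement. Writing $e^{t\boldsymbol{J}} = e^{-t}\,e^{t(J\ast\,\cdot\,)} = e^{-t}\sum_{n\ge 0}\tfrac{t^n}{n!}(J\ast\,\cdot\,)^n$, and noting that $(J\ast\,\cdot\,)^n$ is convolution against the $n$-fold convolution power $J^{\ast n}$ (with $J^{\ast 0} = \delta$ understood as the identity), I would define
\[
p_t(x,dy) = e^{-t}\sum_{n\ge 0}\frac{t^n}{n!}\,(J^{\ast n})(x,dy),
\]
where $J^{\ast 0}(x,\cdot) = \delta_x$ and $J^{\ast n}(x,dy) = J^{\ast n}(x-y)\,dy$ for $n\ge 1$. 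Each $J^{\ast n}$ is a probability distribution because $J\ge 0$ integrates to $1$ and convolution of probability densities on a group is again a probability density; hence, since the Poisson weights $e^{-t}t^n/n!$ sum to $1$, the mixture $p_t(x,\cdot)$ is a probability measure on the Borel $\sigma$-algebra of $\mathbb{Z}_p$. By construction $\int u_0(y)\,p_t(x,dy) = (e^{t\boldsymbol{J}}u_0)(x)$, so $h(x,t)$ is exactly the solution found above. The $C^1$-regularity in $t$ follows from differentiating the norm-convergent exponential series term by term, and positivity preservation ($u_0\ge 0 \Rightarrow h\ge 0$) is manifest since $p_t$ is a positive measure.

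Finally I would verify the Markov/semigroup and path properties. The Chapman--Kolmogorov identity $p_{t+s} = p_t\ast p_s$ reduces, via the definition above, to the two classical facts that the Poisson weights convolve as $e^{-(t+s)}(t+s)^n/n! = \sum_{k} e^{-t}\tfrac{t^k}{k!}\,e^{-s}\tfrac{s^{n-k}}{(n-k)!}$ and that $J^{\ast k}\ast J^{\ast(n-k)} = J^{\ast n}$; this is a compound-Poisson process on the group $\mathbb{Z}_p$ with jump law $J$, so its sample paths are piecewise constant, right-continuous, and have jumps as their only discontinuities, which is precisely the asserted path regularity. The main obstacle, and the only place where care is genuinely needed, is the interchange of the integral defining $h$ with the infinite series defining $p_t$ and with the $t$-derivative; these are justified by the uniform (in $x$) geometric domination $\|(J\ast\,\cdot\,)^n u_0\|_\infty \le \|u_0\|_\infty$ together with dominated convergence against the summable Poisson weights, so no serious analytic difficulty arises. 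The boundedness of $\boldsymbol{J}$ on $C(\mathbb{Z}_p,\mathbb{R})$, which follows from the compactness of $\mathbb{Z}_p$ and the normalization of $J$, is what makes this elementary semigroup construction available and lets us avoid the full unbounded-generator machinery of the general theorem in \cite{Zuniga-PHYA}.
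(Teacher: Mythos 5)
Your proposal is correct, but it takes a genuinely different route from the paper: the paper gives no argument at all for this statement, presenting it as a special case of Theorem 3.1 of \cite{Zuniga-PHYA}, i.e., it invokes the general theory of $p$-adic ultradiffusion (master) equations, which is built to handle unbounded nonlocal generators via Fourier-analytic and semigroup (Hille--Yosida type) machinery together with a general construction of the associated Markov process. You instead exploit the special feature of this setting --- $\mathbb{Z}_p$ compact, $J\geq 0$ with $\Vert J\Vert_1=1$, so that $\int_{\mathbb{Z}_p}J(x-y)\,dy=1$ by translation invariance of the Haar measure --- to write $\boldsymbol{J}=(J\ast\cdot)-\mathrm{Id}$ as a \emph{bounded} operator on $C(\mathbb{Z}_p,\mathbb{R})$, reducing the problem to a Banach-space linear ODE whose solution is the norm-convergent exponential, and then you read off the explicit compound-Poisson transition kernel $p_t(x,dy)=e^{-t}\bigl(\delta_x(dy)+\sum_{n\geq1}\frac{t^n}{n!}J^{\ast n}(x-y)\,dy\bigr)$. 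Your verifications are sound: $\Vert J^{\ast n}\Vert_1=1$ by Tonelli, Chapman--Kolmogorov via the binomial identity for Poisson weights plus $J^{\ast k}\ast J^{\ast(n-k)}=J^{\ast n}$, and the path regularity is immediate because the process is literally a Poisson-clocked jump chain with jump law $J$, so paths are piecewise constant and right continuous. What your approach buys: a short, self-contained proof, an explicit kernel (note the atom $e^{-t}\delta_x$, so $p_t(x,\cdot)$ is not absolutely continuous --- still a perfectly good Borel probability measure), and \emph{global} existence, i.e., $T=\infty$ for every $u_0$, which strengthens the paper's statement with $T=T(u_0)$. What the citation to the general theorem buys is scope: it covers kernels and operators (e.g., Vladimirov-type singular kernels, equations on all of $\mathbb{Q}_p$) for which $\boldsymbol{J}$ is unbounded and no power-series exponential exists. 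One small point worth making explicit in your write-up: to know $\boldsymbol{J}$ maps $C(\mathbb{Z}_p,\mathbb{R})$ into itself you should record that $J\ast f$ is continuous, which follows from the uniform continuity of $f$ on the compact ultrametric group together with $\vert(x-z)-(x'-z)\vert_p=\vert x-x'\vert_p$; this is a one-line fix and does not affect the correctness of the argument.
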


\section{$p$-Adic reaction-diffusion CNNs with delay}

In this section, we use a sigmoidal function $f:\mathbb{R}\rightarrow
\mathbb{R}$, which is a continuous function satisfying: (i) $\lim
_{t\rightarrow\pm\infty}f\left(  t\right)  $ exists; (ii) $f$ is globally
Lipschitz function, i.e., $\left\vert f\left(  t_{2}\right)  -f\left(
t_{1}\right)  \right\vert \leq L\left\vert t_{2}-t_{1}\right\vert $, where $L$
is a positive constant; (iii) $f\left(  0\right)  =0$.

Sigmoidal functions of type $f\left(  t\right)  =\frac{1}{2}\left(
|t+1|-|t-1|\right)  $ are widely used in the framework of CNNs. Notice that
$f$ is bounded.

Given $A(x,y)$, $B(x,y)\in C(\mathbb{Z}_{p}\times\mathbb{Z}_{p},\mathbb{R})$,
and $U\left(  x\right)  $, $Z\left(  x\right)  \in C(\mathbb{Z}_{p}%
,\mathbb{R})$, $\lambda\geq0$, a $p$-adic reaction-diffusion CNN, denoted as
$CNN(J,f,\lambda,A,B,U,Z)$, is the dynamical system given by the following
system integrodifferential equations: (i) state equation:%
\begin{gather}
\frac{\partial X(x,t)}{\partial t}=-\lambda X(x,t)+\boldsymbol{J}X(x,t)+%
{\displaystyle\int\limits_{\mathbb{Z}_{p}}}
A(x,y)Y(y,t+\theta)dy\label{CNN_1}\\
+%
{\displaystyle\int\limits_{\mathbb{Z}_{p}}}
B(x,y)U(y)dy+Z(x),\nonumber
\end{gather}
for $x\in\mathbb{Z}_{p},t\geq0$, $\theta\in\left[  -r,0\right]  $, for some
$r>0$, and $X(x,s)=\varphi\left(  x,s\right)  $, $x\in\mathbb{Z}_{p}$, $-r\leq
s\leq0$, and (ii) output equation%
\begin{equation}
Y(x,t+\theta)=\left\{
\begin{array}
[c]{ll}%
f\left(  X\left(  x,t+\theta\right)  \right)  , & x\in\mathbb{Z}_{p},t\geq0\\
& \\
f\left(  \varphi\left(  x,s\right)  \right)  & x\in\mathbb{Z}_{p},-r\leq
s\leq0,
\end{array}
\right.  \label{CNN_2}%
\end{equation}
where $\varphi\left(  x,s\right)  \in C\left(  \mathbb{Z}_{p}\times\left[
-r,0\right]  ,\mathbb{R}\right)  $. We say that $X(x,t)\in\mathbb{R}$ is the
\textit{state of cell} $x$ at the time $t$, $Y(x,t)\in\mathbb{R}$ is
\textit{the output of cell} $x$ at the time $t$. Function $A(x,y)$ is
\textit{the kernel of the feedback operator, while \ function } $B(x,y)$ is
t\textit{he kernel of the feedforward operator}. Function $U$ is \textit{the
input of the $\text{CNN}$}, while function $Z$ is \textit{the threshold of the
$\text{CNN. The parameter }$}$\theta\in\left[  -r,0\right]  $ gives the time delay.

In the case $A(x,y)=A(\left\vert x-y\right\vert _{p})$ and
$B(x,y)=B(\left\vert x-y\right\vert _{p})$, we say that the CNN is space invariant.

\subsection{\label{Section_discretization}Discrete versions}

To obtain a discretization of (\ref{CNN_1})-(\ref{CNN_2}), we fix a time
horizon $\tau\in\left(  0,\infty\right)  $, and take
\begin{equation}
X(x,t)=\sum_{i\in G_{l}}X(i,t)\Omega\left(  p^{l}|x-i|_{p}\right)  ,\text{
}X(i,t)\in C^{1}(\left[  -r,\tau\right]  ,\mathbb{R})\text{ for }i\in G_{l},
\label{Formula_1}%
\end{equation}%
\begin{equation}
Y\left(  x,t\right)  =f\left(  X(x,t)\right)  =\sum_{i\in G_{l}}f\left(
X(i,t)\right)  \Omega\left(  p^{l}|x-i|_{p}\right)  , \label{Formula_1A}%
\end{equation}%
\begin{equation}
J(x)=\sum_{i\in G_{l}}J(i)\Omega\left(  p^{l}|x-i|_{p}\right)  ,
\label{Formula_2}%
\end{equation}%
\begin{equation}
A\left(  x,y\right)  =\sum_{k\in G_{l}}\sum_{j\in G_{l}}A(k,j)\Omega\left(
p^{l}|x-k|_{p}\right)  \Omega\left(  p^{l}|y-j|_{p}\right)  ,\text{ }
\label{Formula_3}%
\end{equation}%
\begin{equation}
B\left(  x,y\right)  =\sum_{k\in G_{l}}\sum_{j\in G_{l}}B(k,j)\Omega\left(
p^{l}|x-k|_{p}\right)  \Omega\left(  p^{l}|y-j|_{p}\right)  ,
\label{Formula_4}%
\end{equation}%
\begin{equation}
Z(x)=\sum_{k\in G_{l}}Z(k)\Omega\left(  p^{l}|x-k|_{p}\right)  ,\text{
\ }U(x)=\sum_{k\in G_{l}}U(k)\Omega\left(  p^{l}|x-k|_{p}\right)  .
\label{Formula_5}%
\end{equation}
Then, by using (ii) For very
\[
\Omega\left(  p^{l}|x-i|_{p}\right)  \ast\Omega\left(  p^{l}|x-j|_{p}\right)
=p^{-l}\Omega\left(  p^{l}|x-\left(  i+j\right)  |_{p}\right)  ,\text{ for
}i,j\in G_{l},\text{,}%
\]
and formulas (\ref{Formula_1})-(\ref{Formula_2}),
\begin{gather*}
-\lambda X(x,t)+\boldsymbol{J}X(x,t)=J\ast X(x,t)-\left(  \lambda+1\right)
X(x,t)\\
=\sum_{i\in G_{l}}\sum_{j\in G_{l}}p^{-l}J(i)X(j,t)\Omega\left(
p^{l}|x-\left(  i+j\right)  |_{p}\right)  -\left(  \lambda+1\right)
\sum_{k\in G_{l}}X(k,t)\Omega\left(  p^{l}|x-k|_{p}\right) \\
=\sum_{k\in G_{l}}\sum_{i\in G_{l}}p^{-l}J(i)X(k-i,t)\Omega\left(
p^{l}|x-k|_{p}\right)  -\left(  \lambda+1\right)  \sum_{k\in G_{l}%
}X(k,t)\Omega\left(  p^{l}|x-k|_{p}\right) \\
=\sum_{k\in G_{l}}\left\{  \sum_{i\in G_{l}}p^{-l}J(i)X(k-i,t)-\left(
\lambda+1\right)  X(k,t)\right\}  \Omega\left(  p^{l}|x-k|_{p}\right)  ,
\end{gather*}
and by using (\ref{Formula_1A}) and (\ref{Formula_3}),%
\begin{gather*}%
{\displaystyle\int\limits_{\mathbb{Z}_{p}}}
A(x,y)Y(y,t+\theta)dy=\\
\sum_{k\in G_{l}}\left\{  \sum_{j\in G_{l}}A(k,j)f\left(  X(j,t+\theta
)\right)
{\displaystyle\int\limits_{\mathbb{Z}_{p}}}
\Omega\left(  p^{l}|y-j|_{p}\right)  dy\right\}  \Omega\left(  p^{l}%
|x-k|_{p}\right) \\
=\sum_{k\in G_{l}}\left\{  \sum_{j\in G_{l}}p^{-l}A(k,j)f\left(
X(j,t+\theta)\right)  \right\}  \Omega\left(  p^{l}|x-k|_{p}\right)  ,
\end{gather*}
and by using (\ref{Formula_4})-(\ref{Formula_5}),
\[%
{\displaystyle\int\limits_{\mathbb{Z}_{p}}}
B(x,y)U(y)dy+Z(x)=\sum_{k\in G_{l}}\left\{  \sum_{i\in G_{l}}p^{-l}%
B(k,i)U(k)+Z(k)\right\}  \Omega\left(  p^{l}|x-k|_{p}\right)  .
\]
Finally, the discretization of (\ref{CNN_1})-(\ref{CNN_2}) takes the following
form:%
\begin{equation}
\left\{
\begin{array}
[c]{l}%
\frac{\partial}{\partial t}X(k,t)=-\left(  \lambda+1\right)  X(k,t)+\sum_{i\in
G_{l}}p^{-l}J(i)X(k-i,t)\\
\\
\text{ \ \ \ \ \ \ \ \ \ \ \ \ \ \ \ }+\sum_{i\in G_{l}}p^{-l}A(k,i)Y\left(
i,t+\theta\right)  \text{\ }+\sum_{i\in G_{l}}p^{-l}B(k,i)U(i)+Z(k)\\
\\
Y\left(  k,t+\theta\right)  =f\left(  X(k,t+\theta)\right)  ,
\end{array}
\right.  \label{System_1}%
\end{equation}
for $k\in G_{l}$.

Using matrix notation:%
\[
\boldsymbol{X}(t)=\left[  X(k,t)\right]  _{k\in G_{l}}\text{, \ }%
\boldsymbol{Y}(t+\theta)=\left[  Y(k,t+\theta)\right]  _{k\in G_{l}}\text{,
}\boldsymbol{J}=\left[  J(k)\right]  _{k\in G_{l}}\text{,}%
\]%
\[
\text{ }\boldsymbol{U}=\left[  U(k)\right]  _{k\in G_{l}}\text{,\ }%
\boldsymbol{Z}=\left[  Z(k)\right]  _{k\in G_{l}}\text{, }\boldsymbol{A}%
=\left[  A(k,i)\right]  _{k,i\in G_{l}}\text{, \ }\boldsymbol{B}=\left[
B(k,i)\right]  _{k,i\in G_{l}},
\]
the system (\ref{System_1}) can be rewritten as%
\[
\left\{
\begin{array}
[c]{l}%
\frac{\partial}{\partial t}\boldsymbol{X}(t)=-\left(  \lambda+1\right)
\boldsymbol{X}(t)+p^{-l}\boldsymbol{J}\ast\boldsymbol{X}(t)+p^{-l}%
\boldsymbol{AY}(t+\theta)+\boldsymbol{BU}+\boldsymbol{Z}\\
\\
\boldsymbol{Y}(t+\theta)=\left[  f\left(  X(k,t+\theta)\right)  \right]
_{k\in G_{l}}.
\end{array}
\right.
\]

\subsection{A class of reaction-diffusion\ equations with delay}

Given a Banach space $\left(  \mathcal{X},\left\Vert \cdot\right\Vert
_{\mathcal{X}}\right)  $ and $r>0$, we denote by $C(\left[  -r,0\right]
,\mathcal{X})$, the Banach space of $\mathcal{X}$-valued functions on $\left[
-r,0\right]  $ with the supremum norm:%
\begin{equation}
\left\Vert f\right\Vert =\sup_{s\in\left[  -r,0\right]  }\left\Vert f\left(
s\right)  \right\Vert _{\mathcal{X}}. \label{norm_1}%
\end{equation}
In the case $\mathcal{X}=\mathbb{R}$, \ we denote\ norm (\ref{norm_1}) as
$\left\Vert f\right\Vert _{\infty}$. We set $C(\mathbb{Z}_{p},\mathbb{R})$ for
the Banach space of real valued functions on $\mathbb{Z}_{p}$ with the
supremum norm:%
\[
\left\Vert h\right\Vert _{\infty}=\sup_{x\in\mathbb{Z}_{p}}\left\vert h\left(
x\right)  \right\vert .
\]
The results presented in this section are valid for arbitrary Banach spaces
$\left(  \mathcal{X},\left\Vert \cdot\right\Vert _{\mathcal{X}}\right)  $.
However, we apply the results in two cases:
\[
\left(  \mathcal{X},\left\Vert \cdot\right\Vert _{\mathcal{X}}\right)
=\left(  C(\mathbb{Z}_{p},\mathbb{R}),\left\Vert \cdot\right\Vert _{\infty
}\right)  \text{, \ \ }\left(  \mathcal{D}^{l}\left(  \mathbb{Z}_{p}\right)
,\left\Vert \cdot\right\Vert _{\infty}\right)  =\left(  \mathcal{D}^{l}\left(
\mathbb{Z}_{p},\mathbb{R}\right)  ,\left\Vert \cdot\right\Vert _{\infty
}\right)  \simeq\left(  \mathbb{R}^{p^{l}},\left\Vert \cdot\right\Vert
\right)  .
\]
Notice that
\[
\left(  \mathcal{D}^{l}\left(  \mathbb{Z}_{p},\mathbb{R}\right)  ,\left\Vert
\cdot\right\Vert _{\infty}\right)  \hookrightarrow\left(  C(\mathbb{Z}%
_{p},\mathbb{R}),\left\Vert \cdot\right\Vert _{\infty}\right)  ,
\]
where the arrow denotes a continuous embedding.

Along this section,\ we work with `delayed functions' of type $u\left(
x,t+\theta\right)  $, where $u$ is a continuous function, $x\in\mathbb{Z}_{p}%
$, $t\in\left[  a,b\right]  $, with $a<b$, and $\theta\in\left[  -r,0\right]
$. We identify this function with an element of $C(\left[  -r,0\right]
,C(\mathbb{Z}_{p}\times\left[  a-r,b\right]  ,\mathbb{R}))$ parametrized by
$x\in\mathbb{Z}_{p}$ given by%
\[
u_{t}\left(  x\right)  (\theta)=u\left(  x,t+\theta\right)  \text{ for }%
\theta\in\left[  -r,0\right]  \text{, }t\geq0\text{.}%
\]
Then, $u:\left[  a-r,b\right]  \rightarrow C(\mathbb{Z}_{p},\mathbb{R})$ is a
continuous function.

We denote by $\boldsymbol{D}$ the infinitesimal generator of a $C_{0}%
$-semigroup $\left\{  \mathcal{T}_{t}\right\}  _{t\geq0}$ on $C(\mathbb{Z}%
_{p},\mathbb{R})$.

We now identify $C(\mathbb{Z}_{p}\times\left[  -r,0\right]  ,\mathbb{R}%
)\ $with $\mathcal{C}:=C\left(  \left[  -r,0\right]  ,C(\mathbb{Z}%
_{p},\mathbb{R})\right)  $ and set%
\[%
\begin{array}
[c]{cccc}%
F: & C\left(  \left[  -r,0\right]  ,C(\mathbb{Z}_{p},\mathbb{R})\right)  &
\rightarrow & C(\mathbb{Z}_{p},\mathbb{R})\\
&  &  & \\
& \phi & \rightarrow & F(x,\phi\left(  x,\cdot\right)  ).
\end{array}
\]
We assume that $F$ is globally Lipschitz in $C(\mathbb{Z}_{p},\mathbb{R)}$,
that is,
\begin{equation}
\left\Vert F(\phi)-F(\psi)\right\Vert _{C(\mathbb{Z}_{p},\mathbb{R)}}\leq
L_{0}\left\Vert \phi-\psi\right\Vert _{\mathcal{C}},
\label{Condition_Lipschitz_2}%
\end{equation}
for $\phi,\psi\in\mathcal{C}$.

We now consider the following system of reaction-diffusion equations with
delay on the $p$-adic unit ball:%
\[
\left\{
\begin{array}
[c]{l}%
\frac{\partial u(x,t)}{\partial t}=\boldsymbol{D}u(x,t)+F(x,u_{t}(x))\text{,
}x\in\mathbb{Z}_{p}\text{, }t>0\\
\\
u(x,s)=\varphi\left(  x,s\right)  \text{, }x\in\mathbb{Z}_{p}\text{, }-r\leq
s\leq0,
\end{array}
\right.
\]
where $\varphi\left(  x,s\right)  \in C(\mathbb{Z}_{p}\times\left[
-r,0\right]  ,\mathbb{R})$. Since the boundary\ of the unit ball
$\mathbb{Z}_{p}$ is the empty set, we do not need boundary conditions.

\begin{proposition}
\label{Proposition_1}Assuming (\ref{Condition_Lipschitz_2}), and that
$\boldsymbol{D}$ is the infinitesimal generator of a $C_{0}$-semigroup
$\left\{  \mathcal{T}_{t}\right\}  _{t\geq0}$ on $C(\mathbb{Z}_{p}%
,\mathbb{R})$. Then, for $t\in\left[  0,\tau\right]  $, with $\tau>0$
arbitrary, and $\varphi\in\mathcal{C}$, there exists a unique continuous
function $u:\left[  -r,\tau\right]  \rightarrow C(\mathbb{Z}_{p},\mathbb{R})$
which is a solution of the following initial value problem:%
\begin{equation}
\left\{
\begin{array}
[c]{ll}%
u(t)=\mathcal{T}\left(  t\right)  \varphi\left(  \cdot,0\right)  +%
{\displaystyle\int\nolimits_{0}^{t}}
\mathcal{T}\left(  t-s\right)  F(u_{s})ds, & 0\leq t\leq\tau\\
& \\
u(0)=\varphi\left(  \cdot,0\right)  . &
\end{array}
\right.  \label{Duhamel_form}%
\end{equation}

\end{proposition}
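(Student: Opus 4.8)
The plan is to recast the integral equation (\ref{Duhamel_form}) as a fixed-point problem and apply the Banach contraction principle in its iterated-power form. First I would fix the Banach space $E:=C([-r,\tau],C(\mathbb{Z}_{p},\mathbb{R}))$ with the norm $\Vert u\Vert_{E}=\sup_{t\in[-r,\tau]}\Vert u(t)\Vert_{\infty}$, and inside it the closed affine subset $E_{\varphi}$ of those $u$ satisfying $u(s)=\varphi(\cdot,s)$ for $s\in[-r,0]$; this set is a complete metric space. On $E_{\varphi}$ I define the solution operator $\Phi$ by
\[
(\Phi u)(t)=\mathcal{T}(t)\varphi\left(  \cdot,0\right)  +\int_{0}^{t}\mathcal{T}(t-s)F(u_{s})\,ds,\quad 0\leq t\leq\tau,
\]
together with $(\Phi u)(s)=\varphi(\cdot,s)$ for $s\in[-r,0]$, where $u_{s}\in\mathcal{C}$ is the history segment $u_{s}(\theta)=u(s+\theta)$. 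A fixed point of $\Phi$ in $E_{\varphi}$ is exactly a continuous solution of (\ref{Duhamel_form}).

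Next I would check that $\Phi$ maps $E_{\varphi}$ into itself. Since $u$ is uniformly continuous on the compact interval $[-r,\tau]$, the history map $s\mapsto u_{s}$ is continuous from $[0,\tau]$ into $\mathcal{C}$; composing with the globally Lipschitz $F$ shows $s\mapsto F(u_{s})$ is continuous, so the integral exists as a (Riemann/Bochner) integral of a continuous $C(\mathbb{Z}_{p},\mathbb{R})$-valued function. Strong continuity of $\{\mathcal{T}_{t}\}_{t\geq0}$, together with continuity of the integrand, then gives continuity of $t\mapsto(\Phi u)(t)$ on $[0,\tau]$; the history part matches $\varphi$ by construction, so $\Phi u\in E_{\varphi}$.

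The contraction estimate is where the delay must be accounted for. Let $M:=\sup_{t\in[0,\tau]}\Vert\mathcal{T}(t)\Vert$, which is finite because a $C_{0}$-semigroup is bounded on compact time intervals. For $u,v\in E_{\varphi}$, since $u$ and $v$ coincide with $\varphi$ on $[-r,0]$, one has $\Vert u_{s}-v_{s}\Vert_{\mathcal{C}}\leq\sup_{0\leq\xi\leq s}\Vert u(\xi)-v(\xi)\Vert_{\infty}$, so the history contributes nothing before time zero. Using (\ref{Condition_Lipschitz_2}) and the semigroup bound,
\[
\sup_{0\leq\xi\leq t}\Vert(\Phi u)(\xi)-(\Phi v)(\xi)\Vert_{\infty}\leq ML_{0}\int_{0}^{t}\sup_{0\leq\eta\leq s}\Vert u(\eta)-v(\eta)\Vert_{\infty}\,ds.
\]
Iterating this inequality $n$ times produces the factorial decay
\[
\Vert\Phi^{n}u-\Phi^{n}v\Vert_{E}\leq\frac{(ML_{0}\tau)^{n}}{n!}\Vert u-v\Vert_{E}.
\]
Choosing $n$ large enough that $(ML_{0}\tau)^{n}/n!<1$ makes $\Phi^{n}$ a contraction on $E_{\varphi}$; by the standard corollary of the contraction mapping principle (a self-map whose $n$-th iterate is a contraction has a unique fixed point) $\Phi$ has a unique fixed point $u$, which is the asserted solution on $[-r,\tau]$. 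Equivalently, one could endow $E$ with a Bielecki-type weighted norm $\sup_{t}e^{-\lambda t}\Vert u(t)\Vert_{\infty}$ and make $\Phi$ itself a contraction for $\lambda$ large, valid for any fixed $\tau$.

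The main obstacle is not the contraction step, which is routine once set up, but the functional-analytic bookkeeping underlying it: verifying that $t\mapsto u_{t}$ is continuous into $\mathcal{C}$ and that the convolution term $t\mapsto\int_{0}^{t}\mathcal{T}(t-s)F(u_{s})\,ds$ depends continuously on $t$. Both rest on uniform continuity on compact intervals and on the merely \emph{strong} (not uniform) continuity of $\{\mathcal{T}_{t}\}$, so these must be argued carefully in the supremum norm rather than by naive operator-norm estimates, and the splitting of the integral near $t=s$ should use strong continuity of $s\mapsto\mathcal{T}(s)g$ uniformly for $g$ in the compact range of the continuous map $s\mapsto F(u_{s})$.
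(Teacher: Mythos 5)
Your proof is correct, and it is essentially the paper's proof written out in full: the paper disposes of Proposition \ref{Proposition_1} with a one-line citation to Theorem 1.1 of Wu's book on partial functional-differential equations, and your Picard/contraction argument on the history-constrained space $E_{\varphi}$ (with the factorial bound on $\Phi^{n}$, or equivalently a Bielecki norm) is precisely the standard argument underlying that cited theorem. All the steps you flag as delicate --- continuity of $s\mapsto u_{s}$ into $\mathcal{C}$, the reduction $\Vert u_{s}-v_{s}\Vert_{\mathcal{C}}\leq\sup_{0\leq\xi\leq s}\Vert u(\xi)-v(\xi)\Vert_{\infty}$ using agreement of histories on $[-r,0]$, and continuity of the Duhamel term under merely strong continuity of $\{\mathcal{T}_{t}\}_{t\geq0}$ --- are handled correctly.
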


\begin{proof}
See \cite[Theorem 1.1]{Wu}.
\end{proof}

The\ solution of (\ref{Duhamel_form}) is called a \textit{mild solution}.

We set%
\[
\mathcal{C}_{l}:=C\left(  \left[  -r,0\right]  ,\mathcal{D}^{l}\left(
\mathbb{Z}_{p},\mathbb{R}\right)  \right)  \hookrightarrow\mathcal{C}\text{.}%
\]
Then
\[
\mathcal{C}_{l}=\left\{  f\in\mathcal{C};f\left(  x,\theta\right)  =\sum_{k\in
G_{l}}f\left(  k,\theta\right)  \Omega\left(  p^{l}|x-k|_{p}\right)  ,f\left(
k,\cdot\right)  \in C\left(  \left[  -r,0\right]  ,\mathbb{R}\right)
\right\}  .
\]

\begin{proposition}
\label{Proposition_1A}Assuming\ that%
\[%
\begin{array}
[c]{cccc}%
F: & \mathcal{C}_{l} & \rightarrow & \mathcal{D}^{l}\left(  \mathbb{Z}%
_{p},\mathbb{R}\right) \\
&  &  & \\
& \phi & \rightarrow & f\left(  x,\phi\left(  x,\cdot\right)  \right)  .
\end{array}
\]
$F$ is globally Lipschitz in $\mathcal{D}^{l}\left(  \mathbb{Z}_{p}%
,\mathbb{R}\right)  $, see (\ref{Condition_Lipschitz_2}), and that
$\boldsymbol{D}$ is the infinitesimal generator of a $C_{0}$-semigroup
$\left\{  \mathcal{T}_{t}\right\}  _{t\geq0}$ on $\mathcal{D}^{l}\left(
\mathbb{Z}_{p},\mathbb{R}\right)  $. Then, for $t\in\left[  0,\tau\right]  $,
with $\tau>0$ arbitrary, and $\varphi\left(  \cdot,0\right)  \in
\mathcal{C}_{l}$, there exists a unique continuous function $u:\left[
-r,\tau\right]  \rightarrow\mathcal{D}^{l}\left(  \mathbb{Z}_{p}%
,\mathbb{R}\right)  $ which is a solution of the initial value problem
(\ref{Duhamel_form}).
\end{proposition}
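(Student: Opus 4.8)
The plan is to reduce Proposition~\ref{Proposition_1A} to Proposition~\ref{Proposition_1}, which we have already established (via \cite[Theorem 1.1]{Wu}). The key observation is that $\mathcal{D}^{l}(\mathbb{Z}_{p},\mathbb{R})$ is a \emph{closed} finite-dimensional subspace of $C(\mathbb{Z}_{p},\mathbb{R})$, isomorphic as a Banach space to $(\mathbb{R}^{p^{l}},\left\Vert \cdot\right\Vert)$, and that $\mathcal{C}_{l}\hookrightarrow\mathcal{C}$ is the corresponding closed subspace of delay-histories. So the hypotheses of Proposition~\ref{Proposition_1A} are exactly the hypotheses of Proposition~\ref{Proposition_1}, but with the ambient Banach space restricted from $C(\mathbb{Z}_{p},\mathbb{R})$ to the finite-dimensional space $\mathcal{D}^{l}(\mathbb{Z}_{p},\mathbb{R})$. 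Since the statement of Proposition~\ref{Proposition_1} is valid for an \emph{arbitrary} Banach space $\mathcal{X}$ (as emphasized in the text preceding it), one could in principle just invoke it verbatim with $\mathcal{X}=\mathcal{D}^{l}(\mathbb{Z}_{p},\mathbb{R})$ and be done. To make the argument self-contained, however, I would instead show that the unique $C(\mathbb{Z}_{p},\mathbb{R})$-valued mild solution produced by Proposition~\ref{Proposition_1} actually takes values in the subspace $\mathcal{D}^{l}(\mathbb{Z}_{p},\mathbb{R})$.

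First I would verify that the subspace $\mathcal{D}^{l}(\mathbb{Z}_{p},\mathbb{R})$ is invariant under the semigroup $\{\mathcal{T}_{t}\}_{t\geq0}$ and under the nonlinearity $F$. Invariance under $F$ is immediate from the standing hypothesis, since by assumption $F$ maps $\mathcal{C}_{l}$ into $\mathcal{D}^{l}(\mathbb{Z}_{p},\mathbb{R})$. Invariance under $\{\mathcal{T}_{t}\}$ is the content of assuming that $\boldsymbol{D}$ generates a $C_{0}$-semigroup \emph{on} $\mathcal{D}^{l}(\mathbb{Z}_{p},\mathbb{R})$; concretely, since $\mathcal{D}^{l}(\mathbb{Z}_{p},\mathbb{R})\simeq\mathbb{R}^{p^{l}}$ is finite-dimensional, the restriction of $\boldsymbol{D}$ to it is a bounded linear operator, i.e.\ a matrix, and $\mathcal{T}_{t}|_{\mathcal{D}^{l}}=e^{t\boldsymbol{D}}$ is given by the usual matrix exponential, which obviously preserves the subspace. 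With these two invariances in hand, the Duhamel integrand $\mathcal{T}(t-s)F(u_{s})$ stays in $\mathcal{D}^{l}(\mathbb{Z}_{p},\mathbb{R})$ whenever the history $u_{s}$ lies in $\mathcal{C}_{l}$.

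Next I would run the fixed-point argument of Proposition~\ref{Proposition_1} \emph{inside} the closed subspace. Define the solution map $\Phi$ on the complete metric space $C([-r,\tau],\mathcal{D}^{l}(\mathbb{Z}_{p},\mathbb{R}))$ by the right-hand side of (\ref{Duhamel_form}), with $\varphi(\cdot,0)\in\mathcal{C}_{l}$. The invariances of the previous paragraph guarantee that $\Phi$ maps this space into itself. The global Lipschitz condition (\ref{Condition_Lipschitz_2}) on $F$, together with the bound $\left\Vert \mathcal{T}_{t-s}\right\Vert \leq M e^{\omega(t-s)}$ for the $C_{0}$-semigroup, gives the standard contraction estimate after restricting to a short enough time interval (or, equivalently, after passing to a weighted supremum norm of Bielecki type on $[0,\tau]$); iterating this over finitely many subintervals covers all of $[0,\tau]$ for arbitrary $\tau>0$. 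The Banach fixed-point theorem then yields a unique continuous $u:[-r,\tau]\rightarrow\mathcal{D}^{l}(\mathbb{Z}_{p},\mathbb{R})$ solving (\ref{Duhamel_form}).

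The only genuinely substantive point — everything else is a routine transcription of the proof of Proposition~\ref{Proposition_1} — is the compatibility of the two notions of solution: because $\mathcal{D}^{l}(\mathbb{Z}_{p},\mathbb{R})$ embeds continuously (indeed isometrically, in the $\left\Vert \cdot\right\Vert_{\infty}$ norm) into $C(\mathbb{Z}_{p},\mathbb{R})$, the $\mathcal{D}^{l}$-valued mild solution just constructed is \emph{a fortiori} a $C(\mathbb{Z}_{p},\mathbb{R})$-valued mild solution. By the uniqueness clause of Proposition~\ref{Proposition_1}, it coincides with the unique solution in the larger space, so no solutions are lost or gained by restricting. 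Hence existence and uniqueness in $\mathcal{D}^{l}(\mathbb{Z}_{p},\mathbb{R})$ follow. I expect the main obstacle to be purely bookkeeping: one must check that the $C_{0}$-semigroup hypothesis on $\mathcal{D}^{l}(\mathbb{Z}_{p},\mathbb{R})$ is genuinely the restriction of the ambient semigroup (so that $\mathcal{T}_{t}\varphi(\cdot,0)$ is unambiguous), which is where the finite-dimensionality of $\mathcal{D}^{l}(\mathbb{Z}_{p},\mathbb{R})\simeq\mathbb{R}^{p^{l}}$ and the explicit matrix form of $\boldsymbol{J}\ast(\cdot)-(\lambda+1)(\cdot)$ from Subsection~\ref{Section_discretization} make the verification transparent.
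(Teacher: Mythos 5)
Your proposal is correct and takes essentially the same route as the paper: the paper's entire proof is a citation of Wu's Theorem 1.1 applied with the Banach space $\mathcal{X}=\mathcal{D}^{l}\left(\mathbb{Z}_{p},\mathbb{R}\right)$, which is exactly the ``invoke it verbatim'' observation you make at the outset. The remainder of your write-up (subspace invariance under $F$ and $\{\mathcal{T}_{t}\}_{t\geq0}$, the contraction argument in $C\left(\left[-r,\tau\right],\mathcal{D}^{l}\left(\mathbb{Z}_{p},\mathbb{R}\right)\right)$, and the uniqueness-based compatibility with Proposition \ref{Proposition_1}) is a correct self-contained unpacking of that citation, not a different approach.
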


\begin{proof}
See \cite[Theorem 1.1]{Wu}.
\end{proof}

\subsection{The Cauchy Problem}

In this section we study the Cauchy problem (\ref{CNN_1})-(\ref{CNN_2}).

\begin{lemma}
\label{Lemma_1}Using the notation $\mathcal{C}=$ $C\left(  \mathbb{Z}%
_{p}\times\left[  -r,0\right]  ,\mathbb{R}\right)  $ as before. The mapping
\[%
\begin{array}
[c]{cccc}%
F: & \mathcal{C} & \rightarrow & \mathcal{C}\\
&  &  & \\
& \phi\left(  x,\theta\right)  & \rightarrow &
{\displaystyle\int\limits_{\mathbb{Z}_{p}}}
A(x,y)f\left(  \phi\left(  y,\theta\right)  \right)  dy+%
{\displaystyle\int\limits_{\mathbb{Z}_{p}}}
B(x,y)U(y)dy+Z(x).
\end{array}
\]
is well-defined \ and continuous, and satisfies%
\[
\left\vert F(\phi)-F(\psi)\right\vert \leq\left\Vert A\right\Vert _{\infty
}L\left\Vert \phi-\psi\right\Vert _{\infty},
\]
for any $\phi,\psi\in\mathcal{C}$.
\end{lemma}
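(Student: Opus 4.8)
The plan is to exploit the compactness of $\mathbb{Z}_p$ together with the normalization $\int_{\mathbb{Z}_p}dy=1$, which collapses every integral estimate into a supremum bound. First I would record the standing consequences of the hypotheses: since $A,B\in C(\mathbb{Z}_p\times\mathbb{Z}_p,\mathbb{R})$ are continuous on the compact product $\mathbb{Z}_p\times\mathbb{Z}_p$, they are bounded and uniformly continuous, so $\|A\|_\infty,\|B\|_\infty<\infty$; likewise $U,Z$ are bounded, and the sigmoidal $f$ is bounded with $\|f\|_\infty<\infty$ and Lipschitz constant $L$. Because $\phi$ is continuous and $f$ is continuous, $y\mapsto f(\phi(y,\theta))$ is continuous, hence measurable and bounded by $\|f\|_\infty$, so each of the three terms defining $F(\phi)(x,\theta)$ is a finite real number for every $(x,\theta)$. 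Thus $F(\phi)$ is at least well-defined pointwise.

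Second, I would verify that $F(\phi)\in\mathcal{C}$, that is, that $F(\phi)$ is jointly continuous on $\mathbb{Z}_p\times[-r,0]$; this is the step I expect to demand the most care. I would estimate $F(\phi)(x,\theta)-F(\phi)(x',\theta')$ by inserting the intermediate point $(x',\theta)$. For the change in the spatial variable at fixed $\theta$, I would use the uniform continuity of $A$ and $B$ on the compact product, the boundedness of $f\circ\phi$ and of $U$, and the continuity of $Z$, to bound $|F(\phi)(x,\theta)-F(\phi)(x',\theta)|$ uniformly in $\theta$. For the change in the delay variable at fixed spatial point, I would note that the feedforward term and the threshold do not depend on $\theta$ and drop out, then use the Lipschitz property of $f$ and the uniform continuity of $\phi$ on the compact set $\mathbb{Z}_p\times[-r,0]$ to make $\int_{\mathbb{Z}_p}|\phi(y,\theta)-\phi(y,\theta')|\,dy$ small uniformly in $x'$. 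Combining the two uniform estimates yields joint continuity, hence $F(\phi)\in\mathcal{C}$.

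Third, for the Lipschitz bound I would subtract $F(\phi)$ and $F(\psi)$ pointwise at a fixed $(x,\theta)$; the feedforward term $\int_{\mathbb{Z}_p}B(x,y)U(y)\,dy$ and the threshold $Z(x)$ cancel, leaving
\[
F(\phi)(x,\theta)-F(\psi)(x,\theta)=\int_{\mathbb{Z}_p}A(x,y)\left[f(\phi(y,\theta))-f(\psi(y,\theta))\right]dy.
\]
I would then bound the integrand by $\|A\|_\infty\,L\,|\phi(y,\theta)-\psi(y,\theta)|\leq\|A\|_\infty\,L\,\|\phi-\psi\|_\infty$ and integrate against the normalized Haar measure, whose total mass is $1$, to obtain $|F(\phi)(x,\theta)-F(\psi)(x,\theta)|\leq\|A\|_\infty\,L\,\|\phi-\psi\|_\infty$ pointwise. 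Taking the supremum over $(x,\theta)$ promotes this to the stated sup-norm estimate, and continuity of $F$ as a map $\mathcal{C}\to\mathcal{C}$ then follows immediately from the Lipschitz bound. The only genuine obstacle is the joint continuity of $F(\phi)$ in the second step; the Lipschitz estimate itself is a direct computation.
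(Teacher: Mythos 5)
Your proposal is correct and takes essentially the same route as the paper's proof: the Lipschitz estimate---cancellation of the $B$- and $Z$-terms, the pointwise bound $\left\Vert A\right\Vert_{\infty}L\left\vert \phi(y,\theta)-\psi(y,\theta)\right\vert$, and integration against the unit-mass Haar measure followed by a supremum---is exactly the paper's computation. The only small divergence is in checking $F(\phi)\in\mathcal{C}$: the paper dominates the integrands by $\left\Vert A\right\Vert_{\infty}\left\Vert f\right\Vert_{\infty}\Omega(\left\vert y\right\vert_{p})$ and $\left\Vert B\right\Vert_{\infty}\left\Vert U\right\Vert_{\infty}\Omega(\left\vert y\right\vert_{p})$ and invokes the dominated convergence theorem, whereas you argue directly from uniform continuity of $A$, $B$, $Z$, and $\phi$ on the compact sets $\mathbb{Z}_{p}\times\mathbb{Z}_{p}$ and $\mathbb{Z}_{p}\times\left[-r,0\right]$ together with the Lipschitz property of $f$; your version is marginally more explicit, since it produces estimates uniform in the other variable and so gives joint continuity cleanly, but the two arguments are interchangeable here.
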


\begin{proof}
By using that $\left\vert A(x,y)f\left(  \phi\left(  y,\theta\right)  \right)
\right\vert \leq\left\Vert A\right\Vert _{\infty}\left\Vert f\right\Vert
_{\infty}\Omega\left(  \left\vert y\right\vert _{p}\right)  $, $\left\vert
B(x,y)U(y)\right\vert $ $\leq\left\Vert B\right\Vert _{\infty}\left\Vert
U\right\Vert _{\infty}\Omega\left(  \left\vert y\right\vert _{p}\right)  $,
and the dominated convergence theorem, we conclude that $\left(  F\left(
\phi\right)  \right)  \left(  x,\theta\right)  $ is a continuous function.
Now,%
\begin{gather*}
\left\vert F(\phi)-F(\psi)\right\vert \leq%
{\displaystyle\int\limits_{\mathbb{Z}_{p}}}
\left\vert A(x,y\right\vert )\left\vert f\left(  \phi\left(  y,\theta\right)
\right)  -f\left(  \psi\left(  y,\theta\right)  \right)  \right\vert dy\\
\leq\left\Vert A\right\Vert _{\infty}L%
{\displaystyle\int\limits_{\mathbb{Z}_{p}}}
\left\vert \left(  \phi\left(  y,\theta\right)  \right)  -f\left(  \psi\left(
y,\theta\right)  \right)  \right\vert dy\\
\leq\left\Vert A\right\Vert _{\infty}L\sup_{y\in\mathbb{Z}_{p}}\sup_{\theta
\in\left[  -r,0\right]  }\left\vert \left(  \phi\left(  y,\theta\right)
\right)  -f\left(  \psi\left(  y,\theta\right)  \right)  \right\vert
=\left\Vert A\right\Vert _{\infty}L\left\Vert \phi-\psi\right\Vert _{\infty}.
\end{gather*}

\end{proof}

\begin{lemma}
\label{Lemma_2}Assuming that $A(x,y),B(x,y)\in\mathcal{D}^{l}\left(
\mathbb{Z}_{p}\times\mathbb{Z}_{p},\mathbb{R}\right)  $, $U(x),Z(x)\in
\mathcal{D}^{l}\left(  \mathbb{Z}_{p},\mathbb{R}\right)  $, and using the
notation $\mathcal{C}_{l}=$ $C\left(  \left[  -r,0\right]  ,\mathcal{D}%
^{l}\left(  \mathbb{Z}_{p},\mathbb{R}\right)  \right)  $ as before. The
mapping
\[%
\begin{array}
[c]{cccc}%
F: & \mathcal{C}_{l} & \rightarrow & \mathcal{D}^{l}\left(  \mathbb{Z}%
_{p}\times\left[  -r,0\right]  ,\mathbb{R}\right) \\
&  &  & \\
& \phi\left(  x,\theta\right)  & \rightarrow &
{\displaystyle\int\limits_{\mathbb{Z}_{p}}}
A(x,y)f\left(  \phi\left(  y,\theta\right)  \right)  dy+%
{\displaystyle\int\limits_{\mathbb{Z}_{p}}}
B(x,y)U(y)dy+Z(x).
\end{array}
\]
is well-defined \ and continuous, and satisfies%
\[
\left\vert F(\phi)-F(\psi)\right\vert \leq\left\Vert A\right\Vert _{\infty
}L\left\Vert \phi-\psi\right\Vert _{\infty},
\]
for any $\phi,\psi\in\mathcal{C}_{l}$.
\end{lemma}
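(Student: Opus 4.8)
The plan is to treat this as the finite-dimensional counterpart of Lemma \ref{Lemma_1}: the Lipschitz estimate is word-for-word the same, so the only genuinely new point to establish is that $F$ maps $\mathcal{C}_l$ into $\mathcal{D}^l(\mathbb{Z}_p \times [-r,0], \mathbb{R})$ and not merely into $\mathcal{C}$. Accordingly I would split the argument into two independent parts: (a) verifying that $F(\phi)$ is locally constant of index $l$ in the space variable while remaining continuous in $\theta$; and (b) re-deriving the Lipschitz bound, which I expect to be immediate.

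For part (a), I would substitute the $\mathcal{D}^l$-expansions of the data directly into the definition of $F$. Writing $\phi(x,\theta) = \sum_{k \in G_l} \phi(k,\theta) \Omega(p^l|x-k|_p)$ and using that $f$ acts leafwise gives $f(\phi(y,\theta)) = \sum_{k} f(\phi(k,\theta)) \Omega(p^l|y-k|_p)$; inserting the analogous expansion $A(x,y) = \sum_{m,j} A(m,j) \Omega(p^l|x-m|_p) \Omega(p^l|y-j|_p)$, the $y$-integral collapses by two elementary facts about the indicators attached to $G_l$: the balls $j + p^l\mathbb{Z}_p$ are pairwise disjoint, so $\Omega(p^l|y-j|_p)\Omega(p^l|y-k|_p) = \delta_{jk}\Omega(p^l|y-k|_p)$, and $\int_{\mathbb{Z}_p}\Omega(p^l|y-j|_p)\,dy = p^{-l}$. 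This produces
\[
\int_{\mathbb{Z}_p} A(x,y) f(\phi(y,\theta))\,dy = p^{-l}\sum_{m \in G_l}\Bigl(\sum_{j \in G_l} A(m,j) f(\phi(j,\theta))\Bigr)\Omega(p^l|x-m|_p),
\]
which is manifestly in $\mathcal{D}^l$ in the variable $x$, with coefficients that depend continuously on $\theta$ because each $\phi(j,\cdot)$ is continuous and $f$ is continuous. The remaining two terms $\int_{\mathbb{Z}_p} B(x,y)U(y)\,dy$ and $Z(x)$ lie in $\mathcal{D}^l$ by the same collapse and by hypothesis, and are independent of $\theta$; hence $F(\phi) \in C([-r,0],\mathcal{D}^l(\mathbb{Z}_p,\mathbb{R}))$, as required.

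For part (b), since $\mathcal{D}^l(\mathbb{Z}_p,\mathbb{R}) \hookrightarrow C(\mathbb{Z}_p,\mathbb{R})$ continuously, the operator here is just the restriction of the one in Lemma \ref{Lemma_1}, so the estimate $|F(\phi)-F(\psi)| \leq \|A\|_\infty L \|\phi-\psi\|_\infty$ is inherited verbatim; alternatively one reads it off from the finite-sum formula above, bounding $p^{-l}\sum_{j\in G_l}|A(m,j)|\,|f(\phi(j,\theta))-f(\psi(j,\theta))|$ by $p^{-l}\cdot p^l\cdot\|A\|_\infty L\|\phi-\psi\|_\infty$. Continuity of $F$ is then a formal consequence of the Lipschitz bound.

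The only place demanding care is part (a), and even there the difficulty is purely bookkeeping: one must keep the two families of leaf-indices ($m$ for $x$, and $j,k$ for $y$) separate and apply the orthogonality identity to the correct variable. I do not anticipate a genuine obstacle, since the lemma is merely the $\mathcal{D}^l$-analogue of Lemma \ref{Lemma_1}, and finite-dimensionality only simplifies matters by turning the integral operator into a finite weighted sum.
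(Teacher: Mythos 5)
Your proposal is correct and follows the paper's own route: the paper proves Lemma \ref{Lemma_2} by noting the Lipschitz estimate is identical to that of Lemma \ref{Lemma_1} and invoking the discretization formulas of Section \ref{Section_discretization} to see that the range lands in $\mathcal{D}^{l}$, which is exactly the collapse computation $\int_{\mathbb{Z}_p}A(x,y)f(\phi(y,\theta))\,dy=p^{-l}\sum_{m\in G_l}\bigl(\sum_{j\in G_l}A(m,j)f(\phi(j,\theta))\bigr)\Omega\left(p^{l}|x-m|_{p}\right)$ you carry out explicitly. Your version simply fills in the bookkeeping (orthogonality of the indicators, $\int\Omega\left(p^{l}|y-j|_{p}\right)dy=p^{-l}$, continuity of the coefficients in $\theta$) that the paper leaves as a reference.
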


\begin{proof}
The proof is completely similar to the one given for Lemma \ref{Lemma_1}. To
show that the range of map $F$ is contained in $\mathcal{D}^{l}\left(
\mathbb{Z}_{p}\times\left[  -r,0\right]  ,\mathbb{R}\right)  $, we use the
discretization formulas given in Section \ref{Section_discretization}.
\end{proof}

\begin{theorem}
\label{Theorem_2}With the notation of Propositions \ref{Proposition_1}%
-\ref{Proposition_1A}, taking $\mathcal{X}$ to be $C(\mathbb{Z}_{p}%
,\mathbb{R})$ or $\mathcal{D}^{l}\left(  \mathbb{Z}_{p},\mathbb{R}\right)  $,
and assuming that $\boldsymbol{D}$ is the infinitesimal generator of a $C_{0}%
$-semigroup $\left\{  \mathcal{T}_{t}\right\}  _{t\geq0}$ on $X$. We take
$\varphi\in\mathcal{C}$, if $\mathcal{X}=C(\mathbb{Z}_{p},\mathbb{R})$, and
$\varphi\in\mathcal{C}_{l}$, if $\mathcal{X}=\mathcal{D}^{l}\left(
\mathbb{Z}_{p},\mathbb{R}\right)  $. Then, for $t\in\left[  0,\tau\right]  $,
with $\tau>0$ arbitrary, there exists a unique solution of the initial value
problem%
\begin{equation}
\left\{
\begin{array}
[c]{l}%
u\in C\left(  \left[  -r,\tau\right]  ,Dom(\boldsymbol{D})\right)  \cap
C^{1}\left(  \left[  -r,\tau\right]  ,\mathcal{X}\right) \\
\\
\frac{\partial u(t)}{\partial t}=\boldsymbol{D}u(x,t)+%
{\displaystyle\int\limits_{\mathbb{Z}_{p}}}
A(x,y)f(u_{t}\left(  y,\theta\right)  )dy+\\
\text{ \ \ \ \ \ \ \ \ \ \ \ \ \ \ \ \ \ \ \ \ \ \ \ }%
{\displaystyle\int\limits_{\mathbb{Z}_{p}}}
B(x,y)U(y)dy+Z(x)\text{, }x\in\mathbb{Z}_{p}\text{, }0\leq t\leq\tau\\
\\
u(0)=\phi\left(  x,0\right)  \text{.}%
\end{array}
\right.  \label{Cauchy_1}%
\end{equation}
satisfying (\ref{Duhamel_form}).
\end{theorem}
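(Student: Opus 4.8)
The plan is to recognize the nonlinear term of the network as the abstract forcing $F$ appearing in Propositions \ref{Proposition_1}--\ref{Proposition_1A}, to transfer from these propositions the existence and uniqueness of a mild solution satisfying the Duhamel formula (\ref{Duhamel_form}), and finally to bootstrap the regularity of this mild solution into a genuine (classical) solution of (\ref{Cauchy_1}). First I would fix the diffusion generator $\boldsymbol{D}=\boldsymbol{J}-\lambda I$ and the nonlinearity
\[
F(\phi)(x)=\int_{\mathbb{Z}_{p}}A(x,y)f\left(  \phi\left(  y,\theta\right)  \right)  dy+\int_{\mathbb{Z}_{p}}B(x,y)U(y)dy+Z(x),
\]
so that the state equation (\ref{CNN_1}) becomes the abstract semilinear delay equation $\partial_{t}u=\boldsymbol{D}u+F(u_{t})$. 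By Lemma \ref{Lemma_1}, when $\mathcal{X}=C(\mathbb{Z}_{p},\mathbb{R})$, and by Lemma \ref{Lemma_2}, when $\mathcal{X}=\mathcal{D}^{l}(\mathbb{Z}_{p},\mathbb{R})$, this $F$ is well defined, continuous, and globally Lipschitz with constant $\left\Vert A\right\Vert_{\infty}L$; hence hypothesis (\ref{Condition_Lipschitz_2}) holds in both functional settings. Since $\boldsymbol{D}$ is assumed to generate a $C_{0}$-semigroup $\{\mathcal{T}_{t}\}_{t\geq0}$ on $\mathcal{X}$, all hypotheses of Proposition \ref{Proposition_1} (resp. Proposition \ref{Proposition_1A}) are met.

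Second, I would invoke the appropriate proposition to obtain, for an arbitrary horizon $\tau>0$ and initial history $\varphi$, a unique continuous $u\colon[-r,\tau]\rightarrow\mathcal{X}$ with $u(s)=\varphi(\cdot,s)$ on $[-r,0]$ that satisfies the variation-of-constants formula (\ref{Duhamel_form}); this is the mild solution. The global (rather than merely local) Lipschitz bound is what allows $\tau$ to be taken arbitrarily large: it precludes finite-time blow-up, so the local fixed-point solution extends to all of $[0,\tau]$, equivalently a Gronwall estimate on $\left\Vert u(t)\right\Vert_{\mathcal{X}}$ keeps the solution finite. Uniqueness is inherited directly from the propositions, or may be re-derived by applying Gronwall's inequality to the difference of two mild solutions using the constant $\left\Vert A\right\Vert_{\infty}L$; note also that any classical solution is a mild solution, so uniqueness among the former follows from uniqueness among the latter.

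Third, and this is the crux, I would upgrade the mild solution to a classical one, that is, show $u\in C([-r,\tau],Dom(\boldsymbol{D}))\cap C^{1}([-r,\tau],\mathcal{X})$ and that $u$ solves the differential equation in (\ref{Cauchy_1}) pointwise. Because $u$ is continuous and $F$ is Lipschitz, the forcing $t\mapsto F(u_{t})$ is continuous; estimating from (\ref{Duhamel_form}) shows $u$ to be locally Lipschitz in $t$ on $[0,\tau]$, whence $t\mapsto F(u_{t})$ is Lipschitz continuous as well. Together with $u(0)=\varphi(\cdot,0)\in Dom(\boldsymbol{D})$, the standard regularity theorem for abstract semilinear evolution equations (Pazy; equivalently the classical-solution part of \cite[Theorem 1.1]{Wu}) guarantees that the Duhamel integral is continuously differentiable, that $u(t)\in Dom(\boldsymbol{D})$ for every $t$, and that $u$ satisfies (\ref{Cauchy_1}). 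For the concrete networks of this paper the point is even simpler: $\boldsymbol{D}=\boldsymbol{J}-\lambda I$ is a bounded operator, since $\int_{\mathbb{Z}_{p}}J=1$ forces $\left\Vert \boldsymbol{J}\right\Vert\leq2$, so $Dom(\boldsymbol{D})=\mathcal{X}$, the semigroup $\mathcal{T}_{t}=e^{t\boldsymbol{D}}$ is norm-differentiable, and term-by-term differentiation of (\ref{Duhamel_form}) immediately returns (\ref{Cauchy_1}).

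The main obstacle is precisely this last mild-to-classical passage for a possibly unbounded generator: one must certify enough time-regularity of $t\mapsto F(u_{t})$ and that the initial datum lies in $Dom(\boldsymbol{D})$ before the variation-of-constants integral can be differentiated with values in $Dom(\boldsymbol{D})$. For the bounded diffusion operators actually used here this difficulty evaporates; in the abstract statement it is handled by citing the regularity half of \cite[Theorem 1.1]{Wu}.
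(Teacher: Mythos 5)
Your proposal is correct and follows essentially the same route as the paper: Lemmas \ref{Lemma_1}--\ref{Lemma_2} supply the global Lipschitz hypothesis (\ref{Condition_Lipschitz_2}), Propositions \ref{Proposition_1}--\ref{Proposition_1A} give the unique mild solution satisfying (\ref{Duhamel_form}), and a final regularity step (together with your observation, also the paper's first step, that every classical solution is mild, so uniqueness transfers) upgrades to a solution of (\ref{Cauchy_1}). The only divergence is in that last step, where the paper differentiates the Duhamel integral by citing \cite[Corollary 4.7.5]{Milan} from the mere boundedness of $F(u_{s})$, whereas you invoke the standard Lipschitz-in-time plus $u(0)\in Dom(\boldsymbol{D})$ criterion (and, for the networks actually used, the boundedness of $\boldsymbol{D}=\boldsymbol{J}-\lambda I$, which makes the passage immediate) --- if anything a more careful justification of the mild-to-classical step than the one the paper records.
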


\begin{proof}
The first step is to show that a solution of (\ref{Cauchy_1}), i.e., that a
classical solution, is a mild solution (\ref{Duhamel_form}). This follows from
Lemmas \ref{Lemma_1}-\ref{Lemma_2}, by using \cite[Theorem 5.1.1]{Milan}. In
the second step, one shows the existence of an unique mild solution, cf.
Propositions \ref{Proposition_1}- \ref{Proposition_1A}. In the third step, one
shows \ that the mild solution (\ref{Duhamel_form}) is differentiable. This
follows from\ the fact that $%
{\displaystyle\int\nolimits_{0}^{t}}
\mathcal{T}\left(  t-s\right)  F(u_{s})ds$ is differentiable, which in turn
follows from the fact that $F(u_{s})$ is integrable,\ see \cite[Corrollary
4.7.5]{Milan}, since \
\begin{align*}
\left\vert F(u_{s})\left(  x\right)  \right\vert  &  =\left\vert \text{ }%
{\displaystyle\int\limits_{\mathbb{Z}_{p}}}
A(x,y)f(u_{s}\left(  y\right)  )dy+%
{\displaystyle\int\limits_{\mathbb{Z}_{p}}}
B(x,y)U(y)dy+Z(x)\right\vert \\
&  \leq\left\Vert A\right\Vert _{\infty}\left\Vert f\right\Vert _{\infty
}+\left\Vert B\right\Vert _{\infty}\left\Vert U\right\Vert _{\infty
}+\left\Vert Z\right\Vert _{\infty}.
\end{align*}
Finally, the uniqueness of the solution of Cauchy problem (\ref{Cauchy_1})
follows from the uniqueness of the mild solution, cf. Propositions
\ref{Proposition_1}-\ref{Proposition_1A}.
\end{proof}

\begin{corollary}
\label{Corollary_1} Set $\mathcal{X}$ to be $C(\mathbb{Z}_{p},\mathbb{R})$ or
$\mathcal{D}^{l}\left(  \mathbb{Z}_{p},\mathbb{R}\right)  $, and
$\boldsymbol{D=J}$, $\mathcal{T}_{t}=e^{t\boldsymbol{J}}$, $t\geq0$, on
$\mathcal{X}$. We take $\varphi\in\mathcal{C}$, if $\mathcal{X}=C(\mathbb{Z}%
_{p},\mathbb{R})$, and $\varphi\in\mathcal{C}_{l}$, if $\mathcal{X}%
=\mathcal{D}^{l}\left(  \mathbb{Z}_{p},\mathbb{R}\right)  $. In this last
case, take $A(x,y),B(x,y)\in\mathcal{D}^{l}\left(  \mathbb{Z}_{p}%
\times\mathbb{Z}_{p},\mathbb{R}\right)  $, $J(x)$, $U(x)$, $Z(x)\in
\mathcal{D}^{l}\left(  \mathbb{Z}_{p},\mathbb{R}\right)  $. Then, for
$t\in\left[  0,\tau\right]  $, with $\tau>0$ arbitrary, there exists a unique
solution of the initial value problem%
\[
\left\{
\begin{array}
[c]{l}%
X\in C^{1}\left(  \left[  -r,\tau\right]  ,\mathcal{X}\ \right) \\
\\
\frac{\partial X(x,t)}{\partial t}=\left(  \boldsymbol{J}-\lambda\right)
X(x,t)+%
{\displaystyle\int\limits_{\mathbb{Z}_{p}}}
A(x,y)f(X\left(  y,t+\theta\right)  )dy\\
\text{ \ \ \ \ \ \ \ \ \ \ \ \ \ \ \ \ \ \ \ \ \ \ \ \ \ }+%
{\displaystyle\int\limits_{\mathbb{Z}_{p}}}
B(x,y)U(y)dy+Z(x)\text{, }x\in\mathbb{Z}_{p}\text{, }t\in\left[
0,\tau\right]  ,\theta\in\left[  -r,0\right] \\
\\
X(x,0)=X_{0}\left(  x,0\right)  \in C\left(  \left[  -r,0\right]
,\mathcal{X}\right)
\end{array}
\right.
\]
satisfying%
\begin{multline*}
X(x,t)=e^{t\left(  \boldsymbol{J}-\lambda\right)  }X_{0}\left(  x,0\right) \\
+%
{\displaystyle\int\nolimits_{0}^{t}}
e^{\left(  t-s\right)  \left(  \boldsymbol{J}-\lambda\right)  }\left\{  \text{
}%
{\displaystyle\int\limits_{\mathbb{Z}_{p}}}
A(x,y)f(X\left(  x,s+\theta\right)  )dy+%
{\displaystyle\int\limits_{\mathbb{Z}_{p}}}
B(x,y)U(y)dy+Z(x)\right\}  ds\text{, }%
\end{multline*}
for $x\in\mathbb{Z}_{p}$, $t\in\left[  0,\tau\right]  ,\theta\in\left[
-r,0\right]  .$
\end{corollary}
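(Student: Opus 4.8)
The plan is to deduce the statement as the special case of Theorem \ref{Theorem_2} in which the generator is $\boldsymbol{D}=\boldsymbol{J}-\lambda I$. Although the corollary phrases the hypothesis as $\boldsymbol{D}=\boldsymbol{J}$, the decay term $-\lambda X(x,t)$ is linear and acts at the present time $t$, so it is naturally absorbed into the generator as a bounded perturbation of the diffusion operator $\boldsymbol{J}$; the relevant semigroup is then $\mathcal{T}_t=e^{t(\boldsymbol{J}-\lambda)}=e^{-\lambda t}e^{t\boldsymbol{J}}$. Concretely, I would verify the two structural hypotheses of Theorem \ref{Theorem_2}: that $\boldsymbol{J}-\lambda I$ generates a $C_0$-semigroup on each of the spaces $\mathcal{X}=C(\mathbb{Z}_p,\mathbb{R})$ and $\mathcal{X}=\mathcal{D}^l(\mathbb{Z}_p,\mathbb{R})$, and that the reaction term $F$ obeys the global Lipschitz bound (\ref{Condition_Lipschitz_2}). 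The latter is already available: Lemma \ref{Lemma_1} supplies it on $C(\mathbb{Z}_p,\mathbb{R})$ and Lemma \ref{Lemma_2} on $\mathcal{D}^l(\mathbb{Z}_p,\mathbb{R})$, both with constant $\|A\|_\infty L$. Hence the only genuinely new verification concerns the generator.

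For the generator, the key observation is that $\boldsymbol{J}$ is a \emph{bounded} operator, not merely a closed densely defined one. Since $\int_{\mathbb{Z}_p}J(x)\,dx=1$, one may write $\boldsymbol{J}f=J\ast f-f$; because $J\in L^1(\mathbb{Z}_p)$, Young's inequality on the compact group $(\mathbb{Z}_p,+)$ gives $\|J\ast f\|_\infty\le\|J\|_1\|f\|_\infty$, and convolution by an $L^1$ kernel sends $C(\mathbb{Z}_p,\mathbb{R})$ into itself. Thus $\boldsymbol{J}$, and therefore $\boldsymbol{J}-\lambda I$, is a bounded linear endomorphism of $C(\mathbb{Z}_p,\mathbb{R})$ with domain the whole space, so it generates the uniformly continuous (in particular $C_0$) semigroup $e^{t(\boldsymbol{J}-\lambda)}=e^{-\lambda t}\sum_{n\ge0}\frac{t^n}{n!}\boldsymbol{J}^n$. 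In the discrete setting I would use the convolution identity $\Omega(p^l|x-i|_p)\ast\Omega(p^l|x-j|_p)=p^{-l}\Omega(p^l|x-(i+j)|_p)$ to see that convolution by $J\in\mathcal{D}^l$ maps $\mathcal{D}^l$ into itself; since $\mathcal{D}^l(\mathbb{Z}_p,\mathbb{R})\cong\mathbb{R}^{p^l}$ is finite dimensional, $\boldsymbol{J}-\lambda I$ restricts to a matrix there and its exponential is trivially a $C_0$-semigroup.

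With both hypotheses in hand, Theorem \ref{Theorem_2} applies (via Propositions \ref{Proposition_1}--\ref{Proposition_1A}) and yields a unique solution in $C([-r,\tau],\mathrm{Dom}(\boldsymbol{D}))\cap C^1([-r,\tau],\mathcal{X})$ satisfying the Duhamel equation (\ref{Duhamel_form}). Because $\boldsymbol{D}=\boldsymbol{J}-\lambda I$ is bounded, $\mathrm{Dom}(\boldsymbol{D})=\mathcal{X}$, so the first membership reduces to $X\in C^1([-r,\tau],\mathcal{X})$, exactly the regularity asserted. Specializing (\ref{Duhamel_form}) with $\mathcal{T}(t-s)=e^{(t-s)(\boldsymbol{J}-\lambda)}$ and $F$ as in Lemmas \ref{Lemma_1}--\ref{Lemma_2} then reproduces the integral representation in the statement, closing the argument.

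The step requiring the most attention is the generation claim: one must confirm that $\boldsymbol{J}$ is genuinely bounded rather than an unbounded diffusion generator (as its archimedean heat-equation analogue would be). This boundedness is precisely what forces $\mathrm{Dom}(\boldsymbol{D})=\mathcal{X}$, guarantees that mild solutions coincide with classical $C^1$ solutions with no loss of regularity, and lets me invoke Theorem \ref{Theorem_2} directly instead of re-running the subtler domain and differentiability analysis it carries out through \cite[Theorem 5.1.1]{Milan} and \cite[Corollary 4.7.5]{Milan} for general unbounded generators.
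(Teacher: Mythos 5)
Your proposal is correct and follows essentially the route the paper intends: the corollary is precisely Theorem \ref{Theorem_2} specialized to the generator $\boldsymbol{D}=\boldsymbol{J}-\lambda I$ (the paper's ``$\boldsymbol{D}=\boldsymbol{J}$'' with the decay term absorbed, so that $\mathcal{T}_{t}=e^{t\left(  \boldsymbol{J}-\lambda\right)  }=e^{-\lambda t}e^{t\boldsymbol{J}}$), with the Lipschitz hypothesis (\ref{Condition_Lipschitz_2}) supplied by Lemmas \ref{Lemma_1}--\ref{Lemma_2}, exactly as you argue. Your explicit check that $\boldsymbol{J}f=J\ast f-f$ is a bounded operator on $C(\mathbb{Z}_{p},\mathbb{R})$ (via $\left\Vert J\ast f\right\Vert _{\infty}\leq\left\Vert J\right\Vert _{1}\left\Vert f\right\Vert _{\infty}$) and on the finite-dimensional $\mathcal{D}^{l}\left(  \mathbb{Z}_{p},\mathbb{R}\right)  $, so that $\mathrm{Dom}(\boldsymbol{D})=\mathcal{X}$ and the regularity collapses to $X\in C^{1}\left(  \left[  -r,\tau\right]  ,\mathcal{X}\right)  $ as stated, is a detail the paper leaves implicit and a welcome clarification rather than a departure.
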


\subsection{Stability of CNNs with delay}

\begin{theorem}
\label{Theorem_3}If $\lambda>\left\Vert J\right\Vert _{1}=1$, all the states
$X(x,t)$ of a $\text{CNN}(\lambda,J,A,B,U,Z)$ are bounded for all time
$t\geq0$. More precisely, if
\[
X_{\max}:=\left\Vert A\right\Vert _{\infty}\left\Vert f\right\Vert _{\infty
}+\left\Vert B\right\Vert _{\infty}\left\Vert U\right\Vert _{\infty
}+\left\Vert Z\right\Vert _{\infty},
\]
then
\begin{equation}
\sup_{t\geq0}\sup_{x\in\mathbb{Z}_{p}}|X(x,t)|\leq\frac{X_{\max}}%
{\lambda-\left\Vert J\right\Vert _{1}}. \label{No_Blow_up}%
\end{equation}
If $A(x,y),B(x,y)\in\mathcal{D}^{l}\left(  \mathbb{Z}_{p}\times\mathbb{Z}%
_{p},\mathbb{R}\right)  $, $J(x)$, $U(x)$, $Z(x)\in\mathcal{D}^{l}\left(
\mathbb{Z}_{p},\mathbb{R}\right)  $,%
\[
X_{\max}=\left\Vert f\right\Vert _{\infty}\max_{i,k\in G_{l}}\left\vert
A\left(  i,k\right)  \right\vert +\max_{i,k\in G_{l}}\left\vert B\left(
i,k\right)  \right\vert \text{ }\max_{i\in G_{l}}\left\vert U\left(  i\right)
\right\vert +\max_{i\in G_{l}}\left\vert Z\left(  i\right)  \right\vert .
\]

\end{theorem}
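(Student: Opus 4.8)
The plan is to derive an a priori bound on $\sup_{x}|X(x,t)|$ directly from the state equation (\ref{CNN_1})--(\ref{CNN_2}) by a maximum-principle (differential-inequality) argument, using Corollary \ref{Corollary_1} only to guarantee that a genuine, $t$-differentiable solution exists on each interval $[0,\tau]$ so that the manipulations below are licit.

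First I would isolate and dominate the forcing term. Writing
\[
F(x,t)=\int_{\mathbb{Z}_{p}}A(x,y)f\!\left(X(y,t+\theta)\right)dy+\int_{\mathbb{Z}_{p}}B(x,y)U(y)dy+Z(x),
\]
the decisive observation is that, because $f$ is bounded and the Haar measure is normalized by $\int_{\mathbb{Z}_{p}}dy=1$, one gets $|F(x,t)|\le\|A\|_{\infty}\|f\|_{\infty}+\|B\|_{\infty}\|U\|_{\infty}+\|Z\|_{\infty}=X_{\max}$ \emph{uniformly in $x$, $t$ and in the delay $\theta\in[-r,0]$}. Thus the nonlinearity and the delay are harmless for the estimate: the whole reaction/input term is dominated by the constant $X_{\max}$, and no information about the history $\varphi$ enters through $F$. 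In the discrete case one repeats this with the sums of Section \ref{Section_discretization}; there each weight carries a factor $p^{-l}$ and $\sum_{j\in G_{l}}p^{-l}=1$, which produces precisely the stated discrete value of $X_{\max}$.

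Next comes the core estimate. Set $m(t):=\sup_{x\in\mathbb{Z}_{p}}X(x,t)$ and treat $-\inf_{x}X(x,t)$ symmetrically. At a point $x^{\ast}$ realizing the supremum one has, directly from the definition,
\[
\boldsymbol{J}X(x^{\ast},t)=\int_{\mathbb{Z}_{p}}J(x^{\ast}-y)\bigl(X(y,t)-X(x^{\ast},t)\bigr)dy\le0,
\]
because $J\ge0$ and $X(y,t)\le X(x^{\ast},t)$; substituting into (\ref{CNN_1}) already gives $D^{+}m(t)\le-\lambda\,m(t)+X_{\max}$. To reproduce the exact constant of (\ref{No_Blow_up}) one uses instead the slightly coarser bound $\boldsymbol{J}X(x^{\ast},t)=(J\ast X)(x^{\ast},t)-X(x^{\ast},t)\le\|J\|_{1}\,m(t)$, valid since $\int_{\mathbb{Z}_{p}}J=\|J\|_{1}$ and, in the only regime relevant for an upper bound ($m(t)\ge0$), the favorable diagonal term may be discarded. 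This yields the Dini inequality
\[
D^{+}m(t)\le-(\lambda-\|J\|_{1})\,m(t)+X_{\max},
\]
in which the hypothesis $\lambda>\|J\|_{1}$ is exactly what renders the net rate positive. A comparison with the solution of $\dot{y}=-(\lambda-\|J\|_{1})y+X_{\max}$ then gives $m(t)\le e^{-(\lambda-\|J\|_{1})t}m(0)+\frac{X_{\max}}{\lambda-\|J\|_{1}}\bigl(1-e^{-(\lambda-\|J\|_{1})t}\bigr)$, and the symmetric bound for $-\inf_{x}X(x,t)$ together give (\ref{No_Blow_up}).

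I expect the main obstacle to be the rigorous justification of the differential-inequality step rather than any hard analysis: one must argue that the supremum over the compact space $\mathbb{Z}_{p}$ is attained (continuity of $X(\cdot,t)$), that $m(t)$ admits the claimed one-sided derivative estimate, and that the comparison extends to all of $[0,\infty)$ by iterating over the intervals on which Corollary \ref{Corollary_1} supplies a solution. The one genuinely delicate point is the initial-history term: the clean bound (\ref{No_Blow_up}) as an estimate for \emph{all} $t\ge0$ requires $\sup_{-r\le s\le0}\|\varphi(\cdot,s)\|_{\infty}\le X_{\max}/(\lambda-\|J\|_{1})$, since otherwise the argument only yields $\max\{\|\varphi\|_{\infty},\,X_{\max}/(\lambda-\|J\|_{1})\}$ with $X_{\max}/(\lambda-\|J\|_{1})$ as the asymptotic (attracting) bound. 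Reconciling this with the intended reading is where I would spend the remaining care; I would also remark that the Markov (contraction) property of $e^{t\boldsymbol{J}}$ supplied by the heat-equation theorem even gives the sharper decay rate $\lambda$, of which the stated rate $\lambda-\|J\|_{1}$ is a weaker but sufficient consequence.
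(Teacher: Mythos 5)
Your proof is correct and arrives at exactly the estimate the paper proves, but by a genuinely different route. The paper's proof is a two-line semigroup argument: it takes the mild-solution representation from Corollary \ref{Corollary_1}, $X(x,t)=e^{t(\boldsymbol{J}-\lambda)}X_{0}(x,0)+\int_{0}^{t}e^{(t-s)(\boldsymbol{J}-\lambda)}\{\cdots\}\,ds$, bounds the bracket by $X_{\max}$ exactly as you do (boundedness of $f$ plus the normalized Haar measure, so the delay and the history $\varphi$ never enter), and then applies the operator-norm bound $\Vert e^{t(\boldsymbol{J}-\lambda)}\Vert\leq e^{t\left(  \Vert J\Vert_{1}-\lambda\right)  }$, coming from $\boldsymbol{J}=J\ast(\cdot)-I$ and $\Vert J\ast g\Vert_{\infty}\leq\Vert J\Vert_{1}\Vert g\Vert_{\infty}$, to get $\left\vert X(x,t)\right\vert \leq e^{-t\left(  \lambda-\Vert J\Vert_{1}\right)  }\Vert X_{0}\Vert_{\infty}+X_{\max}\frac{1-e^{-\left(  \lambda-\Vert J\Vert_{1}\right)  t}}{\lambda-\Vert J\Vert_{1}}$ when $\lambda>\Vert J\Vert_{1}$ --- precisely the inequality your Dini comparison produces. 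Your maximum-principle argument replaces the semigroup norm estimate by the pointwise sign information $\boldsymbol{J}X(x^{\ast},t)\leq0$ at a spatial maximizer, which the paper never uses; what this buys is the strictly sharper version you note in passing ($D^{+}m(t)\leq-\lambda m(t)+X_{\max}$, hence decay rate $\lambda$ and asymptotic bound $X_{\max}/\lambda$, of which the paper's constant is a deliberate coarsening), at the cost of the extra bookkeeping about attainment of suprema on the compact space $\mathbb{Z}_{p}$ and one-sided derivatives of $m(t)$, all of which the Duhamel route avoids. Your closing caveat is accurate and applies equally to the paper: the paper's own displayed estimate contains the term $e^{-t\left(  \lambda-\Vert J\Vert_{1}\right)  }\Vert X_{0}\Vert_{\infty}$, which does not vanish at $t=0$, so the bare bound (\ref{No_Blow_up}) with $\sup_{t\geq0}$ follows only if $\Vert X_{0}\Vert_{\infty}\leq X_{\max}/\left(  \lambda-\Vert J\Vert_{1}\right)  $, or if the claim is read asymptotically; the paper glosses over this, while your $\max\left\{  \Vert\varphi\Vert_{\infty},X_{\max}/\left(  \lambda-\Vert J\Vert_{1}\right)  \right\}  $ is the correct unconditional statement. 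Your discrete identification of $X_{\max}$ via the factors $p^{-l}$ with $\sum_{j\in G_{l}}p^{-l}=1$ also matches the paper's discretization in Section \ref{Section_discretization}.
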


\begin{proof}
Set $X_{\max}:=\left\Vert A\right\Vert _{\infty}\left\Vert f\right\Vert
_{\infty}+\left\Vert B\right\Vert _{\infty}\left\Vert U\right\Vert _{\infty
}+\left\Vert Z\right\Vert _{\infty}$, then, the result follows from Corollary
\ref{Corollary_1} by using
\begin{align*}
\left\vert X(x,t)\right\vert  &  \leq e^{t\left(  \left\Vert J\right\Vert
_{1}-\lambda\right)  }\left\Vert X_{0}\right\Vert _{\infty}+X_{\max}%
{\displaystyle\int\nolimits_{0}^{t}}
e^{\left(  t-s\right)  \left(  \left\Vert J\right\Vert _{1}-\lambda\right)
}\\
&  =e^{-t\left(  \lambda-\left\Vert J\right\Vert _{1}\right)  }\left\Vert
X_{0}\right\Vert _{\infty}+X_{\max}\left\{
\begin{array}
[c]{lll}%
t & \text{if} & \lambda=\left\Vert J\right\Vert _{1}\\
&  & \\
\frac{1-e^{-\left(  \lambda-\left\Vert J\right\Vert _{1}\right)  t}}%
{\lambda-\left\Vert J\right\Vert _{1}} & \text{if} & \lambda>\left\Vert
J\right\Vert _{1}\\
&  & \\
\frac{\left(  e^{\left(  \left\Vert J\right\Vert _{1}-\lambda\right)
t}-1\right)  }{\left\Vert J\right\Vert _{1}-\lambda} & \text{if} &
\lambda<\left\Vert J\right\Vert _{1}.
\end{array}
\right.
\end{align*}

\end{proof}

\subsection{Some numerical examples}

In this section, we present some numerical approximations for the solution of
the Cauchy problem (\ref{Cauchy_1}). We use the numerical techniques developed
in (\cite{Zambrano-Zuniga-1})-(\cite{Zambrano-Zuniga-2}). The time step used
$\delta_{t}=0.05$, $\ $and the time runs through the ninterval $\left[
0,30\right]  $. We use $p=2$, and use truncated 2-adic numbers with five
digits, $l=5$; these numbers correspond to the vertices at the top level of
the tree $G_{5}$. We also use%
\[
J(x)=%
\begin{cases}
2^{2} & \text{if }\left\vert x\right\vert _{2}\leq2^{-2}\\
& \\
0 & \text{otherwise,}%
\end{cases}
\text{ }%
\]%
\begin{align*}
U(x)  &  =\sin(\pi(1-\left\vert x\right\vert _{2}))\text{, }A(x)=-4\sin
(\pi(1-\left\vert x\right\vert _{2}))\text{,}\\
B(x)  &  =\cos(\pi(1-\left\vert x\right\vert _{2}))\text{, }Z(x)=-0.15\Omega
\left(  \left\vert x\right\vert _{2}\right)  \text{, }%
\end{align*}

\[
\phi(x,s)=%
\begin{cases}
\cos(\pi\left\vert x\right\vert _{2})\cdot\sin(\pi\cdot s) & \text{if
}\left\vert x\right\vert _{2}\leq2^{-3}\\
& \\
\exp(-\left\vert x\right\vert _{2})\cdot\sin(\pi\cdot s) & \text{otherwise,}%
\end{cases}
\]
where $x\in\mathbb{Z}_{p}$, and $s\in\lbrack-4,0]$, thus $r=-4$. We use
$\lambda=0.5$, $2.0$. The numerical experiments show that in $p$-adic CNNs
without delay the steady state response of the network does not depent on the
initial datum $\phi(x,0)$; while in case with delay, the steady state response
depend heavily on the `memory $\phi(x,s).$' The memory induces a `chaotic
type' steady state response in the network.%

\begin{figure}
[h]
\begin{center}
\includegraphics[width=0.75\textwidth]%
{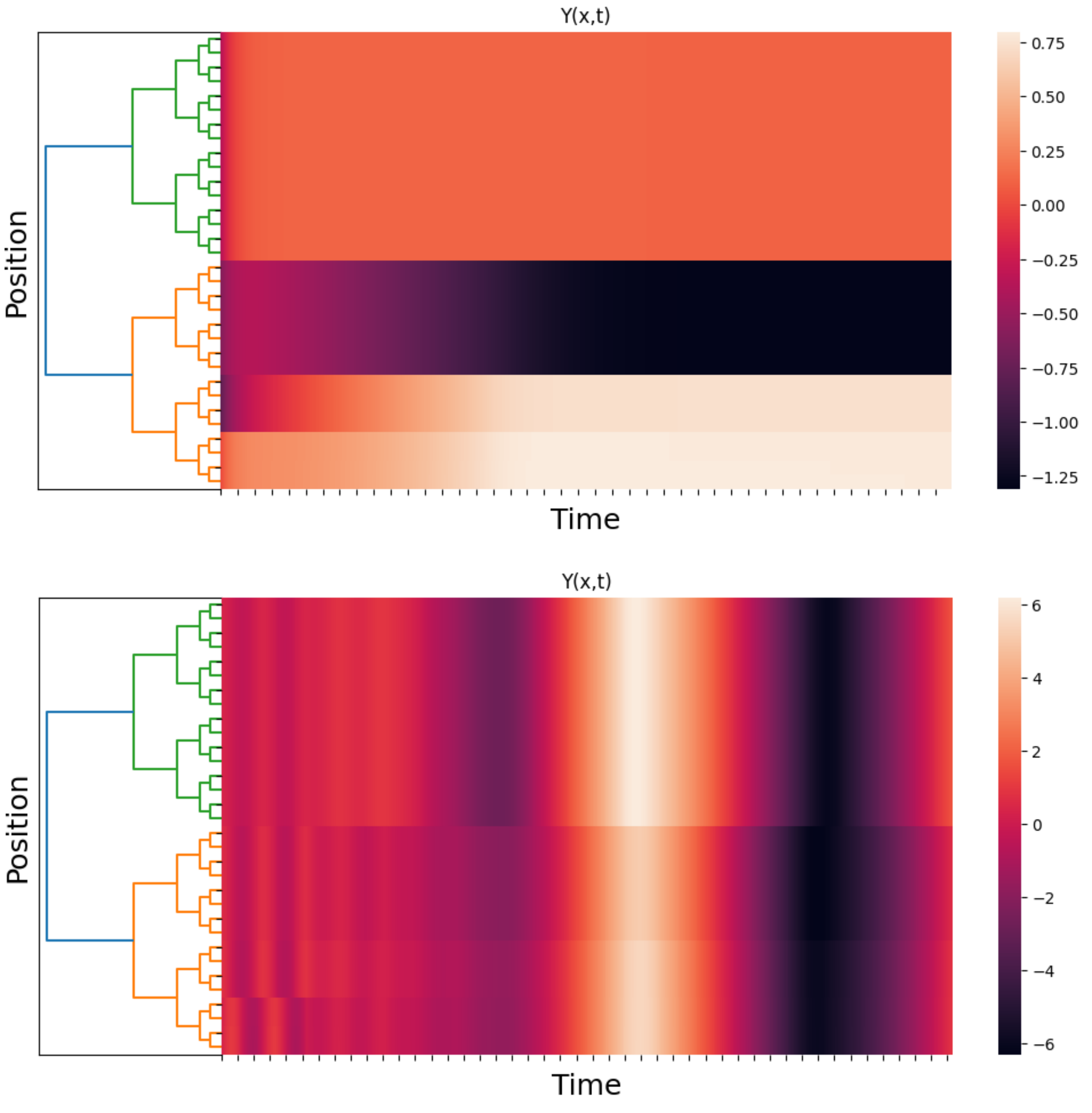}%
\caption{The top figure is the response of the network with no delay, i.e.
$r=0$. The bottom figure is the response of the network wit $r=-4$. In both
cases $\lambda=0.5.$}%
\label{Figure 7}%
\end{center}
\end{figure}
\begin{figure}
[h]
\begin{center}
\includegraphics[width=0.75\textwidth]%
{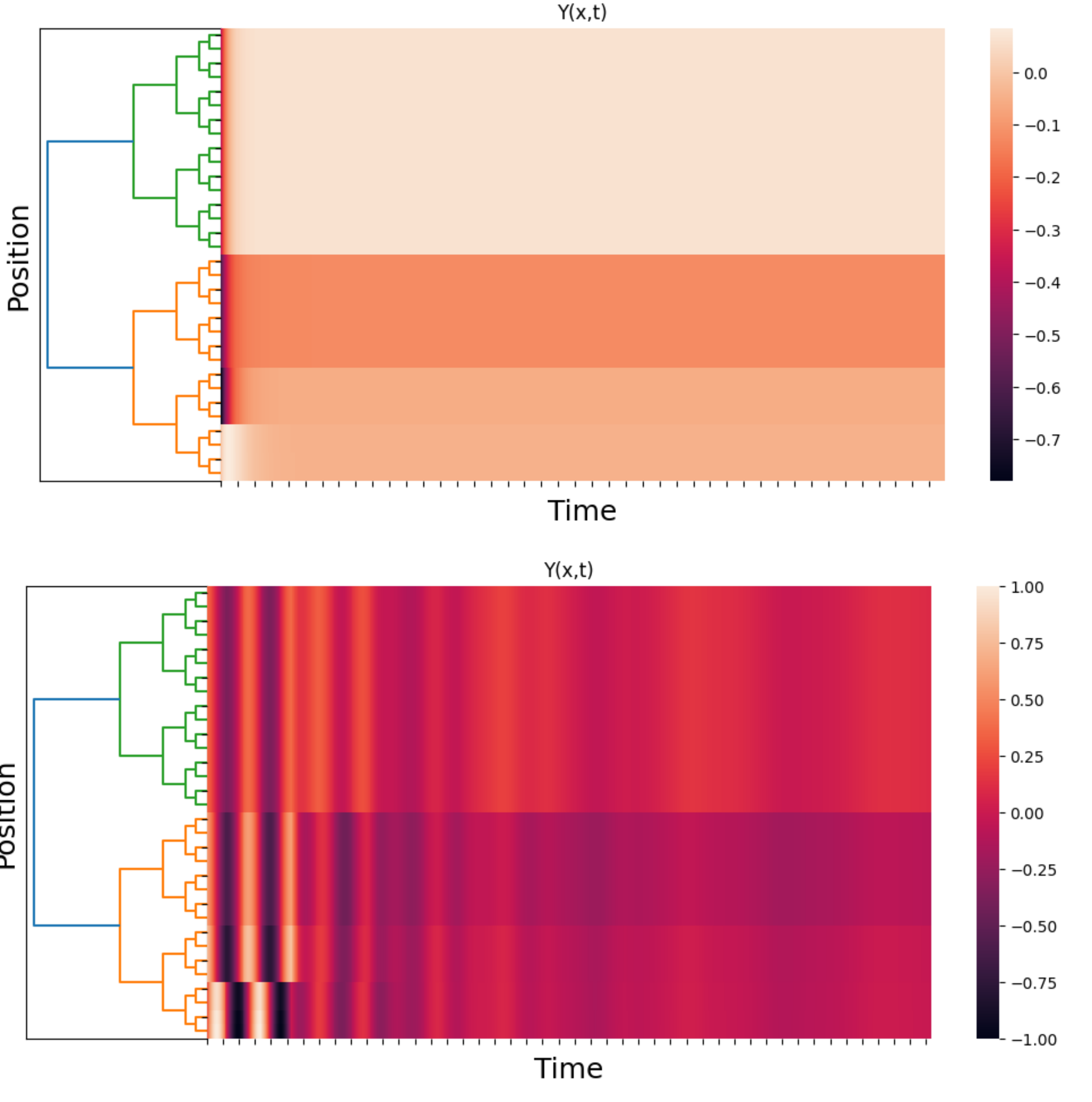}%
\caption{The top figure is the response of the network with no delay, i.e.
$r=0$. The bottom figure is the response of the networks wit $r=-4$. In both
cases $\lambda=2.0$.}%
\label{Figure 8}%
\end{center}
\end{figure}

\section{Applications of $p$-adic CNNs in image processing: edge detection}

In this section, we review the edge detectors based on $p$-adic CNNs for
grayscale images introduced in \cite{Zambrano-Zuniga-2}. We take $B\in
L^{1}(\mathbb{Z}_{p})$ and $U,Z\in\mathcal{C}(\mathbb{Z}_{p})$, $a$,
$b\in\mathbb{R}$ , and fix the sigmoidal function $f(s)=\frac{1}{2}(\left\vert
s+1\right\vert -|s-1|)$ for $s\in\mathbb{R}$. In this section, we consider the
following $p$-adic CNN:%
\begin{equation}
\left\{
\begin{array}
[c]{ll}%
\frac{\partial}{\partial t}X(x,t)=-X(x,t)+aY(x,t)+(B\ast U)(x)+Z(x), &
x\in\mathbb{Z}_{p},t\geq0;\\
& \\
Y(x,t)=f(X(x,t)). &
\end{array}
\right.  \label{CNN_1_A}%
\end{equation}
We denote this $p$-adic CNN as $CNN(a,B,U,Z)$, where $a,B,U,Z$\ are the
parameters of the network. In applications to edge detection, we take $U(x)$
to be a grayscale image, and take the initial datum as $X(x,0)=0$. The
simulations show that after a time sufficiently large the network outputs a
white and black image approximating the edges of the original image $U(x)$.
The performance of this edge detector is comparable to the Canny detector, and
other well-know detectors. But most importantly, they can explain, reasonably
well, how the network learns the edges of an image.

\subsection{Stationary states}

We say that $X_{stat}(x)$ is a \textit{stationary state} of the network
$CNN(a,B,U,Z)$, if
\begin{equation}
\left\{
\begin{array}
[c]{ll}%
X_{stat}(x)=aY_{stat}(x)+(B\ast U)(x)+Z(x), & x\in\mathbb{Z}_{p};\\
& \\
Y_{stat}(x)=f(X_{stat}(x)). &
\end{array}
\right.  \label{Sationary state}%
\end{equation}

\begin{lemma}
\cite[Lemma 1]{Zambrano-Zuniga-2}\label{Lemma-0}(i) If $a<1$, then the network
$CNN(a,B,U,Z)$ has a unique stationary state $X_{stat}(x)\in\mathcal{C}%
(\mathbb{Z}_{p})$ given by%
\begin{equation}
X_{stat}(x)=\left\{
\begin{array}
[c]{lcr}%
a+(B\ast U)(x)+Z(x) & \text{if} & (B\ast U)(x)+Z(x)>1-a\\
-a+(B\ast U)(x)+Z(x) & \text{if} & (B\ast U)(x)+Z(x)<-1+a\\
\frac{(B\ast U)(x)+Z(x)}{1-a} & \text{if} & |(B\ast U)(x)+Z(x)|\leq1-a.
\end{array}
\right.  \label{Case_1}%
\end{equation}

\noindent(ii) If $a=1$ , then the network $CNN(a,B,U,Z)$ has a unique
stationary state $X_{stat}(x)\in L^{1}(\mathbb{Z}_{p})$ given by
\begin{equation}
X_{stat}(x)=\left\{
\begin{array}
[c]{lcr}%
1+(B\ast U)(x)+Z(x) & \text{if} & (B\ast U)(x)+Z(x)>0\\
-1+(B\ast U)(x)+Z(x) & \text{if} & (B\ast U)(x)+Z(x)<0\\
0 & \text{if} & (B\ast U)(x)+Z(x)=0.
\end{array}
\right.  \label{Case_2}%
\end{equation}

\end{lemma}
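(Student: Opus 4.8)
The plan is to reduce the operator fixed-point problem (\ref{Sationary state}) to a scalar equation that is solved independently at each point $x\in\mathbb{Z}_p$. Writing $c(x):=(B\ast U)(x)+Z(x)$ and substituting the output relation $Y_{stat}=f(X_{stat})$ into the first line of (\ref{Sationary state}), the stationary condition becomes the pointwise identity
\[
X_{stat}(x)=a\,f\!\left(X_{stat}(x)\right)+c(x),\qquad x\in\mathbb{Z}_p .
\]
Since $U,Z\in\mathcal{C}(\mathbb{Z}_p)$ and $B\in L^1(\mathbb{Z}_p)$ on the compact group $\mathbb{Z}_p$, the convolution $B\ast U$ is continuous and bounded, so $c\in\mathcal{C}(\mathbb{Z}_p)$ with $\left\Vert c\right\Vert_\infty<\infty$. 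Thus the whole problem is governed by the scalar map $\Phi_a(s):=s-a f(s)$, which I would study using the explicit saturation form $f(s)=s$ for $|s|\leq 1$ and $f(s)=\mathrm{sgn}(s)$ for $|s|\geq 1$.

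For part (i), $a<1$, the key observation is that $\Phi_a$ is a strictly increasing continuous bijection of $\mathbb{R}$. Indeed $f$ is non-decreasing and $1$-Lipschitz, so $\Phi_a$ has slope $1-a>0$ on $[-1,1]$ and slope $1$ on $|s|\geq 1$; hence it is strictly increasing and onto, and in particular injective, which already yields \textbf{uniqueness}. Computing $\Phi_a^{-1}$ piecewise — inverting $(1-a)s=c$ on the middle branch and $s\mp a=c$ on the outer branches, and matching the thresholds $c=\pm(1-a)$ — reproduces exactly the three cases of (\ref{Case_1}). Because $\Phi_a^{-1}$ is piecewise linear, hence Lipschitz and continuous, and $c$ is continuous, the composition $X_{stat}=\Phi_a^{-1}\circ c$ lies in $\mathcal{C}(\mathbb{Z}_p)$, which finishes (i). (Uniqueness can alternatively be seen from the Banach fixed-point theorem, since $s\mapsto a f(s)+c$ is an $a$-contraction.)

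For part (ii), $a=1$, the map $\Phi_1(s)=s-f(s)$ is still non-decreasing and onto $\mathbb{R}$, but it collapses the whole interval $[-1,1]$ to the single value $0$ and is strictly increasing only on $|s|\geq 1$. Consequently, for $c(x)>0$ the unique preimage is $s=1+c(x)$ and for $c(x)<0$ it is $s=-1+c(x)$, giving the two outer branches of (\ref{Case_2}); these are bounded by $1+\left\Vert c\right\Vert_\infty$, and since $\mathbb{Z}_p$ has finite Haar measure the resulting function is in $L^1(\mathbb{Z}_p)$. The step I expect to be the main obstacle is the set $\{x:c(x)=0\}$, where $\Phi_1^{-1}(0)=[-1,1]$ and the pointwise equation fails to determine $X_{stat}$. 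The value $X_{stat}(x)=0$ prescribed by (\ref{Case_2}) is the correct selection: it is the equilibrium reached by the dynamics (\ref{CNN_1_A}) from the prescribed datum $X(x,0)=0$, since there $\dot X=-X+f(X)=0$ for every $X\in[-1,1]$, so the zero value is stationary. I would therefore record uniqueness in $L^1(\mathbb{Z}_p)$ away from $\{c=0\}$ together with this dynamical selection on that exceptional set. Finally, the loss of continuity is genuine: as $c\to 0^{\pm}$ the outer branches tend to $\pm 1$, while the prescribed value at $c=0$ is $0$, which is precisely why (ii) asserts only membership in $L^1(\mathbb{Z}_p)$ rather than in $\mathcal{C}(\mathbb{Z}_p)$.
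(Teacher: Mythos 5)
Since this survey states Lemma \ref{Lemma-0} by citation and does not reproduce a proof, your argument can only be judged against the expected one, and for part (i) it is correct and is the natural route: reduce (\ref{Sationary state}) to the pointwise equation $X=af(X)+c$ with $c=(B\ast U)+Z\in\mathcal{C}(\mathbb{Z}_{p})$, observe that $\Phi_{a}(s)=s-af(s)$ is a strictly increasing piecewise-linear bijection of $\mathbb{R}$ when $a<1$ (slopes $1-a$ and $1$), and read off (\ref{Case_1}) from the piecewise inverse; continuity of $X_{stat}=\Phi_{a}^{-1}\circ c$ follows because the inverse of a continuous, strictly increasing surjection of $\mathbb{R}$ is continuous, and $B\ast U$ is indeed continuous for $B\in L^{1}(\mathbb{Z}_{p})$, $U\in\mathcal{C}(\mathbb{Z}_{p})$ on the compact group. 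One small caution: your parenthetical alternative via the Banach fixed-point theorem requires $|a|<1$, since $f$ is $1$-Lipschitz and $s\mapsto af(s)+c$ is only $|a|$-Lipschitz; for $a\leq-1$ it is not a contraction, although your monotonicity argument does cover all $a<1$, so this remark is cosmetic.

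Part (ii) contains the genuine gap, and you have in fact located it yourself: on the set $\{x\in\mathbb{Z}_{p};\,c(x)=0\}$ the equation $X=f(X)$ is solved by \emph{every} value in $[-1,1]$, so if that set is nonempty (of positive Haar measure, for $L^{1}$ distinctness) there are infinitely many bounded functions satisfying (\ref{Sationary state}), all in $L^{1}(\mathbb{Z}_{p})$, and the uniqueness asserted in (\ref{Case_2}) fails as literally stated. Your repair --- selecting $X_{stat}(x)=0$ there because it is the equilibrium reached by the dynamics (\ref{CNN_1_A}) from the datum $X(x,0)=0$ --- imports information that is not part of the definition (\ref{Sationary state}): a stationary state is defined by the fixed-point equation alone, with no reference to any initial condition, so a dynamical selection cannot yield uniqueness within that definition. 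What your argument actually establishes is weaker than the lemma's claim: the function (\ref{Case_2}) is a stationary state; it is the unique one at every $x$ with $c(x)\neq0$; and uniqueness in $L^{1}(\mathbb{Z}_{p})$ holds only under an additional hypothesis (for instance, that $\{c=0\}$ is Haar-null, in which case the state is unique as an $L^{1}$ equivalence class) or under an explicit selection convention. Your closing observation about the discontinuity --- the outer branches tend to $\pm1$ as $c\rightarrow0^{\pm}$ while the prescribed value at $c=0$ is $0$, which is why (ii) asserts only $L^{1}$ membership --- is correct, but you should state (ii) in the corrected form above rather than present the dynamical argument as closing the uniqueness claim.
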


\begin{definition}
\label{Definition1}Assume that $a>1$. Given
\[
I_{+}\subseteq\{x\in\mathbb{Z}_{p};\;1-a<(B\ast U)(x)+Z(x)\},
\]%
\[
I_{-}\subseteq\{x\in\mathbb{Z}_{p};\;(B\ast U)(x)+Z(x)<a-1\},
\]
satisfying $I_{+}\cap I_{-}=\varnothing$ \ and
\[
\mathbb{Z}_{p}\smallsetminus\left(  I_{+}\cup I_{-}\right)  \subseteq
\{x\in\mathbb{Z}_{p};\;1-a<(B\ast U)(x)+Z(x)<a-1\},
\]
we define the function%
\begin{equation}
X_{stat}(x;I_{+},I_{-})=\left\{
\begin{array}
[c]{lll}%
a+(B\ast U)(x)+Z(x) & \text{if} & x\in I_{+}\\
-a+(B\ast U)(x)+Z(x) & \text{if} & x\in I_{-}\\
\frac{(B\ast U)(x)+Z(x)}{1-a} & \text{if} & x\in\mathbb{Z}_{p}\setminus\left(
I_{+}\cup I_{-}\right)  .
\end{array}
\right.  \label{Stationary_Sol_3}%
\end{equation}

\end{definition}

\begin{theorem}
\cite[Theorem 1]{Zambrano-Zuniga-2}\label{Theorem1}Assume that $a>1$. All
functions of type (\ref{Stationary_Sol_3}) are stationary states of the
network $CNN(a,B,U,Z)$. Conversely, any \ stationary state of the network
$CNN(a,B,U,Z)$ has the form (\ref{Stationary_Sol_3}).
\end{theorem}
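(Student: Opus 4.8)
The plan is to reduce the functional equation (\ref{Sationary state}) to a scalar problem solved pointwise in $x$. Fix $x\in\mathbb{Z}_p$ and abbreviate $c:=(B\ast U)(x)+Z(x)$. Eliminating $Y_{stat}$ via $Y_{stat}=f(X_{stat})$, a stationary state must satisfy, at each $x$, the scalar equation $X=af(X)+c$, equivalently $\Phi(X):=X-af(X)=c$. Since $f(s)=\tfrac12(|s+1|-|s-1|)$ is the saturation function ($f(s)=1$ for $s\ge1$, $f(s)=s$ for $|s|\le1$, $f(s)=-1$ for $s\le-1$), the map $\Phi$ is piecewise linear: $\Phi(X)=X-a$ on $[1,\infty)$, $\Phi(X)=(1-a)X$ on $[-1,1]$, and $\Phi(X)=X+a$ on $(-\infty,-1]$. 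Because $a>1$ the middle slope $1-a$ is negative, so $\Phi$ increases on $(-\infty,-1]$ up to $a-1$, decreases on $[-1,1]$ down to $1-a$, and increases again on $[1,\infty)$. Reading off the level set $\Phi(X)=c$ then yields the three candidate roots $a+c$ (right branch), $\tfrac{c}{1-a}$ (middle branch), and $-a+c$ (left branch), which are simultaneously admissible precisely on the overlap interval $c\in(1-a,a-1)$; this triple-root structure is the source of the multistability encoded by Definition \ref{Definition1}.

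For the forward implication I would verify branch by branch that $X_{stat}(\cdot;I_+,I_-)$ solves (\ref{Sationary state}). On $I_+$ the defining containment $c>1-a$ forces $X=a+c>1$, hence $f(X)=1$ and $af(X)+c=a+c=X$; on $I_-$ the containment $c<a-1$ forces $X=-a+c<-1$, hence $f(X)=-1$ and $af(X)+c=-a+c=X$; on the complement $1-a<c<a-1$ forces $|X|=|c|/(a-1)<1$ for $X=c/(1-a)$, hence $f(X)=X$ and $af(X)+c=aX+c=X$. In each case $Y_{stat}=f(X_{stat})$ is then the matching constant or linear value, so (\ref{Sationary state}) holds. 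The containments in Definition \ref{Definition1} are exactly the conditions that place each branch value in the region where $f$ has the required form.

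For the converse I would start from an arbitrary stationary state $X_{stat}$ and partition the ball by the size of its values: set $I_+:=\{x:X_{stat}(x)>1\}$, $I_-:=\{x:X_{stat}(x)<-1\}$, and leave $\{x:|X_{stat}(x)|\le1\}$ as the complement. On $I_+$ we have $f(X_{stat})=1$, so (\ref{Sationary state}) collapses to $X_{stat}=a+c$ and in turn $a+c>1$, i.e. $c>1-a$; symmetrically on $I_-$ one gets $X_{stat}=-a+c$ with $c<a-1$; on the complement $f(X_{stat})=X_{stat}$ gives $(1-a)X_{stat}=c$, i.e. $X_{stat}=c/(1-a)$ with $|c|\le a-1$. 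This exhibits $X_{stat}$ in the form (\ref{Stationary_Sol_3}) and simultaneously verifies the inclusions required of $I_+$, $I_-$, and the complement. No regularity on $I_\pm$ is needed: since $\mathbb{Z}_p$ is totally disconnected, these sets may be arbitrary, which is precisely why the family (\ref{Stationary_Sol_3}) is so large.

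The step I expect to be delicate is the boundary set $\{x:|X_{stat}(x)|=1\}$, equivalently $\{c=\pm(a-1)\}$, where two of the three branches coincide (at $c=a-1$ the middle and left roots both equal $-1$; at $c=1-a$ the middle and right roots both equal $1$). There the strict inequalities in Definition \ref{Definition1} are borderline, so the assignment of such an $x$ to $I_+$, $I_-$, or the complement is not forced by the value alone. The resolution is that the competing branch formulas take the same value at the coincidence, so the representation (\ref{Stationary_Sol_3}) still reproduces $X_{stat}(x)$ once $x$ is placed in an admissible set; one either checks directly that the two formulas agree, or relaxes the strict inequalities to non-strict ones on this boundary set, which changes no values. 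Handling this bookkeeping cleanly, rather than the core branch analysis, is where the care is required.
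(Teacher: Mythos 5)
Your pointwise reduction is the right and essentially only route here: the paper itself states this theorem without proof, quoting it from [Theorem 1, Zambrano-Zuniga-2], so there is no internal argument to compare against, but the expected proof is exactly your scalar analysis of $X=af(X)+c$ with $c=(B\ast U)(x)+Z(x)$, and your three-branch computation of $\Phi(X)=X-af(X)$ and both implications are correct. The forward direction is complete as written, and the converse partition $I_+=\{X_{stat}>1\}$, $I_-=\{X_{stat}<-1\}$ with the pointwise collapse of (\ref{Sationary state}) on each piece is sound.

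One correction to the boundary discussion you rightly flag as delicate: of your two proposed resolutions, only the second (relaxing the strict inequalities in Definition \ref{Definition1}) actually works; the first does not. At a point with $c=a-1$ where $X_{stat}=-1$ (the coincident middle/left root), the strict inequalities make \emph{neither} the complement nor $I_-$ admissible, so the fact that the two coinciding formulas agree is irrelevant — the only admissible assignment is $x\in I_+$, whose formula gives $a+c=2a-1\neq -1$. Symmetrically, at $c=1-a$ with $X_{stat}=1$, the only admissible set is $I_-$, giving $1-2a\neq 1$. So under a literal reading of Definition \ref{Definition1}, a stationary state taking the coincident-root value on the set $\{x:(B\ast U)(x)+Z(x)=\pm(a-1)\}$ is not of the form (\ref{Stationary_Sol_3}) at all, and the converse as stated has a genuine (if minor) defect there; your relaxation of the strict inequalities to non-strict ones on $I_\pm$ repairs it without changing any values, and your proof is complete once you commit to that fix rather than presenting it as one of two interchangeable options.
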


\begin{remark}
Notice that%
\[
Y_{stat}(x;I_{+},I_{-}):=f\left(  X_{stat}(x;I_{+},I_{-})\right)  =\left\{
\begin{array}
[c]{lll}%
1 & \text{if} & x\in I_{+}\\
-1 & \text{if} & x\in I_{-}\\
\frac{(B\ast U)(x)+Z(x)}{1-a} & \text{if} & x\in\mathbb{Z}_{p}\setminus\left(
I_{+}\cup I_{-}\right)  .
\end{array}
\right.
\]
The function $Y_{stat}(x;I_{+},I_{-})$ is the output of the network. If
$I_{+}\cup I_{-}=\mathbb{Z}_{p}$, we say that $X_{stat}(x;I_{+},I_{-})$ is
bistable. The set $\mathcal{B}\left(  I_{+},I_{-}\right)  =\mathbb{Z}%
_{p}\setminus\left(  I_{+}\cup I_{-}\right)  $ measures how far $X_{stat}%
(x;I_{+},I_{-})$ is from being bistable. We call set $\mathcal{B}\left(
I_{+},I_{-}\right)  $ the set of bistability of $X_{stat}(x;I_{+},I_{-})$. If
$\mathcal{B}\left(  I_{+},I_{-}\right)  =\varnothing$, then $X_{stat}%
(x;I_{+},I_{-})$ is bistable.
\end{remark}

\begin{remark}
If $I_{+}\cup I_{-}\subsetneqq\mathbb{Z}_{p}$, we say that $X_{stat}%
(x;I_{+},I_{-})$ is an unstable.
\end{remark}

\subsection{Hierarchical structure of the space of stationary states}

A relation $\preccurlyeq$ is \textit{a partial order} on a set $S$ if it
satisfies: 1 (reflexivity) $f\preccurlyeq f$ for all $f$ in $S$; 2
(antisymmetry) $f\preccurlyeq g$ and $g\preccurlyeq f$ implies $f=g$; 3
(transitivity) $f\preccurlyeq g$ and $g\preccurlyeq h$ implies $f\preccurlyeq
h$. \ A \textit{partially ordered set} $\left(  S,\preccurlyeq\right)  $ (or
poset) is a set endowed with a partial order. A partially ordered set $\left(
S,\preccurlyeq\right)  $ is called a \textit{lattice} if for every $f$, $g$ in
$S$, the elements $f\wedge g=\inf\{f,g\}$ and $f\vee$ $g=\sup\{f,g\}$ exist.
Here, $f\wedge g$ denotes the smallest element in $S$ satisfying $f\wedge
g\preccurlyeq f$ and $f\wedge g\preccurlyeq g$; while $f\vee$ $g$ \ denotes
the largest element in $S$ satisfying $f\preccurlyeq$ $f\vee$ $g$ and
$g\preccurlyeq f\vee$ $g$. We say that $h\in S$ a \textit{minimal} element of
with respect to $\preccurlyeq$, if there is no element $f\in S$, $f\neq h$
such that $f\preccurlyeq h$.

Posets offer a natural way to formalize the notion of hierarchy.

We set
\[
\mathcal{M}=\bigcup_{I_{+},I_{-}}\left\{  X_{stat}(x;I_{+},I_{-})\right\}  ,
\]
where $I_{+},I_{-}$ run trough all the sets given in Definition
\ref{Definition1}. \ Given $X_{stat}(x;I_{+},I_{-})$ and $X_{stat}%
(x;I_{+}^{\prime},I_{-}^{\prime})$ in $\mathcal{M}$, with $I_{+}\cup I_{-}%
\neq\mathbb{Z}_{p}$ or $I_{+}^{\prime}\cup I_{-}^{\prime}\neq\mathbb{Z}_{p}$,
we define%
\begin{equation}
X_{stat}(x;I_{+}^{\prime},I_{-}^{\prime})\preccurlyeq X_{stat}(x;I_{+}%
,I_{-})\text{ if }I_{+}\cup I_{-}\subseteq I_{+}^{\prime}\cup I_{-}^{\prime}.
\label{Definitioon_Order}%
\end{equation}
In the case $I_{+}\cup I_{-}=\mathbb{Z}_{p}$ and $I_{+}^{\prime}\cup
I_{-}^{\prime}=\mathbb{Z}_{p}$, the corresponding stationary states
$X_{stat}(x;I_{+},I_{-})$, $X_{stat}(x;I_{+},I_{-})$ are not comparable. In
\cite{Zambrano-Zuniga-2}, the authors show that (\ref{Definitioon_Order})
defines a partial order in $\mathcal{M}$.

\begin{theorem}
\cite[Theorem 2]{Zambrano-Zuniga-2}\label{Theorem2} $\left(  \mathcal{M}%
,\preccurlyeq\right)  $ is a lattice. Furthermore, the set of minimal elements
of $\left(  \mathcal{M},\preccurlyeq\right)  $ agrees with the set of bistable
states of $CNN(a,B,U,Z)$.
\end{theorem}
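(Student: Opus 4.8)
The plan is to recode each stationary state by a single subset of $\mathbb{Z}_{p}$ and to recognize $\preccurlyeq$ as ordinary set inclusion, after which the lattice operations become union and intersection. Write $g(x)=(B\ast U)(x)+Z(x)$, and for a state $X_{stat}(x;I_{+},I_{-})$ let $\mathcal{B}(I_{+},I_{-})=\mathbb{Z}_{p}\setminus(I_{+}\cup I_{-})$ be its set of bistability. By Definition \ref{Definition1} the unassigned points all lie in the middle region $M=\{x\in\mathbb{Z}_{p}:1-a<g(x)<a-1\}$, so $\mathcal{B}(I_{+},I_{-})\subseteq M$ always, while the forced points $F_{+}=\{g\geq a-1\}$ and $F_{-}=\{g\leq 1-a\}$ are necessarily assigned to $I_{+}$ and $I_{-}$ respectively. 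In this language the relation (\ref{Definitioon_Order}) reads: $X_{stat}(x;I_{+}',I_{-}')\preccurlyeq X_{stat}(x;I_{+},I_{-})$ exactly when $\mathcal{B}(I_{+}',I_{-}')\subseteq\mathcal{B}(I_{+},I_{-})$. Thus $\preccurlyeq$ is inclusion of bistability sets, and the bistable states are precisely those with $\mathcal{B}=\varnothing$.

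First I would check that every subset $T\subseteq M$ is realized as some $\mathcal{B}(I_{+},I_{-})$: declare $T$ the unassigned set, assign the forced points their mandatory sign, and assign each point of $M\setminus T$ to $I_{+}$; Theorem \ref{Theorem1} then certifies that the resulting function lies in $\mathcal{M}$. Hence $X\mapsto\mathcal{B}(X)$ maps $\mathcal{M}$ onto $\mathcal{P}(M)$ and carries $\preccurlyeq$ to inclusion. Since $(\mathcal{P}(M),\subseteq)$ is a Boolean lattice with meet $\cap$ and join $\cup$, the meet and join of two states $X,X'$ are the states with bistability sets $\mathcal{B}(X)\cap\mathcal{B}(X')$ and $\mathcal{B}(X)\cup\mathcal{B}(X')$; the universal properties are immediate from the set identities, since any lower bound $Y$ obeys $\mathcal{B}(Y)\subseteq\mathcal{B}(X)\cap\mathcal{B}(X')$ and any upper bound $Y$ obeys $\mathcal{B}(Y)\supseteq\mathcal{B}(X)\cup\mathcal{B}(X')$.

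Next I would identify the minimal elements. If $\mathcal{B}(I_{+},I_{-})\neq\varnothing$, choose $x_{0}\in\mathcal{B}(I_{+},I_{-})\subseteq M$; because $1-a<g(x_{0})<a-1$, adjoining $x_{0}$ to $I_{+}$ yields an admissible pair by Definition \ref{Definition1}, with bistability set $\mathcal{B}(I_{+},I_{-})\setminus\{x_{0}\}\subsetneq\mathcal{B}(I_{+},I_{-})$, so the new state is strictly below the original; hence no state with nonempty bistability set is minimal. Conversely, a bistable state has $\mathcal{B}=\varnothing$, and nothing lies strictly below it, so it is minimal. Therefore the minimal elements are exactly the bistable states.

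The main obstacle is that the internal $+/-$ labelling inside $M$ is invisible to (\ref{Definitioon_Order}): a priori two admissible sign choices produce different functions (\ref{Stationary_Sol_3}) sharing the same bistability set, so $X\mapsto\mathcal{B}(X)$ is only known to be surjective onto $\mathcal{P}(M)$. To obtain single-valued meet and join I must promote it to a bijection. Here I would invoke that (\ref{Definitioon_Order}) is already established to be a partial order on $\mathcal{M}$: if two states had equal bistability set they would be mutually $\preccurlyeq$-related, so antisymmetry forces them to be the same element of $\mathcal{M}$. Thus the admissible label of each middle point is in fact determined, the realization used above is canonical, and $X\mapsto\mathcal{B}(X)$ is a poset isomorphism $\mathcal{M}\simeq(\mathcal{P}(M),\subseteq)$. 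The lattice structure follows immediately, and since $\varnothing$ is the unique least element, the bistable state is the unique minimal element, consistent with the clause in (\ref{Definitioon_Order}) declaring two bistable states incomparable.
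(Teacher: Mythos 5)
First, a point of reference: this survey states Theorem \ref{Theorem2} with a citation to \cite[Theorem 2]{Zambrano-Zuniga-2} and contains no internal proof, so your proposal has to stand on its own merits. Several pieces of it do stand: the decomposition of $\mathbb{Z}_{p}$ into the forced sets $F_{\pm}$ and the middle region $M=\{x;\,1-a<g(x)<a-1\}$ with $g=(B\ast U)+Z$; the observation that $\mathcal{B}(I_{+},I_{-})\subseteq M$ and that the defining clause of (\ref{Definitioon_Order}) amounts to inclusion of bistability sets; the surjectivity of $X\mapsto\mathcal{B}(X)$ onto $\mathcal{P}(M)$, certified by Theorem \ref{Theorem1}; and both halves of your minimality argument (adjoining an unassigned $x_{0}\in\mathcal{B}$ to $I_{+}$ yields an admissible state strictly below the original, while nothing non-bistable can sit below a bistable state). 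That last argument is essentially the right proof of the second assertion of the theorem.

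The genuine gap is the step promoting $X\mapsto\mathcal{B}(X)$ to a bijection, and it cannot be repaired the way you attempt. The map is not injective: fix an admissible pair with some $x_{0}\in M\cap I_{+}$ and move $x_{0}$ to $I_{-}$; this is again admissible under Definition \ref{Definition1}, leaves $I_{+}\cup I_{-}$ and hence $\mathcal{B}$ unchanged, but by (\ref{Stationary_Sol_3}) changes the value at $x_{0}$ from $a+g(x_{0})$ to $-a+g(x_{0})$, two numbers differing by $2a>2$. So two provably distinct elements of $\mathcal{M}$ share a bistability set, and invoking the asserted antisymmetry of $\preccurlyeq$ to ``force them to be the same element'' is circular: a quoted property of a relation cannot identify two functions that are demonstrably different; at most it tells you that the order in \cite{Zambrano-Zuniga-2} must track more data than the union $I_{+}\cup I_{-}$, i.e., that your reading of (\ref{Definitioon_Order}) as bare inclusion cannot be the whole story. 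The failure is most blatant at the bottom of the poset: \emph{every} bistable state has $\mathcal{B}=\varnothing$ (there is one for each assignment of signs to the points of $M$), and the definition explicitly declares distinct bistable states incomparable, so the fibre over $\varnothing$ is a large antichain --- exactly why the theorem speaks of the \emph{set} of minimal elements. Your conclusions that $\varnothing$ is ``the unique least element'' and that ``the bistable state is the unique minimal element'' therefore contradict the very statement you are proving. This also breaks your lattice construction precisely where it is needed: anything $\preccurlyeq$ a bistable state must itself have empty bistability set, so two distinct bistable states have no common lower bound under the stated order, and ``the state with bistability set $\mathcal{B}(X)\cap\mathcal{B}(X^{\prime})$'' is neither well defined (which labelling of $M$?) nor, in that case, a lower bound. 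The lattice property cannot be pulled back from $\left(\mathcal{P}(M),\subseteq\right)$; meets and joins must be constructed directly from the pairs $(I_{+},I_{-})$, confronting the labelling degeneracy rather than defining it away.
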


\subsection{A new class of edge detectors}

In \cite[Theorem 2]{Zambrano-Zuniga-2}, the authors implemented a numerical
method for solving the initial value problem attached to network
$CNN(a,B,U,Z)$, with $X(x,0)=0$ and $U(x)$ a grayscale image. The simulations
show that after a sufficiently large time the network outputs a
black-and-white image approximating the edges of the original image $U(x)$.
This means that for $t$ sufficiently large $X(x,t)$ is close to a bistable
stationary state $X_{stat}(x;I_{+},I_{-})$. Furthermore, after a certain
sufficiently large time, the output of the network do not show a difference
perceivable by the human eye. See Figures \ref{Figure 9}-\ref{Figure 10}. The
performance of this edge detector is comparable to the Canny detector, and
other well-known detectors. But most importantly, we can explain, reasonably
well, how the network detects the edges of an image.

An intuitive picture of the dynamics of the network is as follows. For $t$
sufficiently large, the network performs transitions between stationary states
$X_{stat}(x;I_{+},I_{-})$ belonging to a small neighborhood $\mathcal{N}$
around a bistable state $X_{stat}^{\left(  0\right)  }(x;I_{+},I_{-})$, with
$I_{+}\cup I_{-}=\mathbb{Z}_{p}$. The dynamics of the network consists of
transitions in a hierarchically organized landscape\textit{ }$\left(
\mathcal{M},\preccurlyeq\right)  $ toward some minimal state. This is a
reformulation of the classical paradigm asserting that the dynamics of a large
class of complex systems can be modeled as a random walk on its energy landscape.%

\begin{figure}
[h]
\begin{center}
\includegraphics[width=0.75\textwidth]%
{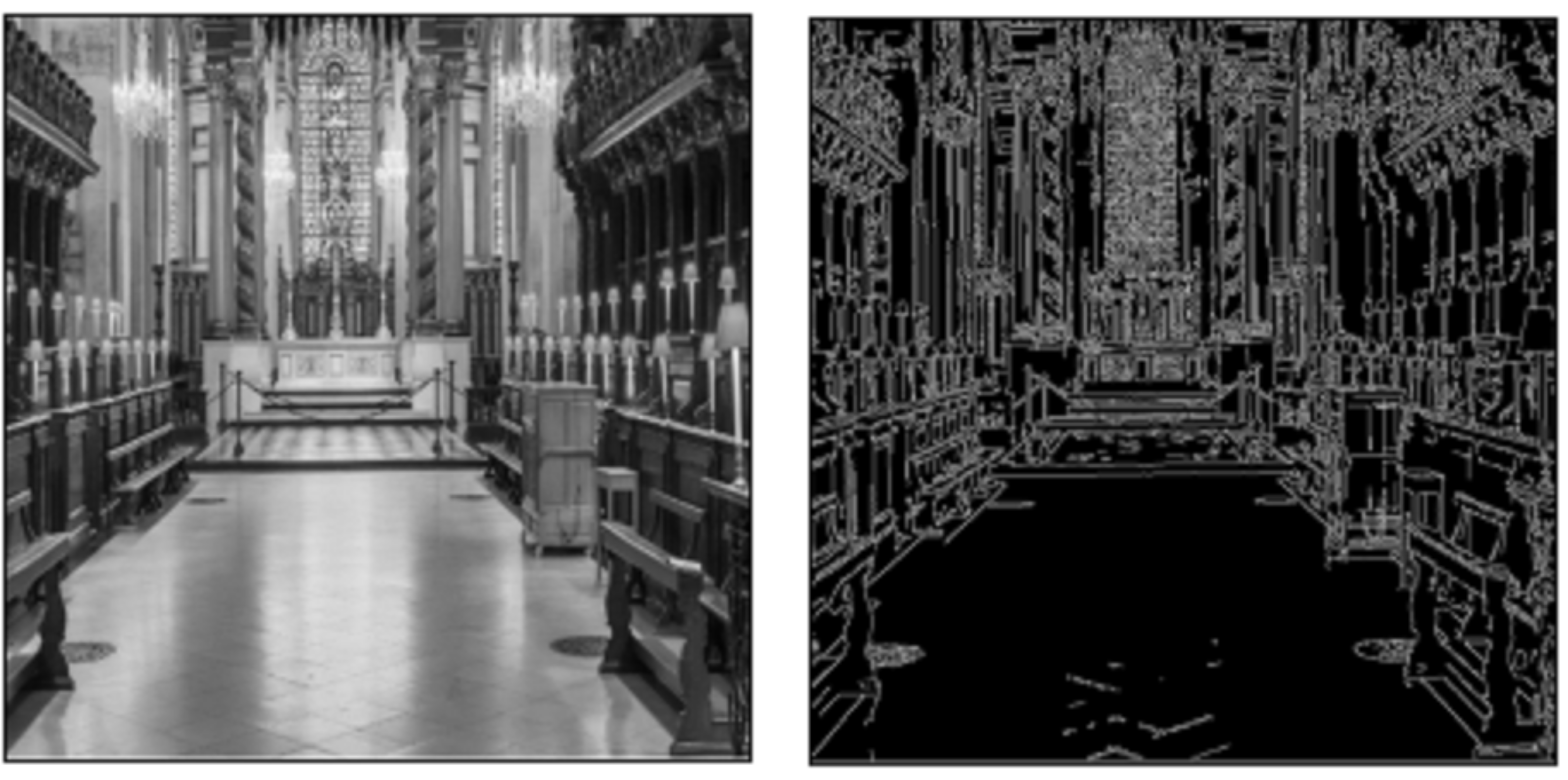}%
\caption{Left side, the original image. Right side, edges obtained by using a
Canny edge detector. Taken from \cite{Zambrano-Zuniga-2}.}%
\label{Figure 9}%
\end{center}
\end{figure}

\bigskip%
\begin{figure}
[h]
\begin{center}
\includegraphics[width=0.75\textwidth]%
{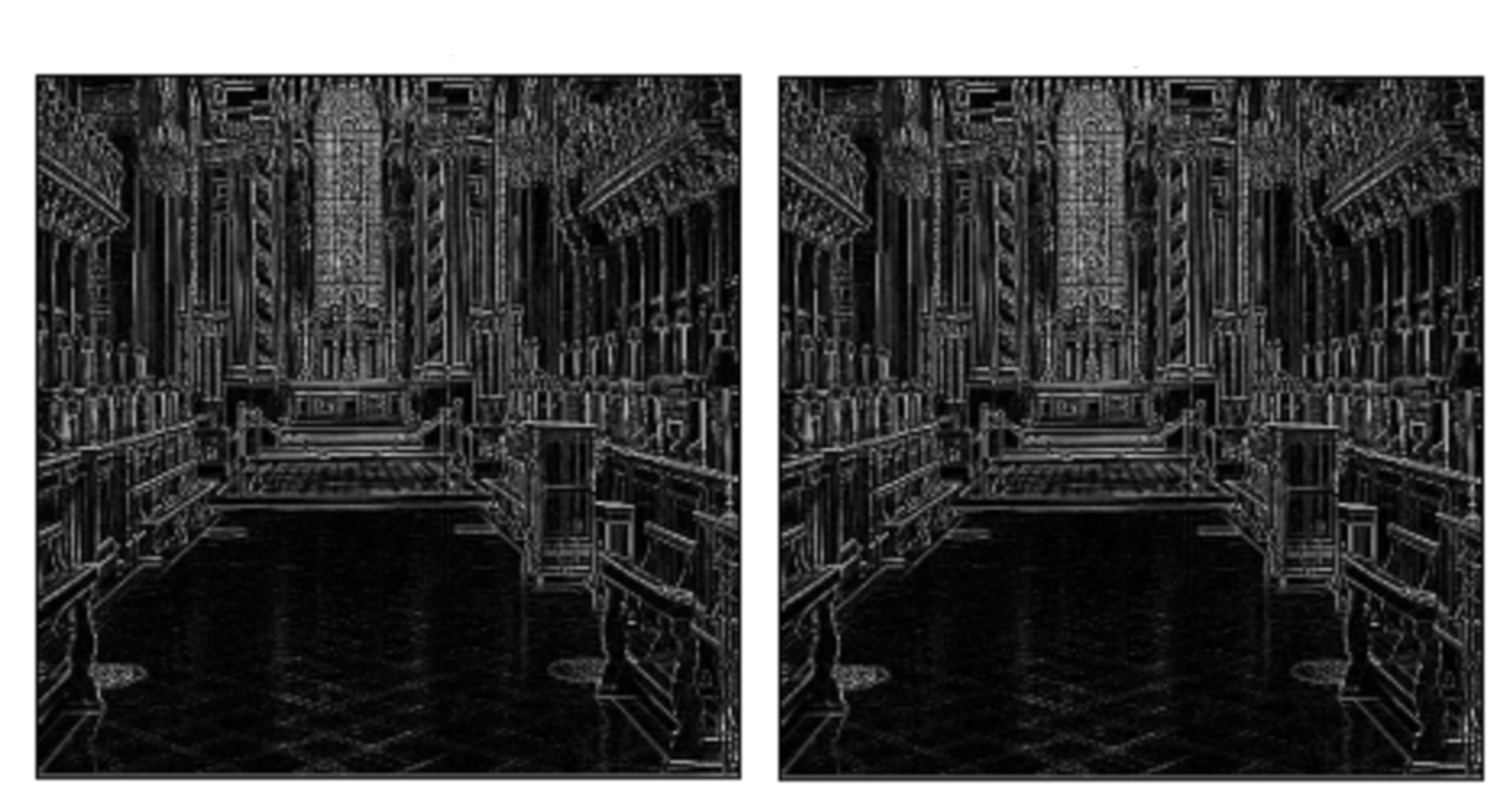}%
\caption{On the left side, edges were obtained by using a discretization of
the CNN (\ref{CNN_1_A}), with $z_{0}=-1$ and $6$ steps. On the right side,
edges were obtained by using a discretization of the CNN (\ref{CNN_1_A}), with
$z_{0}=-1$ and $10$ steps. See \cite{Zambrano-Zuniga-2} for further details.}%
\label{Figure 10}%
\end{center}
\end{figure}

\section{Applications of $p$-adic CNNs in image processing: edge detection:
denoising}

In this section, we review the filtering techniques introduced in
\cite{Zambrano-Zuniga-2} for grayscale images polluted with Gaussian noise.

\subsection{The $p$-adic heat equation}

The field of $p$-adic numbers $\mathbb{Q}_{p}$ is the quotient field of the
ring $\mathbb{Z}_{p}$:%
\[
\mathbb{Q}_{p}=\left\{  \frac{a}{b};a,b\in\mathbb{Z}_{p}\text{, with }%
b\neq0\right\}  .
\]
Any $p-$adic number $x\neq0$ has a unique expansion of the form
\begin{equation}
x=p^{ord_{p}(x)}\sum_{j=0}^{\infty}x_{j}p^{j}, \label{expansion}%
\end{equation}
where $x_{j}\in\{0,1,2,\dots,p-1\}$, $x_{0}\neq0$, and $ord_{p}(x)=ord(x)\in
\mathbb{Z}$. It follows from (\ref{expansion}), that any $x\in\mathbb{Q}%
_{p}\smallsetminus\left\{  0\right\}  $ can be represented uniquely as
$x=p^{ord(x)}u\left(  x\right)  $ and $\left\vert x\right\vert _{p}%
=p^{-ord(x)}$.

We denote by $\mathcal{D}\left(  \mathbb{Q}_{p}\right)  $ the $\mathbb{R}%
$-vector space\ of test functions defined on $\mathbb{Q}_{p}$, $L^{2}%
(\mathbb{Q}_{p})$ the $\mathbb{R}$-vector space\ of square integrable
functions defined on $\mathbb{Q}_{p}$, and by $C\left(  \mathbb{Q}_{p}\right)
$ the $\mathbb{R}$-vector space\ of continuous functions defined on
$\mathbb{Q}_{p}$.

For $\alpha>0$, the Vladimirov-Taibleson operator $\boldsymbol{D}^{\alpha}$ is
defined as
\[%
\begin{array}
[c]{ccc}%
\mathcal{D}(\mathbb{Q}_{p}) & \rightarrow & L^{2}(\mathbb{Q}_{p})\cap C\left(
\mathbb{Q}_{p}\right) \\
&  & \\
\varphi & \rightarrow & \boldsymbol{D}^{\alpha}\varphi,
\end{array}
\]
where%
\[
\left(  \boldsymbol{D}^{\alpha}\varphi\right)  \left(  x\right)
=\frac{1-p^{\alpha}}{1-p^{-\alpha-1}}%
{\displaystyle\int\limits_{\mathbb{Q}_{p}}}
\frac{\left[  \varphi\left(  x-y\right)  -\varphi\left(  x\right)  \right]
}{\left\vert y\right\vert _{p}^{\alpha+1}}dy.
\]
The $p$-adic analogue of the heat equation is%
\[
\frac{\partial u\left(  x,t\right)  }{\partial t}+a\boldsymbol{D}^{\alpha
}u\left(  x,t\right)  =0\text{, with }a>0\text{.}%
\]
The solution of the Cauchy problem attached to the heat equation with initial
datum $u\left(  x,0\right)  =\varphi\left(  x\right)  \in\mathcal{D}%
(\mathbb{Q}_{p})$ is given by%
\[
u\left(  x,t\right)  =%
{\displaystyle\int\limits_{\mathbb{Q}_{p}}}
Z\left(  x-y,t\right)  \varphi\left(  y\right)  dy,
\]
where $Z\left(  x,t\right)  $ is the $p$\textit{-adic heat kernel} defined as
\begin{equation}
Z\left(  x,t\right)  =%
{\displaystyle\int\limits_{\mathbb{Q}_{p}}}
\chi_{p}\left(  -x\xi\right)  e^{-at\left\vert \xi\right\vert _{p}^{\alpha}%
}d\xi, \label{het-kernel}%
\end{equation}
where $\chi_{p}\left(  -x\xi\right)  $\ is the standard additive character of
the group $\left(  \mathbb{Q}_{p},+\right)  $. The $p$-adic heat kernel is the
transition density function of a Markov stochastic process with space state
$\mathbb{Q}_{p}$, see, e.g., \cite{Kochubei}-\cite{Zuniga-LNM-2016}.

\subsection{The $p$-adic heat equation on the unit ball}

We define the operator $\boldsymbol{D}_{0}^{\alpha}$, $\alpha>0$, by
restricting $\boldsymbol{D}^{\alpha}$ to $\mathcal{D}(\mathbb{Z}_{p})$ and
considering $\left(  \boldsymbol{D}^{\alpha}\varphi\right)  \left(  x\right)
$ only for $x\in\mathbb{Z}_{p}$. The operator $\boldsymbol{D}_{0}^{\alpha}%
$\ satisfies
\[
\boldsymbol{D}_{0}^{\alpha}\varphi(x)=\lambda\varphi(x)+\frac{1-p^{\alpha}%
}{1-p^{-\alpha-1}}%
{\displaystyle\int\limits_{\mathbb{Z}_{p}}}
\frac{\varphi(x-y)-\varphi(x)}{|y|_{p}^{\alpha+1}}dy,
\]
for $\mathbb{\varphi\in}\mathcal{D}(\mathbb{Z}_{p})$, with $\lambda=\frac
{p-1}{p^{\alpha+1}-1}p^{\alpha}$.

Consider the Cauchy problem%
\[
\left\{
\begin{array}
[c]{lll}%
\frac{\partial u\left(  x,t\right)  }{\partial t}+\boldsymbol{D}_{0}^{\alpha
}u\left(  x,t\right)  -\lambda u\left(  x,t\right)  =0\text{, } &
x\in\mathbb{Z}_{p}, & t>0;\\
&  & \\
u\left(  x,0\right)  =\varphi\left(  x\right)  , & x\in\mathbb{Z}_{p}, &
\end{array}
\right.
\]
where $\mathbb{\varphi\in}\mathcal{D}(\mathbb{Z}_{p})$. The solution of this
problem is given by%
\[
u\left(  x,t\right)  =%
{\displaystyle\int\limits_{\mathbb{Z}_{p}}}
Z_{0}(x-y,t)\varphi\left(  y\right)  dy\text{, }x\in\mathbb{Z}_{p}\text{,
}t>0,
\]
where
\begin{align*}
Z_{0}(x,t)  &  :=e^{\lambda t}Z(x,t)+c(t)\text{, }x\in\mathbb{Z}_{p}\text{,
}\\
c(t)  &  :=1-(1-p^{-1})e^{\lambda t}\sum_{n=0}^{\infty}\frac{(-1)^{n}}%
{n!}t^{n}\frac{1}{1-p^{-n\alpha-1}}%
\end{align*}
and $Z(x,t)$ is given (\ref{het-kernel}). The function $Z_{0}(x,t)$ is
non-negative for $x\in\mathbb{Z}_{p}$, $t>0$, and
\[%
{\displaystyle\int\limits_{\mathbb{Z}_{p}}}
Z_{0}(x,t)dx=1,
\]
\cite{Kochubei}. Furthermore, $Z_{0}(x,t)$ is the transition density function
of a Markov process with space state $\mathbb{Z}_{p}$.

The family
\begin{equation}%
\begin{array}
[c]{cccc}%
T_{t}: & L^{1}(\mathbb{Z}_{p}) & \rightarrow & L^{1}(\mathbb{Z}_{p})\\
&  &  & \\
& \phi(x) & \rightarrow & T_{t}\phi(x):=%
{\displaystyle\int\limits_{\mathbb{Z}_{p}}}
Z_{0}(x-y,t)\phi(y)dy,
\end{array}
\label{Khrennikov-Kochubei}%
\end{equation}
is a $C^{0}$-semigroup of contractions with generator $\boldsymbol{D}%
_{0}^{\alpha}-\lambda I$ on $L^{1}(\mathbb{Z}_{p})$, see \cite[Proposition 4,
Proposition 5]{Khrennikov-Kochubei}

\subsection{Reaction-diffusion CNNs}

Given $\mu\in\mathbb{R}$, $\alpha>0$, $A$, $B$,$U$, $Z\in\mathcal{C}%
(\mathbb{Z}_{p})$, a $p$-adic reaction-diffusion CNN, denoted as $CNN\left(
\mu,\alpha,A,B,U,Z\right)  $, is the dynamical system given by the following
integro-differential equation:%
\begin{align}
\frac{\partial X(x,t)}{\partial t}  &  =\mu X(x,t)+(\lambda I-\boldsymbol{D}%
_{0}^{\alpha})X(x,t)+%
{\displaystyle\int\limits_{\mathbb{Z}_{p}}}
A(x-y)f(X(y,t))dy\label{RD-CNN}\\
&  +%
{\displaystyle\int\limits_{\mathbb{Z}_{p}}}
B(x-y)U(y)dy+Z(x),\nonumber
\end{align}
where $x\in\mathbb{Z}_{p}$, $t\geq0$. We say that $X(x,t)\in$ $\mathbb{R}$ is
the state of cell $x$ at the time $t$. Function $A$ is the kernel of the
feedback operator, while function $B$ is the kernel of the feedforward
operator. Function $U$ is the input of the CNN, while function $Z$ is the
threshold of the CNN.

Notice that if $\mu=0$ and $A=B=U=Z=0$, (\ref{RD-CNN}) becomes the $p$-adic
heat equation in the unit ball. Then, in (\ref{RD-CNN}), $(\lambda
I-\boldsymbol{D}_{0}^{\alpha})$ is\ the diffusion term, while the other terms
are the reaction ones, which describe the interaction between $X(x,t)$,
$U(x)$, and $Z(x)$.

The following result establishes the existence of uniqueness of a mild
solution for the Cauchy problem associated with (\ref{RD-CNN}). This result is
sufficient to study the stability of these networks.

\begin{proposition}
\cite[Proposition 1, Lemma 3]{Zambrano-Zuniga-2}(i) Let $A$, $B$, $U$,
$Z\in\mathcal{C}(\mathbb{Z}_{p})$. Take $X_{0}\in L^{1}(\mathbb{Z}_{p})$ as
the initial datum for the Cauchy problem attached to (\ref{RD-CNN}). Then
there exists $\tau=\tau\left(  X_{0}\right)  \in\left(  0,\infty\right]  $ and
a unique $X(t)\in C([0,\tau],L^{1}(\mathbb{Z}_{p}))$ satisfying
\begin{equation}
\left\{
\begin{array}
[c]{l}%
X(t)=e^{\mu t}T_{t}X_{0}+\int_{0}^{t}e^{\mu(t-s)}T_{t-s}\boldsymbol{H}%
(X(s))ds\\
X(0)=X_{0}.
\end{array}
\right.  \label{Eq:mild solution}%
\end{equation}

(ii) Let $A$, $B$, $U$, $Z\in\mathcal{C}(\mathbb{Z}_{p})$. Take $X_{0}%
\in\mathcal{C}(\mathbb{Z}_{p})$. Then, the integral equation
(\ref{Eq:mild solution}) has unique solution $C([0,\infty),C(\mathbb{Z}_{p}))$.
\end{proposition}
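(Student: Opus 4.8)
The plan is to read (\ref{Eq:mild solution}) as the variation-of-constants (Duhamel) formula for a semilinear evolution equation whose linear part generates the $C_{0}$-semigroup $\left\{e^{\mu t}T_{t}\right\}_{t\geq 0}$ and whose nonlinearity is the operator $\boldsymbol{H}(X)=A\ast f(X)+B\ast U+Z$, and to produce the mild solution as a fixed point via the Banach contraction principle. Concretely, fix $\tau>0$ and, on the Banach space $C([0,\tau],\mathcal{X})$ with $\mathcal{X}=L^{1}(\mathbb{Z}_{p})$ for part (i) and $\mathcal{X}=C(\mathbb{Z}_{p})$ for part (ii), I would define
\[
(\Phi X)(t)=e^{\mu t}T_{t}X_{0}+\int_{0}^{t}e^{\mu(t-s)}T_{t-s}\boldsymbol{H}(X(s))\,ds,
\]
so that a fixed point of $\Phi$ is exactly a mild solution. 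The first task is to check that $\Phi$ maps $C([0,\tau],\mathcal{X})$ into itself: the strong continuity of $T_{t}$ handles the first term, and continuity in $t$ of the Duhamel integral follows from strong continuity of the semigroup together with the uniform boundedness of $\boldsymbol{H}(X(s))$ on $[0,\tau]$.

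The crucial estimate is that $\boldsymbol{H}$ is globally Lipschitz on $\mathcal{X}$. Only the term $A\ast f(X)$ depends on $X$, and since $f$ is globally Lipschitz with constant $L$ and $A\in C(\mathbb{Z}_{p})\subset L^{1}(\mathbb{Z}_{p})$, Young's inequality for convolution on the compact group $\mathbb{Z}_{p}$ gives
\[
\Vert\boldsymbol{H}(X)-\boldsymbol{H}(X')\Vert_{\mathcal{X}}=\Vert A\ast(f(X)-f(X'))\Vert_{\mathcal{X}}\leq\Vert A\Vert_{1}\,\Vert f(X)-f(X')\Vert_{\mathcal{X}}\leq L\Vert A\Vert_{1}\,\Vert X-X'\Vert_{\mathcal{X}},
\]
valid both for $\mathcal{X}=L^{1}$ and for $\mathcal{X}=C$ with the sup norm (using $\Vert A\ast g\Vert_{\infty}\leq\Vert A\Vert_{1}\Vert g\Vert_{\infty}$). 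Well-definedness on each space uses the same convolution bounds: $f(X)$ is bounded so $A\ast f(X)\in L^{1}$, and when $X$ is continuous the convolution of the $L^{1}$ kernel $A$ with the bounded continuous function $f(X)$ is continuous, while $B\ast U$ and $Z$ are continuous by hypothesis.

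With these two ingredients the contraction estimate is routine. Using that $T_{t}$ is a contraction on $\mathcal{X}$ (on $L^{1}$ this is the cited property of the semigroup (\ref{Khrennikov-Kochubei}); on $C(\mathbb{Z}_{p})$ it follows from $Z_{0}(\cdot,t)\geq 0$ and $\int_{\mathbb{Z}_{p}}Z_{0}(x,t)\,dx=1$, so that $\Vert T_{t}\phi\Vert_{\infty}\leq\Vert\phi\Vert_{\infty}$), one gets
\[
\Vert(\Phi X)(t)-(\Phi X')(t)\Vert_{\mathcal{X}}\leq L\Vert A\Vert_{1}\int_{0}^{t}e^{\mu(t-s)}\Vert X(s)-X'(s)\Vert_{\mathcal{X}}\,ds,
\]
whence $\Phi$ has Lipschitz constant at most $L\Vert A\Vert_{1}\,e^{\mu^{+}\tau}\tau$ on $[0,\tau]$, with $\mu^{+}=\max(\mu,0)$. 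This is $<1$ for $\tau$ small, giving a unique local mild solution by Banach's theorem, which is the assertion of part (i) with $\tau=\tau(X_{0})$; because the Lipschitz bound is \emph{uniform}, one may iterate the construction on successive intervals of fixed length and extend the solution to all of $[0,\infty)$, or equivalently run the fixed point directly on $[0,T]$ for arbitrary $T$ in the Bielecki-weighted norm $\Vert X\Vert_{k}=\sup_{t\in[0,T]}e^{-kt}\Vert X(t)\Vert_{\mathcal{X}}$ with $k$ large, which makes $\Phi$ a global contraction in one stroke; this yields part (ii).

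The two cases differ only in the ambient space, and the same scheme applies verbatim once the semigroup and $\boldsymbol{H}$ are known to act on the chosen space. Accordingly, the main point requiring care is precisely this invariance: confirming that the heat semigroup $T_{t}$ leaves $C(\mathbb{Z}_{p})$ invariant and acts there as a contraction (the convolution-with-$Z_{0}$ argument above), and, for $\mu>0$, keeping track of the $e^{\mu t}$ factor so that the globally Lipschitz—rather than superlinear—nonlinearity precludes finite-time blow-up and delivers a genuinely global solution. Everything else is a standard fixed-point computation.
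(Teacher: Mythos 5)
Your proposal is correct and follows essentially the same route as the paper, which gives no proof of its own here but cites \cite[Proposition 1, Lemma 3]{Zambrano-Zuniga-2}, where the mild solution is obtained by exactly this Duhamel/Banach fixed-point scheme using the contraction semigroup $T_{t}$ of (\ref{Khrennikov-Kochubei}) together with the global Lipschitz bound $\Vert\boldsymbol{H}(X)-\boldsymbol{H}(X')\Vert_{\mathcal{X}}\leq L\Vert A\Vert_{1}\Vert X-X'\Vert_{\mathcal{X}}$. The only points worth flagging are minor: you should verify (or cite) the strong continuity of $T_{t}$ on $C(\mathbb{Z}_{p})$, a routine approximate-identity property of the kernel $Z_{0}(x,t)$, and your observation that $\boldsymbol{H}$ is globally Lipschitz on $L^{1}(\mathbb{Z}_{p})$ as well actually strengthens part (i) by showing one may take $\tau=\infty$, which is consistent with the stated $\tau\in\left(0,\infty\right]$.
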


\begin{theorem}
\cite[Theorem 3]{Zambrano-Zuniga-2}Let $X(t)\in C([0,\infty),\mathcal{C}%
(\mathbb{Z}_{p}))$ be the unique solution of (\ref{Eq:mild solution}), with
initial condition $X_{0}\in C(\mathbb{Z}_{p})$.Then,
\begin{equation}
\Vert X(t)\Vert_{\infty}\leq e^{\mu t}\Vert X_{0}\Vert_{\infty}+\frac{(e^{\mu
t}-1)}{\mu}\left(  \Vert A\Vert_{1}\Vert f\Vert_{\infty}+\Vert B\Vert_{1}\Vert
U\Vert_{\infty}+\Vert Z\Vert_{\infty}\right)  , \label{Eq-1A}%
\end{equation}
if $\mu\neq0$, otherwise%
\begin{equation}
\Vert X(t)\Vert_{\infty}\leq\Vert X_{0}\Vert_{\infty}+\tau\left(  \Vert
A\Vert_{1}\Vert f\Vert_{\infty}+\Vert B\Vert_{1}\Vert U\Vert_{\infty}+\Vert
Z\Vert_{\infty}\right)  . \label{Eq-1B}%
\end{equation}

\end{theorem}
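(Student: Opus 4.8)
The plan is to take the supremum norm directly in the mild-solution identity (\ref{Eq:mild solution}) and estimate the two resulting terms; because the activation $f$ is bounded, no Gr\"{o}nwall argument will be needed. The one preliminary fact I must establish is that the semigroup $\left\{ T_{t}\right\} _{t\geq0}$ from (\ref{Khrennikov-Kochubei}), stated to be contractive on $L^{1}(\mathbb{Z}_{p})$, is also contractive in the supremum norm. This follows at once from the probabilistic nature of the heat kernel recorded earlier: since $Z_{0}(x,t)\geq0$ and $\int_{\mathbb{Z}_{p}}Z_{0}(x,t)\,dx=1$, for every $\phi\in C(\mathbb{Z}_{p})$ one has
\[
\left\vert T_{t}\phi(x)\right\vert \leq\int_{\mathbb{Z}_{p}}Z_{0}(x-y,t)\left\vert \phi(y)\right\vert \,dy\leq\left\Vert \phi\right\Vert _{\infty}\int_{\mathbb{Z}_{p}}Z_{0}(x-y,t)\,dy=\left\Vert \phi\right\Vert _{\infty},
\]
so that $\left\Vert T_{t}\phi\right\Vert _{\infty}\leq\left\Vert \phi\right\Vert _{\infty}$ for all $t\geq0$.

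Next I would record a uniform bound on the reaction operator $\boldsymbol{H}$, which collects the last three terms on the right-hand side of (\ref{RD-CNN}). Writing $M:=\left\Vert A\right\Vert _{1}\left\Vert f\right\Vert _{\infty}+\left\Vert B\right\Vert _{1}\left\Vert U\right\Vert _{\infty}+\left\Vert Z\right\Vert _{\infty}$, the translation invariance of the Haar measure gives, for any $s\geq0$ and any state $X(s)$,
\[
\left\vert \boldsymbol{H}(X(s))(x)\right\vert \leq\left\Vert f\right\Vert _{\infty}\left\Vert A\right\Vert _{1}+\left\Vert U\right\Vert _{\infty}\left\Vert B\right\Vert _{1}+\left\Vert Z\right\Vert _{\infty}=M,
\]
whence $\left\Vert \boldsymbol{H}(X(s))\right\Vert _{\infty}\leq M$ uniformly in $s$. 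The key feature here is that the boundedness of the sigmoidal $f$ makes this estimate independent of $X(s)$, which is precisely what lets us avoid any fixed-point or comparison argument inside the integral.

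Combining the two facts with (\ref{Eq:mild solution}) and the triangle inequality for the Bochner integral, I would obtain
\begin{align*}
\left\Vert X(t)\right\Vert _{\infty} & \leq e^{\mu t}\left\Vert T_{t}X_{0}\right\Vert _{\infty}+\int_{0}^{t}e^{\mu(t-s)}\left\Vert T_{t-s}\boldsymbol{H}(X(s))\right\Vert _{\infty}\,ds\\
& \leq e^{\mu t}\left\Vert X_{0}\right\Vert _{\infty}+M\int_{0}^{t}e^{\mu(t-s)}\,ds.
\end{align*}
The argument then reduces to the elementary integral $\int_{0}^{t}e^{\mu(t-s)}\,ds$. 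The substitution $u=t-s$ yields $\frac{e^{\mu t}-1}{\mu}$ when $\mu\neq0$, which gives (\ref{Eq-1A}); when $\mu=0$ the integral equals $t\leq\tau$, which gives (\ref{Eq-1B}).

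I do not expect a genuine obstacle. The only step that is not a direct estimate is the sup-norm contractivity of $T_{t}$, and even that follows immediately from the non-negativity and unit mass of the heat kernel $Z_{0}$ already established in the excerpt. Everything else is the triangle inequality, the uniform bound coming from the boundedness of $f$, and a one-line integral computation.
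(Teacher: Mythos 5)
Your proof is correct and follows essentially the same route as the paper's: apply the sup-norm to the Duhamel formula (\ref{Eq:mild solution}), use that $T_{t}$ is a sup-norm contraction (non-negativity and unit mass of $Z_{0}$), bound the reaction term uniformly by $\Vert A\Vert_{1}\Vert f\Vert_{\infty}+\Vert B\Vert_{1}\Vert U\Vert_{\infty}+\Vert Z\Vert_{\infty}$ thanks to the boundedness of $f$, and evaluate $\int_{0}^{t}e^{\mu(t-s)}ds$ in the two cases $\mu\neq0$ and $\mu=0$. This is the same estimate pattern the paper itself uses in its proof of Theorem \ref{Theorem_3}, so there is nothing to correct.
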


\subsection{Denoising}

In this section, we review the denoising technique based on reaction-diffusion
CNNs introduced in \cite{Zambrano-Zuniga-2}. We first consider the initial
value problem%
\begin{equation}
\left\{
\begin{array}
[c]{lll}%
\frac{\partial X(x,t)}{\partial t}+\boldsymbol{D}_{0}^{1}X(x,t)-\lambda
X(x,t)=0, & x\in\mathbb{Z}_{p}, & t>0\\
&  & \\
X(x,0)=X_{0}(x), & x\in\mathbb{Z}_{p}, &
\end{array}
\right.  \label{Eq: Vladimirov}%
\end{equation}
where $X_{0}(x)\in$ $[0,1]$ is a grayscale image codified as a test function
supported in the unit ball $\mathbb{Z}_{p}$. The algorithm for this coding is
discussed in \cite{Zambrano-Zuniga-2}. The output image $X(x,t)$ is similar to
the one produced by the classical Gaussian filter. See Figure \ref{Figure 11}.%

\begin{figure}
[h]
\begin{center}
\includegraphics[width=0.75\textwidth]%
{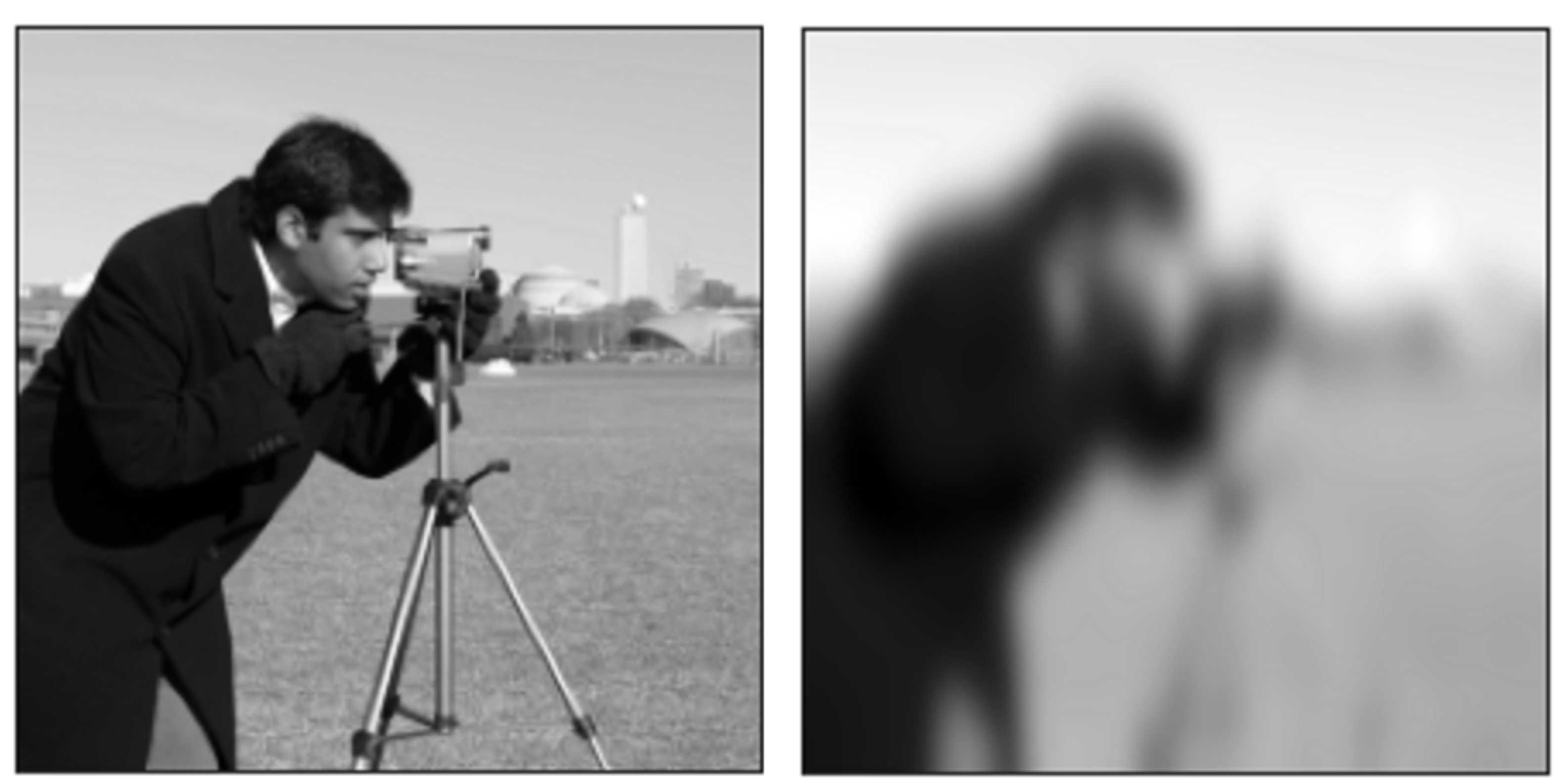}%
\caption{On the left side, the original image $X(x,0)$. On the right side
$X(x,3)$. Taken from \cite{Zambrano-Zuniga-2}.}%
\label{Figure 11}%
\end{center}
\end{figure}

In \cite{Zambrano-Zuniga-2} was proposed the following reaction-diffusion CNN
for denoising grayscale images polluted with normal additive
noise:{\footnotesize
\begin{equation}
\frac{\partial X(x,t)}{\partial t}=3X(x,t)+(\lambda I-\boldsymbol{D}%
_{0}^{\alpha})X(x,t)+3B\ast\left[  X_{0}(x)-f(X(x,t))\right]  , \label{CNN_5}%
\end{equation}
} where $\alpha=0.75$, $f(x)=0.5(|x+1|-|x-1|)$, $B(x)=(\Omega(p^{2}%
|x|_{p})-\Omega(|x|_{p}))$, and $-1\leq X_{0}(x)\leq1$. Notice that we are
using the interval $\left[  -1,1\right]  $ as a grayscale scale. This equation
was found experimentally. Natively, the reaction term $3X(x,t)+3B\ast\left[
X_{0}(x)-f(X(x,t))\right]  $ gives an estimation of the edges of the image,
while the diffusion term $(\lambda I-\boldsymbol{D}_{0}^{\alpha})X(x,t)$
produces a smoothed version of the image.

The processing of an image $X_{0}(x)$ using (\ref{CNN_5}) requires solving the
corresponding Cauchy problem with initial datum $X(x,0)=X_{0}(x)$. For an
in-depth discussion on the numerical and computational techniques required,
the reader may consult \cite{Zambrano-Zuniga-2}. See Figure \ref{Figure 12}.%

\begin{figure}
[h]
\begin{center}
\includegraphics[width=0.75\textwidth]%
{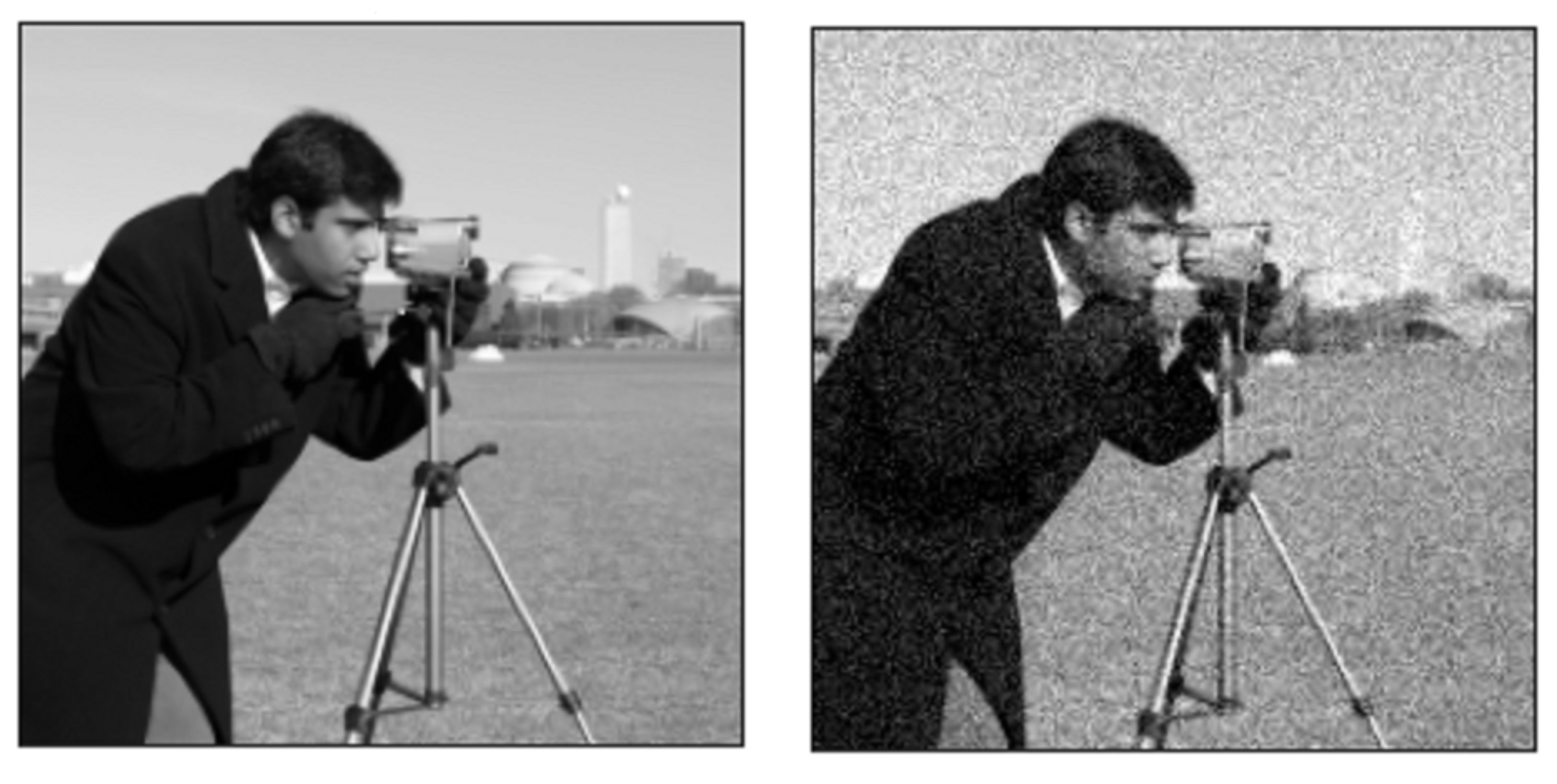}%
\caption{Left side, the original image. Right side, the image plus Gaussian
noise, mean zero and variance $0.05$. Taken from \cite{Zambrano-Zuniga-2}.}%
\label{Figure 12}%
\end{center}
\end{figure}
\begin{figure}
[h]
\begin{center}
\includegraphics[width=0.75\textwidth]%
{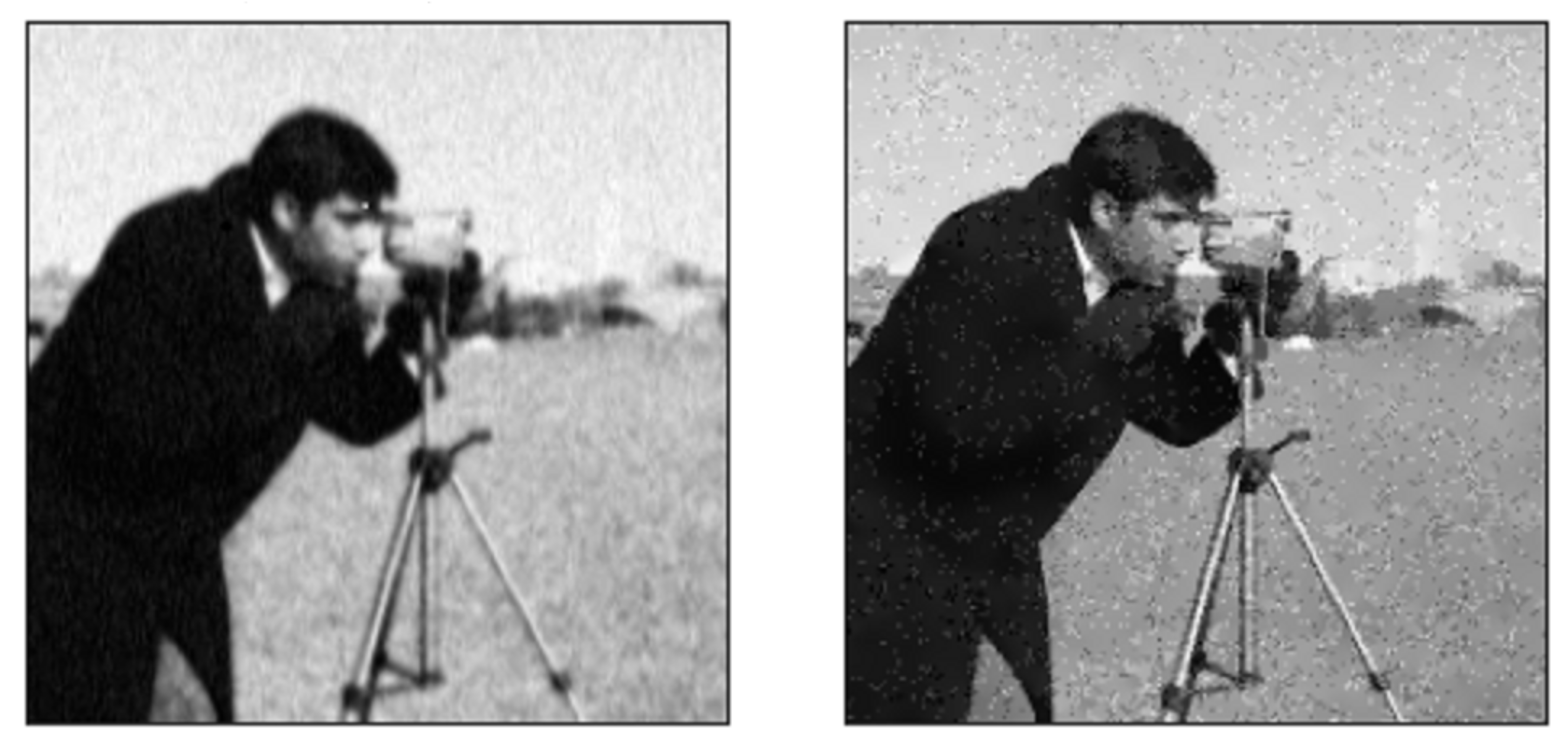}%
\caption{Left side, filtered image using Equation \ref{CNN_5}. Right side,
filtered image obtained by using Perona-Malik equation with $\lambda=0.04$,
$\delta_{t}=0.075$, and \ $t=100$ iterations, and $g_{1}(s)$, see
\cite{B-L-Mathm-image}. Taken from \cite{Zambrano-Zuniga-2}.}%
\label{Figure 13}%
\end{center}
\end{figure}

\section{$p$-Adic Wilson-Cowan Models and Connection Matrices}

In \cite{Zuniga-Entropy}, the Wilson-Cowan models were formulated on Abelian,
locally compact topological groups, see model \ref{Model_3}. The classical
model corresponds to the group $(\mathbb{R},+)$. Using classical techniques on
semilinear evolution equations, see e.g. \cite{Wu}-\cite{Milan}, it was showed
that the corresponding Cauchy problem is locally well-posed, and if
$r_{E}=r_{I}=0$, it is globally well-posed, see \cite[Theorem 1]%
{Zuniga-Entropy}. This last condition corresponds to the case of two coupled perceptrons.

We already discussed in Section \ref{Section_Neuwwons_geometry} that the
compatibility of the Wilson-Cowan\ model with the small-world network property
requires a non-negligible interaction between any two groups of neurons, in
turn, this fact requires that the neurons be organized in compact group. Now,
$\left(  \mathbb{R}^{N},+\right)  $ does not have non-trivial compact
subgroups. Indeed, if $x_{0}\neq0$, then $\left\langle x_{0}\right\rangle
=\left\{  nx_{0};n\in\mathbb{Z}\right\}  $ is a non-compact subgroup of
$\left(  \mathbb{R}^{N},+\right)  $, because $\left\{  \left\vert n\right\vert
;n\in\mathbb{Z}\right\}  $ is not bounded. This last assertion is equivalent
to the Archimedean axiom of the real numbers. In conclusion, the compatibility
between the Wilson-Cowan model and the small-world property requires changing
$\left(  \mathbb{R},+\right)  $ to a compact Abelian group. In
\cite{Zuniga-Entropy}, the authors selected the field of the $p$-adic numbers.
This field has infinitely many compact subgroups, the balls with center at the
origin. They selected the unit ball, the ring of $p$-adic numbers
$\mathbb{Z}_{p}$.

The $p$-adic Wilson-Cowan model admits good discretizations. Each
discretization corresponds to a system of non-linear integrodifferential
equations on a finite rooted tree. They show that the solution of the Cauchy
problem of this discrete system provides a good approximation to the solution
of the Cauchy problem of the $p$-adic Wilson-Cowan model, see \cite[Theorem
2]{Zuniga-Entropy}. They provide extensive numerical simulations of $p$-adic
Wilson-Cowan models that show that the $p$-adic models provide a similar
explanation to the numerical experiments presented in \cite{Wilson-Cowan-2}.
They also show that the connection matrices can be incorporated into the
$p$-adic Wilson-Cowan model.

\section{Some open problems}

\subsection{Traveling waves for neural networks}

The existence and uniqueness of traveling waves for model \ref{Model_0} have
been the focus of great interest; see, e.g., \cite{Ermentrout et
al1}-\cite{Ermentrout et al 2}, and the references therein. A such solution
has the form $u(x,t)=v(x-ct)$, where $c$ is the speed of the wave. In the
$p$-adic framework, studying such solutions is an open problem.
Time-independent traveling waves (called bumps) are localized solutions. This
type of solution has also been studied intensively; see, e.g.,
\cite{Kishimoto-Amari}-\cite{Oleynik et al}, and the references therein. The
study of this type of solution is also an open problem in the $p$-adic framework.

\subsection{Fuzzy CNNs}

In \cite{Yang et al}-\cite{Yang et al 2}, the fuzzy cellular neural networks
(FCNNs) were introduced as an extension of the CNNs. The FCNNs use fuzzy
operators in the synaptic law that combine the low-level information
processing capability of the CNNs with the high-level information processing
capability, such as image understanding, of fuzzy systems. The FCNNs has been
used in many different applications, for instance, in edge detection for
noised images \cite{Yang et al 2}-\cite{Doan et al}, segmentation of medical
images \cite{Shitong et al}-\cite{Shitong et al 2}, and image encryption, see
\cite{Ratnavelu et al}-\cite{Mani et al}, among several applications.

We propose the following formulation for the $p$-adic version of the FCNNs:%
\begin{align*}
\frac{\partial X(x,t)}{\partial t}  &  =-aX(x,t)+A\left(  x\right)  \ast
f((X(x,t))+B\left(  x\right)  \ast U(x)+Z(x)\\
&  +\bigvee_{z}\left(  A_{max}\left(  z\right)  \ast f(X(z,t))\Omega
(p^{l(x)}|z-x|_{p})\right) \\
&  +\bigvee_{z}\left(  B_{max}\left(  z\right)  \ast U(z))\Omega
(p^{l(x)}|z-x|_{p})\right) \\
&  +\bigwedge_{z}\left(  A_{min}\left(  z\right)  \ast f(X(z,t))\Omega
(p^{l(x)}|z-x|_{p})\right) \\
&  +\bigwedge_{z}\left(  B_{min}\left(  z\right)  \ast U(z))\Omega
(p^{l(x)}|z-x|_{p})\right)  ,
\end{align*}
where $X(x,t)$ and $f(X(x,t))$ are the state and output of cell $x\in
\mathbb{Z}_{p}$ at time $t$, respectively; $f:\mathbb{R}\rightarrow\mathbb{R}$
is a bounded Lipschitz function satisfying $f(0)=0$; $U(x)$ and $Z(x)$ are the
input and threshold of cell $x\in\mathbb{Z}_{p}$; $A,B\in\mathcal{C}%
(\mathbb{Z}_{p})$ are the feedback and feedforward kernels. Finally, $A_{max}%
$, $B_{max}$, $A_{min}$, and $B_{min}$ are kernels for the local fuzzy
operators $\bigvee_{z}$ and $\bigwedge_{z}$. We introduce a continuous
function $l:\mathbb{Z}_{p}\rightarrow\mathbb{N}$ that defines the locality of
the fuzzy operators. The Fuzzy operators are defined as%
\begin{align*}
\bigvee_{z}\phi(z)\Omega(p^{l(x)}|z-x|_{p})  &  :=\max\{\phi(z);\;z\in
x+p^{l(x)}\mathbb{Z}_{p}\},\\
\bigwedge_{z}\phi(z)\Omega(p^{l(x)}|z-x|_{p})  &  :=\min\{\phi(z);\;z\in
x+p^{l(x)}\mathbb{Z}_{p}\}.
\end{align*}
There are several open problems related with $p$-adic FCNNs, we just mention
two. First, to study the stability and the stationary states of the of
$p$-adic FCNNs. Second, to develop image processsing algorithms based on
$p$-adic FCNNs.

\subsection{Chaos in CNNs with delay}

The numerical experiments presented in Section \ref{Section_CNNS with delay}
show that the responses of the $p$-adic CNNs\ with delay have a chaotic
behavior. A relevant problem is investigating the different dynamical
behaviors, including limit cycles, chaos, etc., for different ranges of
parameters of the network.

\subsection{Development of visual computing techniques based on $p$-adic CNNs}

In \cite{Zambrano-Zuniga-1}-\cite{Zambrano-Zuniga-2}, the first two authors
showed that $p$-adic CNNs can be used to implement edge detection and
denoising algorithms. But, it is necessary to investigate the use of $p$-adic
CNNs in general visual computing tasks, see, e.g., \cite{Chua-Tamas},
\cite{Itoh et al}.

\bigskip

\end{document}